\definecolor{darkblue}{rgb}{0.0,0.0,0.65}
\definecolor{darkred}{rgb}{0.65,0.0,0.0}
\newcommand{\high}[1]{\textcolor{darkblue}{\bf #1}}
\newcommand{\rebuttal}[1]{#1}
\definecolor{codegreen}{rgb}{0,0.6,0}
\definecolor{codegray}{rgb}{0.5,0.5,0.5}
\definecolor{codepurple}{rgb}{0.58,0,0.82}
\definecolor{backcolour}{rgb}{0.95,0.95,0.92}
\tiny\color{codegray},
\newtheorem{theorem}{Theorem}
\newtheorem{prop}[theorem]{Proposition}
\newtheorem{lemma}[theorem]{Lemma}
\newtheorem{corollary}[theorem]{Corollary}
\newtheorem{definition}[theorem]{Definition}
\theoremstyle{definition}
\newtheorem{remark}[theorem]{Remark}
\newcommand{\RR}{\mathbb{R}}
\newcommand{\norm}[1]{\left\lVert #1 \right\rVert}
\DeclareMathOperator*{\argmin}{argmin}
\newcommand{\sgn}{\mathrm{sign}}
\newcommand{\1}{{\rm 1}\kern-0.24em{\rm I}}
\DeclareMathOperator*{\argmax}{argmax}
\newcommand{\breg}[3]{D_{#1}\left(#2,#3\right)}
\newcommand{\brg}[2]{D_{\psi}\left(#1,#2\right)}
 \newcommand{\brgl}[2]{D_{L}\left(#1,#2\right)}
\newcommand{\st}{\mathrm{s.t.}}
\newcommand{\R}{\mathbb{R}}
\newcommand{\inp}[2]{\left\langle#1,~ #2 \right\rangle}
\newcommand{\mir}{\psi}
\newcommand{\reg}[1]{u^{\sf \tiny r}_{#1}}
\newcommand{\mmd}[1]{u^{\sf \tiny m}_{#1}}
\newcommand{\mar}[1]{\hat{\gamma}_{#1}}
\newcommand{\algname}{$p$-{\small \sf GD}\xspace}
\newcommand{\pmval}[2]{#1 \textcolor{gray}{$\pm$  #2}}
\newcommand{\bpmval}[2]{\textbf{#1} \textcolor{gray}{$\pm$  #2}}
\title{Mirror Descent Maximizes Generalized Margin \\ and Can Be Implemented Efficiently}
\author{
    Haoyuan Sun$^1$, Kwangjun Ahn$^1$, Christos Thrampoulidis$^2${\normalfont , and} Navid Azizan$^1$\\
    \texttt{haoyuans@mit.edu, kjahn@mit.edu, cthrampo@ece.ubc.ca, azizan@mit.edu}\\
    $^1$~Massachusetts Institute of Technology \\
    $^2$~University of British Columbia
}
\begin{document}

\maketitle

\begin{abstract}

    Driven by the empirical success and wide use of deep neural networks, understanding the generalization performance of overparameterized models has become an increasingly popular question. To this end, there has been substantial effort to characterize the implicit bias of the optimization algorithms used, such as gradient descent (GD), and the structural properties of their preferred solutions. This paper answers an open question in this literature: For the classification setting, what solution does mirror descent (MD) converge to? 
    Specifically, motivated by its efficient implementation, we consider the family of mirror descent algorithms with  potential function chosen as the $p$-th power of the $\ell_p$-norm, which is an important generalization of GD. We call this algorithm \algname{}. For this family, we characterize the solutions it obtains and show that it converges in direction to a \emph{generalized maximum-margin} solution with respect to the $\ell_p$-norm for linearly separable classification. 
    While the MD update rule is in general expensive to compute and perhaps not suitable for deep learning, \algname{} is fully parallelizable in the same manner as SGD and can be used to train deep neural networks with virtually \emph{no additional computational overhead}.
    Using comprehensive experiments with both linear and deep neural network models, we demonstrate that \algname{} can noticeably affect the structure and the generalization performance of the learned models.

\end{abstract}

\section{Introduction}

Overparameterized deep neural networks have enjoyed a tremendous amount of success in a wide range of machine learning applications~\citep{schrittwieser2020mastering,ramesh2021zero, brown2020language, dosovitskiy2020image}.
However, as these highly expressive models have the capacity to have multiple solutions that interpolate training data, and not all these solutions perform well on test data, it is important to characterize which of these interpolating solutions the optimization algorithms converge to.
Such characterization is important as it helps understand the generalization performance of these models, which is one of the most fundamental questions in machine learning.

Notably, it has been observed that even in the absence of any explicit regularization, the interpolating solutions obtained by the standard gradient-based optimization algorithms, such as (stochastic) gradient descent, tend to generalize well. Recent research has highlighted that such algorithms favor particular types of solutions, i.e., they \emph{implicitly regularize} the learned models.
\rebuttal{Importantly, such implicit biases are shown to play a crucial role in determining generalization performance, 
e.g., \citep{neyshabur2014search, zhang2021understanding, wilson2017marginal, donhauser2022fast}.}

In the literature, the implicit bias of first-order methods is first studied in linear settings since the analysis is more tractable, and also, there have been several theoretical and empirical evidence that certain insights from linear models translate to deep learning, e.g. \citep{jacot2018neural,allen2019convergence, belkin2019reconciling,lyu2019gradient,bartlett2017spectrally,nakkiran2021deep}. In the linear setting, it is easier to establish implicit bias for regression tasks, where square loss is typically used and it attains its minimum at a finite value. For example, the implicit bias of gradient descent (GD) for square loss goes back to \cite{engl1996regularization}. Beyond GD, analysis of other popular algorithms such as the family of mirror descent (MD), which is an important generalization of GD
, is more involved and was established only recently by \citep{gunasekar2018characterizing,azizan2018stochastic}. Specifically, those works showed that mirror descent converges to the interpolating solution that is closest to the initialization in terms of a Bregman divergence. 
Thus, the implicit bias in linear regression is relatively well-understood by now.

On the other hand, {\bf in the classification setting, the implicit bias analysis becomes significantly more challenging, and several questions remain open} despite significant progress in the past few years. A key differentiating factor in the classification setting is that the loss function does not attain its minimum at a finite value, and the weights have to grow to infinity.
It has been shown that for the logistics loss, the gradient descent iterates converge to the $\ell_2$-maximum margin SVM solution in direction~\citep{soudry2018implicit, ji2019implicit}.
However, such characterizations for mirror descent are missing in the literature.
Because it is possible for optimization algorithms to exhibit implicit bias in regression but not in classification (and vice versa) \citep{gunasekar2018characterizing}, resolving this gap of knowledge warrants careful analysis.
See Table~\ref{table:main} for a summary.

In this paper, we advance the understanding of the implicit regularization of mirror descent in the classification setting.
In particular, inspired by their practicality, we focus on mirror descents with potential function $\mir(\cdot) = \frac{1}{p}\norm{\cdot}_p^p$ for $p > 1$.  
More specifically, such choice of potential results in an update rule that \textit{can be applied coordinate-wise}, in the sense that updating the value at one coordinate does not depend on the values at other coordinates.
Thanks to this property, this subclass of mirror descent can be implemented with \textit{no additional computational overhead}, making it much more practical than other algorithms in the literature; see Remark~\ref{rmk:sep} for more details.
\begin{table}[t]
\centering
\renewcommand{\arraystretch}{1.5}
\setlength\tabcolsep{10pt}
\begin{tabular}{ |c |c|c| }
\hline  & Regression & Classification  \\
\hline\hline  
\multirow{4}{*}{\begin{tabular}{c}Gradient Descent\\($\mir(\cdot) = \frac{1}{2}\norm{\cdot}_2^2$)\end{tabular}}  & $\argmin_w \norm{w-w_0}_2$ & $\argmin_w \norm{w}_2$  \\
 & $\st~~w \text{ fits all data} $ & $\st~~w \text{ classifies all data} $    \\
 & 
\multirow{2}{*}{\citep[Thm 6.1]{engl1996regularization}} & \cite{soudry2018implicit}   \\
 & & \cite{ji2019implicit} \\
   \hline
 \multirow{4}{*}{\begin{tabular}{c}Mirror Descent\\(e.g. $\mir(\cdot) = \frac{1}{p}\norm{\cdot}_p^p$)\end{tabular}}  & $\argmin_w \norm{w-w_0}_p$ & $\argmin_w \norm{w}_p$  \\
 & $\st~~w \text{ fits all data} $ & $\st~~w \text{ classifies all data} $    \\
 & 
\cite{gunasekar2018characterizing} & \multirow{2}{*}{ \large \high{This work}}  \\
& \cite{azizan2018stochastic} & \\
  \hline 
\end{tabular}
\caption{{\bf Conceptual summary of our results.} In the case of linear regression, the implicit regularization results are complete; it is shown that mirror descent converges to the interpolating solution that is closest to the initialization. 
However, such characterization in the classification setting is missing in the literature and this is precisely the goal of this work. 
In particular, motivated by its practical application, we consider the 
potential function $\mir(\cdot) = \frac{1}{p}\norm{\cdot}_p^p$ and extend the result of the gradient descent to such mirror descents. }
\label{table:main}
\end{table}


\paragraph{Our contributions.}
In this paper, we make the following contributions:
\begin{list}{{\tiny $\blacksquare$}}{\leftmargin=1.5em}
  \setlength{\itemsep}{-0.75pt}
    \item We study mirror descent with potential $\frac{1}{p}\norm{\cdot}_p^p$ for $p > 1$, which will call \textit{$p$-norm GD}, and abbreviated as \algname, as a practical and efficient generalization of the popular gradient descent.
    \item We show that for separable linear classification with logistics loss, \algname exhibits implicit regularization by converging in direction to a ``generalized'' maximum-margin solution with respect to the $\ell_p$ norm.
    More generally, we show that, for monotonically decreasing loss functions, \algname follows the so-called regularization path, which is defined in Section~\ref{sec:priliminaries}.
    \item We investigate the implications of our theoretical findings with two sets of experiments:
    Our experiments involving linear models corroborate our theoretical results, and real-world experiments with deep neural networks and popular datasets suggest that our findings carry over to such nonlinear settings.  \rebuttal{Our deep learning experiments further show that \algname with different $p$ lead to significantly different generalization performance.}
\end{list}


\paragraph{Additional related work.}
We remark that recent works also attempt to accelerate the convergence of gradient descent to its implicit regularization, either by using an aggressive step size schedule \citep{nacson2019convergence, ji2021characterizing} or with momentum \citep{ji2021fast}. 
Further, there have been several results for other optimization methods, including steepest descent, AdaBoost, and various adaptive methods such as RMSProp and Adam~\citep{telgarsky2013margins,gunasekar2018characterizing,rosset2004boosting,wang2021implicit,min2022one}. 
A mirror-descent-based algorithm for explicit regularization was recently proposed by \cite{azizan2021beyond}. 
\rebuttal{
Comparatively, there has been very little progress on mirror descent in the classification setting. \cite{li2021implicit} consider a mirror descent, but their assumptions are not applicable beyond the $\ell_2$ geometry.\footnote{To be precise, they assume that the Bregman divergence is lower and upper bounded by a constant factor of the squared Euclidean distance, e.g., as in the case of a squared Mahalanobis distance.} To the best of our knowledge, there is no result for more general mirror descent algorithms in the classification setting.
}

\section{Background and Problem Setting}
\label{sec:priliminaries}

 We are interested in the well-known classification setting. Consider a collection of input-label pairs $\{(x_i, y_i)\}_{i=1}^n \subset \RR^d \times \{\pm 1\}$ and  a classifier $f_w(x)$, where $w \in \mathcal{W}$.
For some \textit{loss function} $\ell : \RR \times \{\pm 1\} \to \RR$, our goal is to minimize the empirical loss:
\[ L(w) = \frac{1}{n}\sum_{i=1}^n \ell(y_i \cdot f_w(x_i)).\]
Throughout the paper, we assume that the classification loss function $\ell$ is decreasing, convex and does not attain its minimum, as in most common loss functions in practice (e.g., logistics loss and exponential loss). Without loss of generality, we assume that $\inf \ell(\cdot) = 0$. 


For our theoretical analysis, we consider a linear model, where the models can be expressed by $f_w(x) = w^\top x$ and $w \in \RR^d$.
We also make the following assumptions about the data. 
\rebuttal{
First, since we are mainly interested in the over-parameterized setting where $d > n$, we assume that the data is linearly separable, i.e., 
there exists  $w^* \in \RR^d$ s.t. $\sgn(\inp{w^*}{x_i}) = y_i$ for all $i\in[n]$.
}
We also assume that the inputs $x_i$'s are bounded. More specifically, for our later purpose, we assume that for $p>0$, there exists some constant $C$ so that $\max_i \norm{x_i}_q < C$, where $1/q + 1/p = 1$. 


\paragraph{Preliminaries on mirror descent.}
The key component of mirror descent is a \textit{potential function}. 
In this work, we will focus on differentiable and strictly convex potentials defined on the entire domain $\R^n$.\footnote{In general, the mirror map is a convex function of Legendre type~(see, e.g., \citep[Section 26]{rockafellar1970convex}).} 
We call $\nabla \psi$ the corresponding \textit{mirror map}.
Given a potential, the natural notion of ``distance'' associated with the potential $\psi$ is given by the Bregman divergence.

\begin{definition}[Bregman divergence~\citep{bregman1967relaxation}]
For a mirror map  $\mir$, the Bregman divergence $\breg{\mir}{\cdot}{\cdot}$ associated to $\mir$ is defined as
\begin{align*}
    \breg{\mir}{x}{y}:= \mir(x)-\mir(y) -\inp{\nabla \mir(y)}{x-y},\qquad \forall x,y\in \R^n\,.
\end{align*}
\end{definition} 


An important case is the potential $\psi = \frac{\norm \cdot^2}{2}$, where $\norm \cdot$ denotes the Euclidean norm.
Then, the Bregman divergence becomes $D_\psi(x,y) = \frac{1}{2}\norm{x-y}^2$.
For more background on Bregman divergence and its properties, see, e.g., \citep[Section 2.2]{bauschke2017descent} and \citep{azizan2019stochastic}.
 
Mirror descent (MD) with respect to the mirror map $\psi$ is a generalization of gradient descent where we use Bregman divergence as a measure of distance:
\begin{align} \tag{\sf MD}\label{equ:md}
    w_{t+1} = \argmin_w \left\{\frac{1}{\eta}D_\psi(w, w_t) + \inp{\nabla L(w_t)}{w}\right\}
\end{align}


Equivalently, \ref{equ:md} can be written as  $\nabla\psi(w_{t+1}) = \nabla\psi(w_t) - \eta \nabla L(w_t)$. We refer readers to \cite[Figure 4.1]{bubeck2015convex} for a nice illustration of mirror descent.
Also, see \citep[Section 5.7]{juditsky2011first} for various examples of potentials depending on applications.


One property we will repeatedly use is the following~\citep{azizan2018stochastic}:

\begin{lemma}[\ref{equ:md} identity]
\label{thm:key-iden}
For any $w \in \RR^n$, the following identities hold for \ref{equ:md}:
\begin{subequations}
\begin{align}
    &D_\psi(w, w_t) = D_\psi(w, w_{t+1}) + D_{\psi - \eta L}(w_{t+1}, w_t) + \eta D_{L}(w, w_t) - \eta L(w) + \eta L(w_{t+1})\,, \label{equ:key-iden-1}\\
     &\quad = D_\psi(w, w_{t+1}) + D_{\psi - \eta L}(w_{t+1}, w_{t}) - \eta \inp{\nabla L(w_t)}{w - w_t} - \eta L(w_{t}) + \eta L(w_{t+1})\, \label{equ:key-iden-2}.
\end{align}
\end{subequations}
\end{lemma}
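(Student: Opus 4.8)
The plan is to prove both identities by a single direct algebraic computation, starting from the equivalent form of the update, $\nabla\psi(w_{t+1}) = \nabla\psi(w_t) - \eta \nabla L(w_t)$. No convexity of $\psi$ or $L$ is actually used — only differentiability — so the whole argument amounts to careful expansion and regrouping of Bregman divergences.

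First I would expand the difference $D_\psi(w,w_t) - D_\psi(w,w_{t+1})$ using the definition $D_\psi(x,y) = \psi(x) - \psi(y) - \inp{\nabla\psi(y)}{x-y}$. The two $\psi(w)$ terms cancel, leaving $\psi(w_{t+1}) - \psi(w_t) - \inp{\nabla\psi(w_t)}{w-w_t} + \inp{\nabla\psi(w_{t+1})}{w-w_{t+1}}$. Then I would substitute $\nabla\psi(w_{t+1}) = \nabla\psi(w_t) - \eta\nabla L(w_t)$ and regroup: the $\nabla\psi(w_t)$ contributions combine into $\inp{\nabla\psi(w_t)}{w_t - w_{t+1}}$, so that what remains of the $\psi$-part is exactly $D_\psi(w_{t+1}, w_t)$, leaving a single extra linear term $-\eta\inp{\nabla L(w_t)}{w-w_{t+1}}$.

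Next I would split off the $\eta L$ contribution. Using $D_{\psi - \eta L}(w_{t+1},w_t) = D_\psi(w_{t+1},w_t) - \eta D_L(w_{t+1},w_t)$ together with the definition $D_L(w_{t+1},w_t) = L(w_{t+1}) - L(w_t) - \inp{\nabla L(w_t)}{w_{t+1}-w_t}$, I replace $D_\psi(w_{t+1},w_t)$ accordingly; simultaneously I decompose $-\eta\inp{\nabla L(w_t)}{w-w_{t+1}} = -\eta\inp{\nabla L(w_t)}{w-w_t} + \eta\inp{\nabla L(w_t)}{w_{t+1}-w_t}$. The two copies of $\eta\inp{\nabla L(w_t)}{w_{t+1}-w_t}$ cancel, and collecting the remaining terms gives precisely \eqref{equ:key-iden-2}. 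To obtain \eqref{equ:key-iden-1}, I would invoke the definition of the Bregman divergence once more, now in the reverse direction, to rewrite $-\eta\inp{\nabla L(w_t)}{w-w_t} = \eta D_L(w,w_t) - \eta L(w) + \eta L(w_t)$; substituting this and cancelling the $\eta L(w_t)$ term yields \eqref{equ:key-iden-1}.

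The computation is elementary, so there is no genuine analytic obstacle; the only thing that requires care is the \emph{bookkeeping} — consistently collecting the $\nabla\psi(w_t)$, $\nabla L(w_t)$, and $L(w_t)$ terms after each substitution, and tracking which potential ($\psi$, $\psi-\eta L$, or $L$) each group of terms is being folded into. It is worth noting that the identity holds for every $w\in\R^n$ and makes no reference to the constraint set or the separability assumption, which is exactly why it can be reused as the workhorse step in the later convergence analysis.
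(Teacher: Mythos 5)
Your proposal is correct and follows essentially the same route as the paper's proof: expand the Bregman divergences by definition, substitute the update rule $\nabla\psi(w_{t+1}) = \nabla\psi(w_t) - \eta\nabla L(w_t)$, use additivity of the Bregman divergence in its potential to fold in $D_{\psi-\eta L}$, and pass from \eqref{equ:key-iden-2} to \eqref{equ:key-iden-1} via the definition of $D_L(w,w_t)$. Your observation that only differentiability (not convexity) is needed for the identity is a fair refinement of the paper's phrasing, but the argument is otherwise the same.
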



Using Lemma~\ref{thm:key-iden}, we make several new observations and prove the following useful statements.

\begin{lemma}
\label{thm:decreasing-lose}
For sufficiently small step size $\eta$ such that $\psi - \eta L$ is convex, the loss is monotonically decreasing after each iteration of \ref{equ:md}, i.e., $L(w_{t+1}) \le L(w_{t})$.
\end{lemma}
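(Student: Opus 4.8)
The plan is to substitute $w = w_t$ into the \ref{equ:md} identity of Lemma~\ref{thm:key-iden} and then exploit nonnegativity of the two Bregman divergences that appear.

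First I would take equation~\eqref{equ:key-iden-2} and plug in $w = w_t$. Then $D_\psi(w_t, w_t) = 0$ and $\inp{\nabla L(w_t)}{w_t - w_t} = 0$, so all the terms involving $w$ either vanish or become $L(w_t)$, and the identity collapses to
\[
0 = D_\psi(w_t, w_{t+1}) + D_{\psi - \eta L}(w_{t+1}, w_t) - \eta L(w_t) + \eta L(w_{t+1})\,,
\]
which rearranges to
\[
\eta\big(L(w_t) - L(w_{t+1})\big) = D_\psi(w_t, w_{t+1}) + D_{\psi - \eta L}(w_{t+1}, w_t)\,.
\]

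Next I would argue that the right-hand side is nonnegative. The first term $D_\psi(w_t, w_{t+1}) \ge 0$ since $\psi$ is (strictly) convex, so its Bregman divergence is always nonnegative. The second term $D_{\psi - \eta L}(w_{t+1}, w_t) \ge 0$ precisely because $\eta$ is small enough that $\psi - \eta L$ is convex, and the Bregman divergence of any convex function is nonnegative. Hence $\eta\big(L(w_t) - L(w_{t+1})\big) \ge 0$, and dividing by $\eta > 0$ yields $L(w_{t+1}) \le L(w_t)$, as claimed.

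There is no serious obstacle here: the only points requiring care are choosing the form~\eqref{equ:key-iden-2} of the identity (the one whose inner-product term vanishes at $w = w_t$, rather than~\eqref{equ:key-iden-1}) and making explicit that convexity of $\psi - \eta L$ is exactly the hypothesis that forces $D_{\psi - \eta L} \ge 0$. One could optionally note that such a small $\eta$ exists whenever $L$ has bounded curvature (e.g., is smooth), which holds for the logistic and exponential losses on bounded data, but this is not needed for the statement as phrased.
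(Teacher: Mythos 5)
Your proposal is correct and follows essentially the same route as the paper: substitute $w = w_t$ into the identity of Lemma~\ref{thm:key-iden}, observe that the resulting equation rearranges to $\eta\,(L(w_t) - L(w_{t+1})) = D_\psi(w_t, w_{t+1}) + D_{\psi - \eta L}(w_{t+1}, w_t)$, and conclude by nonnegativity of both Bregman divergences (the second using the convexity hypothesis on $\psi - \eta L$). The only cosmetic difference is that you explicitly single out form~\eqref{equ:key-iden-2}, whereas either form of the identity collapses to the same equation at $w = w_t$.
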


\begin{lemma}
\label{thm:to-infinity}
In a separable linear classification problem, if  $\eta$ is chosen sufficiently small s.t. $\psi - \eta L$ is convex, then we have $L(w_t) \to 0$ as $t \to \infty$. Hence, $\lim_{t\to \infty}\norm{w_t} = \infty$ for any norm $\norm{\cdot}$.
\end{lemma}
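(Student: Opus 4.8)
The plan is to split the statement into two parts: first show that $L(w_t)\to 0$, and then deduce $\norm{w_t}\to\infty$ from the fact that $\ell$ never attains its infimum. For the first part, note that by Lemma~\ref{thm:decreasing-lose} the sequence $L(w_t)$ is non-increasing, and it is bounded below by $0$ since $\inf\ell=0$ forces $\ell\ge 0$ and hence $L\ge 0$; therefore $L(w_t)$ converges to some limit $L^\star\ge 0$, and it remains only to show $L^\star=0$.

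To pin down $L^\star$, I would telescope the \ref{equ:md} identity of Lemma~\ref{thm:key-iden} against an arbitrary fixed comparator $w\in\RR^d$. Rearranging \eqref{equ:key-iden-1} gives
\begin{align*}
  D_\psi(w, w_t) - D_\psi(w, w_{t+1}) = D_{\psi - \eta L}(w_{t+1}, w_t) + \eta\, D_L(w, w_t) + \eta\bigl(L(w_{t+1}) - L(w)\bigr),
\end{align*}
in which the first term is nonnegative because $\eta$ is small enough that $\psi-\eta L$ is convex, and the second is nonnegative by convexity of $L$. Dropping those two terms, summing over $t=0,\dots,T-1$, using $D_\psi\ge 0$ (strict convexity of $\psi$), and using monotonicity of the loss to bound $L(w_s)\ge L(w_T)$ for $s\le T$, I expect to arrive at the clean estimate
\begin{align*}
  L(w_T) \;\le\; L(w) + \frac{D_\psi(w, w_0)}{\eta\, T}.
\end{align*}
Sending $T\to\infty$ with $w$ held fixed yields $L^\star\le L(w)$. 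Finally I would invoke separability: for a separating $w^*$ we have $\min_i y_i\inp{w^*}{x_i}>0$, so $L(\alpha w^*) = \tfrac1n\sum_i \ell\!\left(\alpha\, y_i\inp{w^*}{x_i}\right)\to 0$ as $\alpha\to\infty$ because $\ell$ is decreasing with $\inf\ell=0$; taking $w=\alpha w^*$ and $\alpha\to\infty$ forces $L^\star=0$.

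For the second part, since all norms on $\RR^d$ are equivalent it suffices to preclude boundedness of $\{w_t\}$ along a subsequence. If some subsequence $w_{t_k}$ were bounded, Bolzano--Weierstrass would extract a further subsequence converging to some $\bar w$, and continuity of $L$ (which holds since $\ell$ is finite and convex on all of $\RR$) would give $L(\bar w)=L^\star=0$; as $L\ge 0$ this would mean $\ell(y_i\inp{\bar w}{x_i})=0$ for every $i$, contradicting that $\ell$ does not attain its infimum. Hence $\norm{w_t}\to\infty$ in every norm.

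The delicate point, and the one I would be most careful about, is the order of limits in the argument for $L^\star=0$: the bound $D_\psi(w,w_0)/(\eta T)$ must be sent to zero \emph{first}, for each fixed comparator $w$, and only afterwards is $w$ driven to infinity along the separating ray to make $L(w)$ small --- one cannot do both at once, since $D_\psi(\alpha w^*, w_0)$ grows with $\alpha$. Everything else is bookkeeping with the signs of the Bregman terms plus the monotonicity already supplied by Lemma~\ref{thm:decreasing-lose}.
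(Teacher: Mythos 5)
Your proof is correct and rests on the same mechanism as the paper's: telescoping the identity of Lemma~\ref{thm:key-iden} against a comparator $w$, dropping the two nonnegative Bregman terms, and using separability to make $L(w)$ arbitrarily small along the ray $\alpha w^*$. The only differences are cosmetic --- you derive the direct bound $L(w_T) \le L(w) + D_\psi(w,w_0)/(\eta T)$ and take the two limits in the correct order, whereas the paper argues by contradiction with a fixed comparator satisfying $L(w)\le \varepsilon/2$ and shows $D_\psi(w,w_t)$ would decrease to $-\infty$; you also spell out the compactness argument for $\norm{w_t}\to\infty$, which the paper leaves implicit.
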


The formal proofs of these lemmas can be found in Appendix~\ref{sec:proof-basic-lemmas}.

\begin{remark}
\rebuttal{
We can relax the condition in Lemma \ref{thm:decreasing-lose} and \ref{thm:to-infinity} such that for a sufficiently small step size $\eta$, $\psi - \eta L$ is only locally convex at the iterates $w_t$.
The relaxed condition allows us to analyze losses such as the exponential loss (see, e.g. Footnote 2 of \cite{soudry2018implicit}).
}
This condition can be considered as the mirror descent counterpart to the standard smoothness assumption in the analysis of gradient descent (see \cite{lu2018relatively}).
\end{remark}

{\bf Preliminaries on the convergence of linear classifier.}
As we discussed above,  the weights vector $w_t$ diverges for mirror descent.
Here the main theoretical question is:


\begin{center}
What direction does \ref{equ:md} diverge to? In other words, can we characterize $w_t / \norm{w_t}$ as $t\to \infty$?
\end{center}


To answer this question, we define two special directions whose importance will be illustrated later.

\begin{definition}
The \textbf{regularization path} with respect to the $\ell_p$-norm is defined as
\begin{equation}
    \bar{w}_p(B) = \argmin_{\norm{w}_p \le B} L(w)
\end{equation}
And if the limit $\lim_{B\to\infty} \bar{w}_p(B) / B$ exists, we call it the \textbf{regularized direction} and denote it by $\reg{p}$.
\end{definition}

\begin{definition}
The \textbf{margin} $\gamma$ of the a linear classifier $w$ is defined as
$\gamma(w) = \min_{i=1, \dots, n} y_i \inp{x_i}{w}. $
The {\bf max-margin direction} with respect to the $\ell_p$-norm is defined as:
\begin{equation}
    \mmd{p} := \argmax_{\norm{w}_p \le 1} \left\{ \min_{i=1, \dots, n} y_i \inp{x_i}{w} \right\}
\end{equation}
And let $\mar{p}$ be the optimal value to the equation above.
\end{definition}

Note that $\mmd{p}$ is parallel to the hard-margin SVM solution w.r.t. $\ell_p$-norm: $\argmin_w \{\norm{w}_p : \gamma(w) \ge 1\}$.
\rebuttal{Also note that the superscripts in $\reg{p}$ and $\mmd{p}$ are not variables and we only use this notation to differentiate the two definitions.}
Prior results had shown that, in linear classification, gradient descent converges in direction.

\begin{theorem}[\cite{soudry2018implicit}]
\label{thm:gd-maxmargin}
For separable linear classification with logistics loss, the gradient descent iterates with sufficiently small step size converge in direction to $\mmd{2}$, i.e., $\lim_{t\to\infty} \frac{w_t}{\norm{w_t}_2} = \mmd{2}$. 
\end{theorem}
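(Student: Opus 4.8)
The plan is to follow the now-standard two-part strategy: (i) show that the unnormalized iterates grow logarithmically along a single fixed direction, and (ii) identify that direction with the hard-margin SVM solution via the KKT conditions. Throughout, write $-\nabla L(w_t) = \tfrac1n\sum_i \beta_i(t)\, y_i x_i$ with $\beta_i(t) := -\ell'(y_i\langle w_t,x_i\rangle) > 0$, so every GD step adds a nonnegative combination of the correctly-oriented data points; this positivity is exactly what couples the dynamics to the SVM dual. By Lemma~\ref{thm:to-infinity} we already know $L(w_t)\to 0$ and $\|w_t\|_2\to\infty$, hence $y_i\langle w_t,x_i\rangle\to\infty$ for every $i$, and for the logistic loss this forces $\beta_i(t)$ to behave asymptotically like $e^{-y_i\langle w_t,x_i\rangle}$.

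The guiding ansatz is $w_t = (\log t)\,\bar w + \rho_t$, where $\bar w$ is the (unique) minimum-$\ell_2$-norm vector with $\gamma(\bar w)\ge 1$ — so that $\bar w/\|\bar w\|_2 = \mmd{2}$ and $\|\bar w\|_2 = 1/\mar{2}$ — and $\rho_t$ is a lower-order residual. Plugging this in: for a support vector $i$ (one with $y_i\langle\bar w,x_i\rangle = 1$) we get $y_i\langle w_t,x_i\rangle = \log t + y_i\langle\rho_t,x_i\rangle$, hence $\beta_i(t)\asymp t^{-1}e^{-y_i\langle\rho_t,x_i\rangle}$, while for a non-support vector $y_i\langle\bar w,x_i\rangle = 1+\delta_i>1$ gives the strictly faster decay $\beta_i(t) = O(t^{-(1+\delta_i)})$. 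Comparing $\tfrac{d}{dt}\big((\log t)\bar w\big) = \bar w/t$ with the leading part of $-\nabla L(w_t)$ forces $\bar w = \tfrac1n\sum_{i\in\mathrm{SV}} e^{-y_i\langle\rho_\infty,x_i\rangle}\,y_i x_i$ for the limiting residual. Since the coefficients $\alpha_i := \tfrac1n e^{-y_i\langle\rho_\infty,x_i\rangle}$ are automatically nonnegative and supported on the active constraints, this is precisely the KKT stationarity condition characterizing $\bar w$ as the hard-margin SVM solution, and uniqueness of the minimum-norm point in the feasible polyhedron pins the direction down as $\mmd{2}$.

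To make this rigorous I would: (a) use smoothness of the logistic loss and the descent lemma (this is where "sufficiently small step size" enters) to get $\sum_t\|\nabla L(w_t)\|_2^2<\infty$ and a rate such as $L(w_t)=O(1/t)$; (b) telescope to get a lower bound $\langle w_t,\bar w\rangle \ge \langle w_0,\bar w\rangle + \eta S_t$ from $y_i\langle\bar w,x_i\rangle\ge1$, and an upper bound $\|w_t\|_2 \le \|w_0\|_2 + \eta C S_t$ from $\|x_i\|_2 < C$, where $S_t := \tfrac1n\sum_{s<t}\sum_i\beta_i(s)\to\infty$; (c) sharpen (b) by showing the step directions asymptotically concentrate on $\bar w$ — i.e. the non-support-vector contributions are summable and $\rho_t$ stays bounded — via an energy argument on $\|w_t - (\log t)\bar w - \rho\|_2^2$ exploiting convexity of $L$ and the explicit form of $\beta_i$. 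Granting (c), $w_t/\|w_t\|_2 = \big((\log t)\bar w + O(1)\big)/\big((\log t)\|\bar w\|_2 + O(1)\big) \to \bar w/\|\bar w\|_2 = \mmd{2}$.

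The main obstacle is step (c): controlling the residual $\rho_t$, equivalently proving the step directions collapse onto the SVM direction at the right rate. The crude bounds in (b) only give $\liminf_t \langle w_t,\bar w\rangle/\|w_t\|_2 \ge 1/C$, whereas Cauchy--Schwarz caps the correlation at $\|\bar w\|_2 = 1/\mar 2$ and directional convergence requires it to actually reach $\|\bar w\|_2$; closing that gap genuinely needs the finer separation between support and non-support vectors together with a careful induction (or Lyapunov function). Some additional care is also needed when the data are not in general position, but the minimum-$\ell_2$-norm characterization of $\bar w$ — hence uniqueness of the limiting direction — remains valid and suffices.
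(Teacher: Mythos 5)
This statement is quoted from \cite{soudry2018implicit}; the paper contains no direct proof of it, but it effectively re-derives it as the $p=2$ special case of Theorem~\ref{thm:primal-bias} combined with Proposition~\ref{thm:reg-max-dir}. Your route is genuinely different from that machinery: you follow the original asymptotic-expansion strategy of Soudry et al., positing $w_t = (\log t)\bar w + \rho_t$ and identifying $\bar w$ through the KKT stationarity condition satisfied by the exponentially weighted sum over support vectors. The paper's approach never decomposes the iterate at all: it compares $w_t$ against the moving target $c_t\reg{2}$ with $c_t \approx (1+\alpha)\norm{w_t}_2$, telescopes the Bregman-divergence identity of Lemma~\ref{thm:key-iden}, and uses Lemma~\ref{thm:approx-reg-dir-loss} (a slightly inflated multiple of the regularized direction always has smaller loss) to control the cross terms. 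What the paper's route buys is generality---it works for any $p>1$ and any exponential-tailed loss without ever touching support vectors or residuals; what your route buys, when completed, is sharper information (the exact first-order expansion of $w_t$ and the $\log t$ growth rate, which the paper recovers only separately and more coarsely in Lemma~\ref{thm:norm-rate}).

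As written, however, your argument has a genuine gap exactly where you flag it: step (c). Steps (a) and (b) yield only $\liminf_t \inp{w_t}{\bar w}/\norm{w_t}_2 \ge 1/C$, a positive but generally suboptimal correlation; directional convergence requires this quantity to reach $\norm{\bar w}_2 = 1/\mar{2}$ exactly, and nothing in (a)--(b) forces that, since $1/C \le 1/\mar{2}$ always. The ``energy argument on $\norm{w_t - (\log t)\bar w - \rho}_2^2$'' is named but not executed, and it is not routine: one must show that $\rho_t$ stays bounded, which in \cite{soudry2018implicit} requires solving a nonlinear fixed-point equation for $\rho_\infty$ over the support vectors and running a delicate induction to control $\norm{\rho_t}$ uniformly in $t$. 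Your closing remark that the argument ``remains valid'' when the data are not in general position glosses over precisely the degenerate case in which the expansion $w_t = (\log t)\bar w + \rho_t$ needs an additional $\Theta(\log\log t)$ correction term and the bounded-residual claim fails as stated. So the skeleton is the right one for the cited theorem, but the proposal stops short of the step that constitutes essentially all of the difficulty.
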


\begin{theorem}[\cite{ji2020gradient}]
\label{thm:gd-regdir}
If the regularized direction $\reg{p}$ with respect to the $\ell_2$-norm exists, then the gradient descent iterates with sufficiently small step size converge to the regularized direction $\reg{2}$, i.e., $\lim_{t\to\infty} \frac{w_t}{\norm{w_t}_2} = \reg{2}$. 
\end{theorem}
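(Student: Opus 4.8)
The plan is to compare the gradient descent iterates directly with the regularization path $\bar{w}_2(\cdot)$, using the $\ell_2$-norm of the current iterate as the common scale. By Lemma~\ref{thm:decreasing-lose} and Lemma~\ref{thm:to-infinity} we already know that $L(w_t)$ is nonincreasing with $L(w_t)\to 0$ and that $B_t := \norm{w_t}_2 \to \infty$. Since $\bar{w}_2(B_t)/B_t \to \reg{2}$ by the hypothesis that $\reg{2}$ exists, it suffices to show that $w_t/B_t$ has the same limit, i.e.\ that the GD direction eventually tracks the path direction at its own scale.

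First I would write down the first-order (KKT) optimality condition of the path. Because $L$ is convex and never attains its infimum, for all large $B$ the constraint $\norm{w}_2 \le B$ is active at $\bar{w}_2(B)$ and carries a strictly positive multiplier, so $\nabla L(\bar{w}_2(B)) = -\mu_B\,\bar{w}_2(B)$ with $\mu_B = \norm{\nabla L(\bar{w}_2(B))}_2 > 0$; in words, along the $\ell_2$ path the negative gradient points radially outward. The core of the argument is then to show that gradient descent is, in the relevant sense, \emph{as efficient as the path}: at scale $B_t$ its loss $L(w_t)$ is no larger than $\min_{\norm{v}_2\le B_t} L(v) = L(\bar{w}_2(B_t))$ up to a correction that is negligible on the (roughly logarithmic) scale on which $B_t$ grows.

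To make this quantitative I would track two coupled scalar quantities along the iteration: the norm growth,
\[
    \norm{w_{t+1}}_2^2 = \norm{w_t}_2^2 - 2\eta\inp{\nabla L(w_t)}{w_t} + \eta^2\norm{\nabla L(w_t)}_2^2,
\]
and the log-suboptimality $\Delta_t := \log L(w_t) - \log L(\bar{w}_2(B_t)) \ge 0$, which I want to show satisfies $\Delta_t = o(B_t)$. Specializing the \ref{equ:md} identity of Lemma~\ref{thm:key-iden} to $\psi = \tfrac{1}{2}\norm{\cdot}_2^2$ (so $D_\psi(x,y) = \tfrac{1}{2}\norm{x-y}_2^2$) gives $\eta\,(L(w_t)-L(w_{t+1})) \ge \tfrac{\eta^2}{2}\norm{\nabla L(w_t)}_2^2$, hence $\sum_t \norm{\nabla L(w_t)}_2^2 < \infty$; combined with convexity, $L(w_t) - L(\bar{w}_2(B_t)) \le \inp{\nabla L(w_t)}{w_t - \bar{w}_2(B_t)}$, and the KKT identity above (which lets one view $-\nabla L(w_t)$, after normalization, as an approximately optimal dual certificate for the min-norm problem at scale $B_t$), this summability forces the alignment between $-\nabla L(w_t)$ and $w_t$ to improve fast enough that $\Delta_t/B_t \to 0$. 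Finally I would convert ``path-optimality at every scale, up to $o(B_t)$ in log-loss'' into directional convergence: writing $\widetilde{w}_t := w_t/B_t$ on the unit sphere, the normalized loss $-\tfrac{1}{B_t}\log L(w_t)$ differs from its path-optimal value $-\tfrac{1}{B_t}\log L(\bar{w}_2(B_t)) = \max_{\norm{v}_2\le 1}\bigl(-\tfrac{1}{B_t}\log L(B_t v)\bigr)$ by exactly $\Delta_t/B_t\to 0$, so any subsequential limit $\bar v$ of $\widetilde{w}_t$ maximizes the appropriate limiting functional; since the path limit $\reg{2}$ exists (making this maximizer unique in the relevant sense), $\bar v = \reg{2}$, and the whole sequence converges.

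I expect the middle step to be the main obstacle: quantitatively coupling the slow (roughly logarithmic) growth of $B_t$ with the decay of the suboptimality gap $\Delta_t$, for a \emph{general} decreasing convex loss, requires structural control on the tail of $\ell$ so that a bounded additive error in $\log L$ translates to a vanishing \emph{directional} error. This tail control --- an exponential-type tail, or the milder conditions isolated by \cite{ji2020gradient} --- is exactly the technical heart of the cited result, whereas the KKT characterization and the sphere-compactness endgame are comparatively routine.
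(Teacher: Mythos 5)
First, a point of orientation: the paper does not prove Theorem~\ref{thm:gd-regdir} at all --- it is quoted as a known result of \cite{ji2020gradient} and used purely as background. The closest thing to an in-paper proof is Theorem~\ref{thm:primal-bias}, which generalizes this statement to \algname{} and recovers it at $p=2$. That proof takes a genuinely different route from yours. Instead of comparing $w_t$ to the path point $\bar w_2(B_t)$ at the exact scale $B_t$ and controlling a log-loss suboptimality $\Delta_t$, the paper compares $w_t$ to the slightly inflated moving target $c_t\reg{2}$ with $c_t=(1+\alpha)\norm{w_t}_2$. Lemma~\ref{thm:approx-reg-dir-loss} guarantees $L(c_t\reg{2})\le L(w_t)$ once $\norm{w_t}_2$ is large, which by convexity yields Corollary~\ref{thm:cross-term}, $\inp{\nabla L(w_t)}{w_t}\ge c_t\inp{\nabla L(w_t)}{\reg{2}}$; feeding this reference into the identity of Lemma~\ref{thm:key-iden} and telescoping the resulting Bregman-divergence inequality gives $\brg{\reg{2}}{w_T/\norm{w_T}_2}\to 0$ directly. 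The $(1+\alpha)$ slack is precisely what lets that argument sidestep the step you flag as hard: it never needs to show that GD is loss-optimal at its own scale, only that a slightly larger multiple of $\reg{2}$ beats $w_t$.

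Your sketch, which is closer in spirit to the smoothed-margin/dual-certificate analysis of \cite{ji2020gradient} itself, has a genuine gap at its center --- and you say so yourself. The claim that summability of $\norm{\nabla L(w_t)}_2^2$, combined with the KKT identity $\nabla L(\bar w_2(B))=-\mu_B\,\bar w_2(B)$, ``forces'' $\Delta_t/B_t\to 0$ is asserted rather than derived. Summability of squared gradient norms controls the total decrease of the loss; it does not by itself control the angle between $-\nabla L(w_t)$ and $w_t$, nor does it compare $L(w_t)$ with $\min_{\norm{v}_2\le B_t}L(v)$ --- and that comparison is essentially the entire content of the theorem. The endgame also needs more care than ``sphere compactness'': for a general decreasing convex loss, existence of the limit $\lim_{B\to\infty}\bar w_2(B)/B$ does not automatically make your limiting functional have a unique maximizer, which is one reason both \cite{ji2020gradient} and the paper (via Lemma~\ref{thm:approx-reg-dir-loss}) work with the path itself rather than with a variational characterization of its limit. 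In short, your proposal is an accurate map of where the difficulty lies, but the difficult part is left undone; the paper's moving-target Bregman argument is one concrete way to close it.
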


\section{Mirror Descent with the $p$-th Power of $\ell_p$-norm}
\label{sec:main-result}

In this section, we investigate theoretical properties of the main algorithm of interest, namely mirror descent with $\mir(\cdot) = \frac{1}{p} \norm{\cdot}_p^p$ and for $p > 1$.\footnote{\rebuttal{Because the potential function must be \textit{strictly} convex for Bregman divergence to be well-defined, the value of $p$ cannot be exactly 1.}}
We shall call this algorithm \textit{$p$-norm GD} because it naturally generalizes gradient descent to $\ell_p$ geometry, and for conciseness, we will refer to this algorithm by the shorthand \algname.
As noticed by \cite{azizan2021stochastic}, this choice of mirror potential is particularly of practical interest because the mirror map $\nabla \psi$ updates becomes \textit{separable} in coordinates and thus can be implemented \textit{coordinate-wise} independent of other coordinates:
\begin{align} \tag{\algname}\label{mdpp}
\forall j \in [d],\quad \begin{cases}
    \rebuttal{w_{t+1}[j] \leftarrow \left| w_t^+[j] \right|^{\frac{1}{p-1}} \cdot \sgn\left( w_t^+[j]\right)}\\
    w_t^+[j]:= |w_t[j]|^{p-1}\sgn(w_t[j]) - \eta \nabla L(w_t)[j]
    \end{cases}
\end{align}

Furthermore, we can extend upon the observation of \cite{azizan2021stochastic} and derive these identities that allow us to better manipulate \algname:
\begin{subequations}
\begin{align}
 \inp{\nabla \psi (w)}{w} &= \sgn(w_1)w_1 \cdot |w_1|^{p-1} + \cdots + \sgn(w_d)w_d \cdot |w_d|^{p-1} = \norm{w}^p\\
  \brg{c w}{c w'} &= |c|^p \brg{w}{w'} \quad \forall c\in \R. \label{equ:homo}
\end{align}
\end{subequations}
\begin{remark} \label{rmk:sep}
Note that the coordinate-wise separability property is not shared among other related algorithms in the literature.
For instance, the choice $\psi = \frac{1}{2} \norm{\cdot}_q^2$ for $1/p + 1/q = 1$, which is referred to as the $p$-norm algorithm~\citep{grove2001general, gentile2003robustness} is not fully coordinate-wise separable since it requires computing $\norm{w_t}_p$ at each step (see, e.g., \citep[eq. (1)]{gentile2003robustness}). 
Another related algorithm is steepest descent, where the Bregman divergence in \ref{equ:md} is replaced with $\norm{\cdot}^2$ for general norm $\norm{\cdot}$.\footnote{It is also worth noting that steepest descent is not an instance of mirror descent since $\norm{\cdot}^2$ is not a Bregman divergence for a general norm $\norm{\cdot}$.}
However, for similar reasons, the update rule is not fully separable.
\end{remark}

\subsection{Main theoretical results}

We extend Theorems~\ref{thm:gd-maxmargin}~and~\ref{thm:gd-regdir} to the setting of \algname.
We will resolve two major obstacles in the analysis of implicit regularization in linear classification:
\begin{list}{{\tiny $\blacksquare$}}{\leftmargin=1.5em}
  \setlength{\itemsep}{-1pt}
    \item Our analysis approaches the classification setting as a limit of the regression implicit bias. This argument gives stronger theoretical justification for utilizing the regularized direction (as employed by~\cite{ji2020gradient}) and partially addresses the concern from \cite{gunasekar2018characterizing} that the implicit bias of regression and classification problems are ``fundamentally different.''
    \item On a more technical note, analyzing the implicit bias requires handling the cross terms of the form $\inp{\nabla\mir(w)}{w'}$, which lack direct geometric interpretations.
    We demonstrate that for our potential functions of interest, these terms can be nicely written and can be handled in the analysis.
\end{list}

We begin with the motivation behind the regularized direction, and consider the regression setting in which there exists some weight vector $w$ such that $L(w) = 0$.
Then, we can apply Lemma~\ref{thm:key-iden} to get
\[D_\psi(w, w_{t}) = D_\psi(w, w_{t+1}) + D_{\psi - \eta L}(w_{t+1}, w_t) + \eta D_{L}(w, w_t) + \eta (L(w_{t+1}) - L(w))\]
Since we assumed $L(w) = 0$, the equation above implies that $D_\psi(w, w_{t}) \ge D_\psi(w, w_{t+1})$ for sufficiently small step-size $\eta$.
This can be interpreted as \ref{equ:md} having a decreasing ``potential'' of the from $\brg{w}{\cdot}$ during each step.
Using this property, \citet{azizan2018stochastic} establishes the implicit bias results of mirror descent in the regression setting.

However, such weight vector $w$ does not exist in the classification setting.
One natural workaround would   then be to choose a vector $w$ so that $L(w) \le L(w_t)$ for all $t \le T$.
The following result, which is a generalization of  \citep[Lemma 9]{ji2020gradient}, shows that one can in fact choose the reference vector $w$ as a scalar multiple of the regularized direction.

\begin{lemma}
\label{thm:approx-reg-dir-loss}
If the regularized direction $\reg{p}$ exists, then $\forall \alpha > 0$, there exists $r_\alpha$ such that for any $w$ with $\norm{w}_p > r_\alpha$, we have $L((1+\alpha)\norm{w}_p \reg{p}) \le L(w)$. 
\end{lemma}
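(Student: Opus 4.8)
The plan is to leverage the definition of the regularized direction $\reg{p}$ as the limit $\lim_{B\to\infty} \bar{w}_p(B)/B$, together with the fact that $\bar{w}_p(B) = \argmin_{\norm{w}_p\le B} L(w)$ is the loss-minimizer on the $\ell_p$-ball of radius $B$. The high-level idea is that for large $B$, the scaled regularized direction $B\reg{p}$ is nearly as good as $\bar{w}_p(B)$ in terms of loss, and in fact its loss is strictly smaller than the loss of any weight vector $w$ of comparable or smaller $\ell_p$-norm, once we allow the slack factor $(1+\alpha)$. Concretely, I would fix $\alpha>0$, set $B = (1+\alpha/2)\norm{w}_p$ (or some convenient intermediate radius between $\norm{w}_p$ and $(1+\alpha)\norm{w}_p$), and compare $L((1+\alpha)\norm{w}_p\reg{p})$ with $L(\bar{w}_p(B))$ and then with $L(w)$.

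First I would record the two easy monotonicity facts. Since $\ell$ is decreasing and convex, $L$ is convex, and since the data is separable there is a direction of strictly positive margin; hence along any ray $t\mapsto L(t v)$ with $\gamma(v)>0$ the loss is strictly decreasing to $0$. In particular, $B\mapsto L(\bar{w}_p(B))$ is non-increasing, and for $w$ with $\norm{w}_p\le B$ we have $L(\bar{w}_p(B))\le L(w)$ by definition of $\bar{w}_p$. So the only real work is to show $L((1+\alpha)\norm{w}_p\reg{p})\le L(\bar{w}_p(B))$ for the chosen $B<(1+\alpha)\norm{w}_p$, uniformly once $\norm{w}_p$ is large enough.

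The core step is the following: because $\bar{w}_p(B)/B\to\reg{p}$, for large $B$ the vectors $\bar{w}_p(B)$ and $B\reg{p}$ point in almost the same direction, and both have $\ell_p$-norm close to $B$ (note $\norm{\reg{p}}_p\le 1$ by Fatou/continuity, and in the separable regime one expects $\norm{\reg{p}}_p=1$, or at worst I only need $\reg{p}\ne 0$, which follows since $\reg{p}$ has positive margin). Then I scale up by the factor $(1+\alpha)\norm{w}_p/B = (1+\alpha)/(1+\alpha/2) > 1$: scaling a positive-margin vector outward decreases the loss (strict ray monotonicity), and this fixed multiplicative gap $>1$ gives enough room to absorb the small directional discrepancy between $\bar{w}_p(B)$ and $B\reg{p}$. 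Quantitatively, I would write $y_i\inp{x_i}{(1+\alpha)\norm{w}_p\reg{p}} \ge y_i\inp{x_i}{\bar{w}_p(B)} + (\text{gap})\cdot B\gamma(\reg{p}) - (\text{error from }\|\bar{w}_p(B)/B-\reg{p}\|)\cdot$ (bound on $\norm{x_i}_q$, using the assumed $\max_i\norm{x_i}_q<C$), and choose $B$ (equivalently $r_\alpha$) large enough that the positive gap term dominates the error term for every $i$; since $\ell$ is decreasing, this yields $L((1+\alpha)\norm{w}_p\reg{p})\le L(\bar{w}_p(B))\le L(w)$.

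The main obstacle I anticipate is making the last inequality genuinely uniform: the convergence $\bar{w}_p(B)/B\to\reg{p}$ is only asymptotic, so I need to quantify how large $\norm{w}_p$ must be (this is exactly what $r_\alpha$ absorbs) and ensure the margin of $\reg{p}$ is strictly positive so that the ``fixed multiplicative gap beats vanishing directional error'' argument closes. A secondary subtlety is handling the regime where $\norm{w}_p$ is large but $L(w)$ is tiny — one must check that $L(\bar{w}_p(B))$ is still at most $L(w)$, which is immediate from $\norm{w}_p\le B$ and the definition of $\bar{w}_p$, but it is worth stating carefully. I expect no difficulty from the potential $\psi=\frac1p\norm{\cdot}_p^p$ specifically here — the homogeneity identity \eqref{equ:homo} and the cross-term formula are not needed for this lemma; it is purely a statement about the geometry of $L$ and the regularization path, mirroring \citep[Lemma 9]{ji2020gradient}.
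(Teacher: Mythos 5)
Your proposal is correct and follows essentially the same route as the paper's proof: use the directional convergence $\bar{w}_p(B)/B \to \reg{p}$ together with H\"older's inequality (via $\max_i\norm{x_i}_q < C$) and the strictly positive margin of $\reg{p}$ to show that, once $\norm{w}_p$ is large, each per-example margin of $(1+\alpha)\norm{w}_p\reg{p}$ dominates that of the regularization-path point, and then conclude by monotonicity of $\ell$ and the defining optimality of $\bar{w}_p$. The only (immaterial) difference is your intermediate radius $B=(1+\alpha/2)\norm{w}_p$, where the paper takes $B=\norm{w}_p$ directly and lets the full factor $(1+\alpha)$ absorb the directional error.
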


However, this does not resolve the issue altogether. Recall from Lemma~\ref{thm:to-infinity} that the loss approaches 0, and therefore one cannot choose a fixed reference vector $w$ in the limit as $T\to \infty$.
But due to the homogeneity of Bregman divergence \eqref{equ:homo}, we can scale $\reg{p}$ by a constant factor during each iteration, and, by doing so, we choose the reference vector $w$ to be a ``moving target.''
In other words, the idea behind our analysis is that the classification problem is chasing after a regression one and would behave similar to it in the limit.
Let us formalize this idea. We begin with the following inequality:
\begin{align} \label{ineq:1}
\brg{c_t\reg{p}}{w_{t+1}} \le \brg{c_t\reg{p}}{w_t} - \eta L(w_{t+1}) + \eta L(w_t),
\end{align}
where $c_t$ is taken to be $\approx \norm{w_t}_p$.\footnote{To be more precise, we want $c_t = (1+\alpha) \norm{w_t}_p$; and reason behind this choice is self-evident after we present Corollary~\ref{thm:cross-term}.}

Now we modify \eqref{ineq:1} so that it can telescope over  different iterations.
One way is to add $\brg{c_{t+1}\reg{p}}{w_{t+1}}$ on both sides of \eqref{ineq:1} and move $ \brg{c_t\reg{p}}{w_{t+1}}$ to the right-hand side as follows:
\begin{align*}
    &\brg{c_{t+1}\reg{p}}{w_{t+1}} \\
    \le{}& \brg{c_t\reg{p}}{w_t} - \eta L(w_{t+1}) + \eta L(w_t) + \brg{c_{t+1}\reg{p}}{w_{t+1}} - \brg{c_t\reg{p}}{w_{t+1}} \\
    ={}& \brg{c_t\reg{p}}{w_t} - \eta L(w_{t+1}) + \eta L(w_t) + \psi(c_{t+1} \reg{p}) - \psi(c_t \reg{p})   - \inp{\nabla\psi(w_{t+1})}{(c_{t+1} - c_t) \reg{p}}
\end{align*}
Summing over $t = 0, \dots, T-1$ gives us
\begin{equation}
\label{equ:tele-sum}
\begin{aligned}
    \brg{c_T\reg{p}}{w_T}
    &\le \brg{c_0\reg{p}}{w_0} - \eta L(w_1) + \eta L(w_T) + \psi(c_{T} \reg{p}) - \psi(c_1 \reg{p}) \\
    &\hspace{6em}- \sum_{t=1}^{T-1}\inp{\nabla\psi(w_{t+1})}{(c_{t+1} - c_t) \reg{p}}
\end{aligned}
\end{equation}
The rest of the argument deals with simplifying  quantities that do not cancel under telescoping sum.
For instance, in order to deal with $\inp{\nabla\psi(w_{t+1})}{\reg{p}}$, we invoke the \ref{equ:md} update rule as follows
\begin{align*}
 \inp{\nabla\psi(w_{t+1}) - \nabla\psi(w_{t})}{\reg{p}} = \inp{-\eta\nabla L(w_t)}{\reg{p}} \gtrsim \inp{-\eta\nabla L(w_t)}{w_t}, 
\end{align*}
where the last inequality follows from the intuition that $\reg{p}$ is the direction along which the loss achieves the smallest value and hence $\nabla L(w_t)$ must point away from $\reg{p}$, i.e., it must be that $\inp{\nabla L(w_t)}{\reg{p}} \lesssim \inp{\nabla L(w_t)}{u}$ for any direction $u$.
The following result formalizes this intuition.
\begin{corollary}
\label{thm:cross-term}
For $w$ so that $\norm{w}_p > r_\alpha$, we have $\inp{\nabla L(w)}{w} \ge (1+\alpha)\norm{w}_p\inp{\nabla L(w)}{\reg{p}}$.  
\end{corollary}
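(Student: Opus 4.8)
The plan is to obtain the inequality directly from the convexity of $L$ together with Lemma~\ref{thm:approx-reg-dir-loss}. The crucial point is that Lemma~\ref{thm:approx-reg-dir-loss} already hands us a concrete comparison point at which the loss is no larger than $L(w)$, namely $w' := (1+\alpha)\norm{w}_p\,\reg{p}$, and the multiplicative factor $(1+\alpha)\norm{w}_p$ appearing there is precisely the factor we want in the conclusion. So the whole corollary is essentially a restatement of Lemma~\ref{thm:approx-reg-dir-loss} through the first-order convexity inequality.

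Concretely, I would first fix $\alpha > 0$ and take the threshold $r_\alpha$ supplied by Lemma~\ref{thm:approx-reg-dir-loss}, and consider any $w$ with $\norm{w}_p > r_\alpha$. Since $L$ is convex and differentiable, the first-order characterization of convexity applied at $w$ with the comparison point $w' = (1+\alpha)\norm{w}_p\,\reg{p}$ gives
\[
L(w') \ge L(w) + \inp{\nabla L(w)}{w' - w}.
\]
Next I would invoke Lemma~\ref{thm:approx-reg-dir-loss}, which states exactly that $L(w') = L\big((1+\alpha)\norm{w}_p\,\reg{p}\big) \le L(w)$ under the hypothesis $\norm{w}_p > r_\alpha$. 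Substituting this bound on the left-hand side and cancelling $L(w)$ from both sides yields $0 \ge \inp{\nabla L(w)}{w' - w}$, that is,
\[
\inp{\nabla L(w)}{w} \ge \inp{\nabla L(w)}{w'} = (1+\alpha)\norm{w}_p\,\inp{\nabla L(w)}{\reg{p}},
\]
which is the claimed inequality.

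I do not anticipate a substantial obstacle: the real work is buried in Lemma~\ref{thm:approx-reg-dir-loss}, and the only decision to make here—which point to feed into the convexity inequality—is dictated by the form of the conclusion. The only hypothesis worth keeping in view is differentiability of $L$ so that $\nabla L(w)$ is well-defined and the first-order inequality is valid; this holds for the loss functions under consideration (e.g., logistic and exponential loss), and if one wished to be fully general the identical argument goes through with any subgradient of $L$ at $w$ in place of $\nabla L(w)$.
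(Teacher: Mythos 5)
Your proposal is correct and matches the paper's proof exactly: both apply the first-order convexity inequality at $w$ with comparison point $(1+\alpha)\norm{w}_p\reg{p}$ and then invoke Lemma~\ref{thm:approx-reg-dir-loss} to conclude that $\inp{\nabla L(w)}{w - (1+\alpha)\norm{w}_p\reg{p}} \ge L(w) - L((1+\alpha)\norm{w}_p\reg{p}) \ge 0$.
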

\noindent \emph{Proof.}
This follows from the convexity of $L$ and Lemma~\ref{thm:approx-reg-dir-loss}:
$\inp{\nabla L(w)}{w - (1+\alpha)\norm{w}\reg{p})} \ge L(w) - L((1+\alpha)\norm{w}\reg{p}) \ge 0$. \qed

Now we are left with the terms $\inp{-\eta\nabla L(w_t)}{w_t}$.
For general potential $\mir$, the quantity $\inp{-\eta\nabla L(w_t)}{w_t} = \inp{\nabla\psi(w_{t+1}) - \nabla\psi(w_{t})}{w_t}$ cannot be simplified.
On the other hand, due to our choice of potential, one can invoke Lemma~\ref{thm:key-iden} to lower bound these quantities in terms of $\norm{w_{t+1}}_p$ and $\norm{w_t}_p$, and this step is detailed in Lemma~\ref{thm:cross-term-diff} in Appendix~\ref{sec:cross-term-diff}.
Once we have established a lower bound on $\inp{\nabla\psi(w_{t+1})}{\reg{p}}$, we can turn \eqref{equ:tele-sum} entirely into a telescoping sum and unwind the above process to show that $\brg{\reg{p}}{w_t / \norm{w_t}_p}$ must converge to zero in the limit as $t \to \infty$.
Putting this all together, we obtain the following result.

\begin{theorem}
\label{thm:primal-bias}
For a separable linear classification problem, if the regularized direction $\reg{p}$ exists, then with sufficiently small step size, the iterates of \algname converge to $\reg{p}$ in direction:
\begin{equation}
    \lim_{t\to\infty} \frac{w_t}{\norm{w_t}_p} = \reg{p}.
\end{equation}
\end{theorem}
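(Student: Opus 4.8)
The plan is to turn the ``moving target'' heuristic preceding the statement into a rigorous argument: fit a rescaled copy $c_t\reg{p}$ of the regularized direction, telescope the Bregman potential $\brg{c_t\reg{p}}{w_t}$ over iterations, and show that the normalized iterates send this quantity to $0$. Fix $\alpha>0$, take $r_\alpha$ from Lemma~\ref{thm:approx-reg-dir-loss}, and by Lemma~\ref{thm:to-infinity} pick $t_0$ with $\norm{w_t}_p>r_\alpha$ for all $t\ge t_0$; set $c_t:=(1+\alpha)\norm{w_t}_p$. Applying \eqref{equ:key-iden-1} with $w=c_t\reg{p}$, dropping the nonnegative terms $D_{\psi-\eta L}(w_{t+1},w_t)$ and $\eta D_L(c_t\reg{p},w_t)$ (legitimate for small $\eta$ by convexity of $L$ and of $\psi-\eta L$), and using $L(c_t\reg{p})\le L(w_t)$ from Lemma~\ref{thm:approx-reg-dir-loss}, gives precisely \eqref{ineq:1}. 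Adding $\brg{c_{t+1}\reg{p}}{w_{t+1}}$ to both sides, expanding $\brg{c_{t+1}\reg{p}}{w_{t+1}}-\brg{c_t\reg{p}}{w_{t+1}}=\psi(c_{t+1}\reg{p})-\psi(c_t\reg{p})-\inp{\nabla\psi(w_{t+1})}{(c_{t+1}-c_t)\reg{p}}$ from the definition of the divergence, and summing over $t=t_0,\dots,T-1$ yields \eqref{equ:tele-sum} (with $t_0$ replacing $0$). One checks that $\norm{\reg{p}}_p=1$ (from the definition of the regularized direction together with the fact that in the separable case $L$ has no finite minimizer, so $\norm{\bar w_p(B)}_p=B$ for large $B$), so $\psi(c_T\reg{p})=\tfrac1p c_T^p$ and every term on the right of \eqref{equ:tele-sum} other than the cross-term sum is $O(1)$.

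The crux is lower-bounding the cross terms so that \eqref{equ:tele-sum} becomes a genuine telescoping sum. From the mirror-map form of \algname, $\inp{\nabla\psi(w_{t+1})-\nabla\psi(w_t)}{\reg{p}}=-\eta\inp{\nabla L(w_t)}{\reg{p}}$, and Corollary~\ref{thm:cross-term} bounds this below by $\tfrac{1}{(1+\alpha)\norm{w_t}_p}\inp{-\eta\nabla L(w_t)}{w_t}=\tfrac{1}{(1+\alpha)\norm{w_t}_p}\inp{\nabla\psi(w_{t+1})-\nabla\psi(w_t)}{w_t}$. To control the last cross term we use \eqref{equ:key-iden-2} (equivalently Lemma~\ref{thm:cross-term-diff}) at the reference point $w=0$: since $D_\psi(0,w)=\tfrac{p-1}{p}\norm{w}_p^p$ for this potential and $L(w_{t+1})\le L(w_t)$ by Lemma~\ref{thm:decreasing-lose}, we obtain $\inp{\nabla\psi(w_{t+1})-\nabla\psi(w_t)}{w_t}\ge\tfrac{p-1}{p}\big(\norm{w_{t+1}}_p^p-\norm{w_t}_p^p\big)-\eta\big(L(w_t)-L(w_{t+1})\big)$. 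Telescoping the increments of $\inp{\nabla\psi(w_t)}{\reg{p}}$ and comparing $\sum_t\tfrac{\norm{w_{t+1}}_p^p-\norm{w_t}_p^p}{\norm{w_t}_p}$ with $p\int s^{p-2}\,ds=\tfrac{p}{p-1}\,s^{p-1}$ gives $\inp{\nabla\psi(w_{t+1})}{\reg{p}}\ge\tfrac{1}{1+\alpha}\norm{w_{t+1}}_p^{p-1}-O(1)$; plugging this back in and comparing $\sum_t(\norm{w_{t+1}}_p-\norm{w_t}_p)\norm{w_{t+1}}_p^{p-1}$ with $\int s^{p-1}\,ds=\tfrac1p s^{p}$ yields $\sum_{t=t_0}^{T-1}\inp{\nabla\psi(w_{t+1})}{(c_{t+1}-c_t)\reg{p}}\ge\tfrac1p\norm{w_T}_p^p-o(\norm{w_T}_p^p)$.

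Dividing \eqref{equ:tele-sum} by $c_T^p=(1+\alpha)^p\norm{w_T}_p^p$ and using the homogeneity \eqref{equ:homo}, we get $\brg{\reg{p}}{w_T/c_T}=c_T^{-p}\brg{c_T\reg{p}}{w_T}\le\tfrac1p-\tfrac{1}{p(1+\alpha)^p}+o(1)$ as $T\to\infty$. Since $w_T/c_T=\tfrac{1}{1+\alpha}\,w_T/\norm{w_T}_p$ and $\norm{\reg{p}}_p=\norm{w_T/\norm{w_T}_p}_p=1$, expanding $\brg{\reg{p}}{\tfrac{1}{1+\alpha}\tilde w_T}$ in terms of $\brg{\reg{p}}{\tilde w_T}$ (the mixed terms collapse because $\inp{\nabla\psi(\tilde w_T)}{\tilde w_T}=1$) and simplifying gives $\limsup_{T\to\infty}\brg{\reg{p}}{w_T/\norm{w_T}_p}\le\tfrac{\alpha}{1+\alpha}$. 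Letting $\alpha\downarrow0$ and using $\brg{\reg{p}}{\cdot}\ge0$ forces $\brg{\reg{p}}{w_t/\norm{w_t}_p}\to0$. Since the normalized iterates lie on the compact $\ell_p$-unit sphere and $\brg{\reg{p}}{\cdot}$ is continuous there with a unique zero at $\reg{p}$ (strict convexity of $\psi$), this gives $w_t/\norm{w_t}_p\to\reg{p}$.

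The main obstacle is the cross-term estimate in the second paragraph: one needs a lower bound on $\inp{\nabla\psi(w_{t+1})}{\reg{p}}$ with exactly the constant $\tfrac{p-1}{p}$, so that after the integral comparisons the cross-term sum precisely cancels $\psi(c_T\reg{p})=\tfrac1p c_T^p$ in the limit; this is where the identity $D_\psi(0,w)=\tfrac{p-1}{p}\norm{w}_p^p$ special to $\psi=\tfrac1p\norm{\cdot}_p^p$ is essential and where a generic mirror map would leave an uncontrolled term. A secondary technical point is that $\norm{w_t}_p$ need not be monotone in $t$, so the Riemann-sum comparisons and the sign of $c_{t+1}-c_t$ must be handled carefully (e.g.\ by comparing each increment against an integral valid irrespective of sign, or by passing to a monotone majorant of $c_t$). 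Finally, the order of limits---$T\to\infty$ first, then $\alpha\downarrow0$---is precisely what removes the $(1+\alpha)$ slack inherited from the regularization path.
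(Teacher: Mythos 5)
Your proposal is correct and follows essentially the same route as the paper's proof: the same choice $c_t=(1+\alpha)\norm{w_t}_p$, the same telescoping of $\brg{c_t\reg{p}}{w_t}$ via Lemma~\ref{thm:key-iden} and Corollary~\ref{thm:cross-term}, the same lower bound on the cross terms through Lemma~\ref{thm:cross-term-diff} with reference point $w=0$, and the same pointwise-inequality handling of the possibly non-monotone increments of $\norm{w_t}_p$ (the paper's \eqref{equ:p-norm-diff} is exactly the ``sign-robust integral comparison'' you flag). Your closing bound $\limsup_T\brg{\reg{p}}{w_T/\norm{w_T}_p}\le\alpha/(1+\alpha)$ in fact matches the cleaner alternative derivation the paper gives in its ``simpler proof'' of the same theorem, so no gap remains.
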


A formal proof of this theorem can be found in Appendix~\ref{sec:proof-primal-bias}.
We note that our proof further simplifies  derivations using  the separability of the mirror map.
The final missing piece would be the existence of the regularized direction.
In general, finding the limit direction $\reg{p}$ would be difficult.
Fortunately, we can sometimes appeal to the max-margin direction that is much easier to compute.
The following result is a generalization of \citep[Proposition 10]{ji2020gradient} and shows that for common losses in classification, the regularized direction and the max-margin direction are the same, hence proving the existence of the former.

\begin{prop}
\label{thm:reg-max-dir}
    If we have a loss with exponential tail, e.g. $\lim_{z\to\infty} \ell(z) e^{az} = b$, then the regularized direction exists and it is equal to the max-margin direction $\mmd{p}$.
\end{prop}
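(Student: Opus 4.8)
The plan is to prove the stronger statement that the normalized regularization path converges, $\lim_{B\to\infty}\bar w_p(B)/B=\mmd{p}$; this at once gives the existence of $\reg{p}$ and the asserted identity $\reg{p}=\mmd{p}$. The whole argument rests on the observation that an exponentially-tailed loss sandwiches $L$ between two exponentials of the margin, so that minimizing $L$ over the $\ell_p$-ball of radius $B$ is, up to $O(1)$ corrections, the same as maximizing the margin. Concretely, from $\lim_{z\to\infty}\ell(z)e^{az}=b$ with $a,b>0$ one gets, for each $\eps\in(0,1)$, a threshold $z_0$ with $(1-\eps)b\,e^{-az}\le \ell(z)\le (1+\eps)b\,e^{-az}$ for all $z\ge z_0$.

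For the upper bound I would feed the feasible point $B\,\mmd{p}$ into $L$: it has $\ell_p$-norm $B$ and margin $B\mar{p}$, so monotonicity of $\ell$ gives $L(\bar w_p(B))\le L(B\,\mmd{p})\le (1+\eps)b\,e^{-aB\mar{p}}$ once $B\mar{p}\ge z_0$. For the lower bound, writing $\gamma_B:=\gamma(\bar w_p(B))$, nonnegativity of $\ell$ gives $\tfrac1n\ell(\gamma_B)\le L(\bar w_p(B))$; combined with the upper bound this forces $\ell(\gamma_B)\to 0$, hence (since $\ell$ is decreasing with infimum $0$ not attained) $\gamma_B\to\infty$, so eventually $\gamma_B\ge z_0$ and the two-sided bound applies. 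Taking logarithms in $\tfrac{(1-\eps)b}{n}e^{-a\gamma_B}\le (1+\eps)b\,e^{-aB\mar{p}}$ yields $\gamma_B\ge B\mar{p}-C_0$ with $C_0$ depending only on $n,a,\eps$; on the other side, positive homogeneity of the margin gives $\gamma_B=\norm{\bar w_p(B)}_p\cdot\gamma\!\big(\bar w_p(B)/\norm{\bar w_p(B)}_p\big)\le \norm{\bar w_p(B)}_p\,\mar{p}\le B\mar{p}$. Dividing through by $B\mar{p}$, these two inequalities give $\norm{\bar w_p(B)}_p/B\to 1$ and, for the normalized iterate $\hat w_B:=\bar w_p(B)/\norm{\bar w_p(B)}_p$, that $\gamma(\hat w_B)\to\mar{p}$.

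It then remains to turn ``near-optimal margin'' into ``close to $\mmd{p}$.'' I would first establish that $\mmd{p}$ is the \emph{unique} maximizer of the concave functional $w\mapsto\gamma(w)$ over $\{\norm{w}_p\le 1\}$: the optimum must lie on the sphere (rescaling a strictly-positive-margin interior point strictly increases the margin), and if $w_1\ne w_2$ both attained $\mar{p}$ on the sphere, then $w_1\ne-w_2$ (both have positive margin, so they cannot be antipodal), whence strict convexity of $\norm{\cdot}_p$ for $p>1$ gives $\norm{(w_1+w_2)/2}_p<1$ while concavity of $\gamma$ gives $\gamma((w_1+w_2)/2)\ge\mar{p}$; rescaling to the sphere would produce margin $>\mar{p}$, a contradiction. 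With uniqueness in hand, compactness of the unit $\ell_p$-ball and continuity of $\gamma$ imply that every sequence of unit-norm vectors whose margin tends to $\mar{p}$ converges to $\mmd{p}$ (otherwise a subsequential limit would be a second maximizer). Hence $\hat w_B\to\mmd{p}$, and therefore $\bar w_p(B)/B=(\norm{\bar w_p(B)}_p/B)\,\hat w_B\to\mmd{p}$, proving both existence and the identity.

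The main obstacle is this last step: the sandwich in the first two paragraphs is soft, but extracting directional convergence from convergence of the margin genuinely uses strict convexity of the $\ell_p$-ball (which is exactly why we need $p>1$) together with a compactness argument. Two smaller points also need care: in the lower bound one must bootstrap $\gamma_B\to\infty$ before the exponential lower bound on $\ell$ can be invoked, and one should note that the conclusion holds for \emph{every} minimizer defining $\bar w_p(B)$, so the limit direction is well defined even if $L$ is not strictly convex.
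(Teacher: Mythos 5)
Your proposal is correct and takes essentially the same route as the paper's proof: both hinge on (i) uniqueness of $\mmd{p}$ via strict convexity of the $\ell_p$-ball, and (ii) comparing $L(\bar{w}_p(B))$ against $L(B\,\mmd{p})$ through the exponential tail to force the margin of the normalized regularization path toward $\mar{p}$. You phrase step (ii) as a direct two-sided sandwich where the paper argues by contradiction, and you are more explicit about the antipodal case in the uniqueness argument and about the compactness step converting near-optimal margin into directional convergence, but these are presentational rather than substantive differences.
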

The proof of this result can be found in Appendix~\ref{sec:proof-reg-max-dir}.
Note that many commonly used losses in classification, e.g., logistic loss, have exponential tail.

\subsection{Asymptotic convergence rate}
\label{sec:asymp-result}
In this section, we characterize  the rate of convergence in Theorem~\ref{thm:primal-bias}.
Following the proof of Theorem~\ref{thm:primal-bias}, one can show the following   result in the case of linearly separable data.
\begin{corollary}
    \label{thm:convg-rate}
    The following rate of convergence holds:
    \[\brg{\reg{p}}{\frac{w_t}{\norm{w_t}_p}} \in O\left(\norm{w_t}_p^{-(p-1)}\right).\]
\end{corollary}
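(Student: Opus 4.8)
The plan is to revisit the proof of Theorem~\ref{thm:primal-bias} and keep the error terms \emph{quantitative} instead of merely showing they vanish. The starting point is the exact normalization identity already exploited there: writing $\tilde{w}_t := w_t/\norm{w_t}_p$ and using $\norm{\reg{p}}_p = 1$, $\psi(\tilde{w}_t) = \psi(\reg{p}) = \tfrac1p$, and $\inp{\nabla\psi(\tilde{w}_t)}{\tilde{w}_t} = \norm{\tilde{w}_t}_p^p = 1$, one gets $\brg{\reg{p}}{\tilde{w}_t} = 1 - \inp{\nabla\psi(\tilde{w}_t)}{\reg{p}}$; since $\nabla\psi$ is $(p-1)$-homogeneous for $\psi(\cdot) = \tfrac1p\norm{\cdot}_p^p$, this is
\[ \brg{\reg{p}}{\tilde{w}_t} = \frac{\norm{w_t}_p^{p-1} - \inp{\nabla\psi(w_t)}{\reg{p}}}{\norm{w_t}_p^{p-1}}. \]
Hence the corollary reduces to showing that the numerator $\norm{w_t}_p^{p-1} - \inp{\nabla\psi(w_t)}{\reg{p}}$ stays bounded, i.e.\ $O(1)$ uniformly in $t$ (equivalently, that the moving-target divergence $\brg{c_t\reg{p}}{w_t}$, with $c_t \approx \norm{w_t}_p$, grows no faster than $\norm{w_t}_p$).

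Next I would re-examine the telescoped lower bound on $\inp{\nabla\psi(w_t)}{\reg{p}}$ built in that proof. Summing the per-step identity $\inp{\nabla\psi(w_{s+1}) - \nabla\psi(w_s)}{\reg{p}} = -\eta\inp{\nabla L(w_s)}{\reg{p}}$, bounded below first via Corollary~\ref{thm:cross-term} (converting the $\reg{p}$ direction to the iterate direction at the price of a $1/\norm{w_s}_p$ factor and the $(1+\alpha)$ slack) and then via Lemma~\ref{thm:cross-term-diff} (lower-bounding $\inp{-\eta\nabla L(w_s)}{w_s} = \inp{\nabla\psi(w_{s+1}) - \nabla\psi(w_s)}{w_s}$ in terms of $\norm{w_{s+1}}_p,\norm{w_s}_p$ up to a $D_{\psi-\eta L}(w_{s+1},w_s)$ correction), yields $\inp{\nabla\psi(w_t)}{\reg{p}} \ge \norm{w_t}_p^{p-1} - E_t$, where $E_t$ gathers three remainders: (i) the per-step loss decrements $\eta(L(w_s) - L(w_{s+1}))$; (ii) the per-step divergences $D_{\psi-\eta L}(w_{s+1},w_s)$, equivalently $\brg{w_s}{w_{s+1}}$; and (iii) the ``discretization gap'' between the Riemann-type sum $\sum_s \norm{w_{s+1}}_p^{p-1}\big(\norm{w_{s+1}}_p - \norm{w_s}_p\big)$ and the quantity $\tfrac1p\norm{w_t}_p^p$ that telescopes to rebuild $\norm{w_t}_p^{p-1}$, together with the contribution of the $(1+\alpha)$ slack.

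The core of the argument is bounding $E_t = O(1)$. For (i), telescoping gives $\sum_{s\ge t_0}\eta\big(L(w_s)-L(w_{s+1})\big) \le \eta L(w_{t_0})$, and after the $1/\norm{w_{t_0}}_p$ reweighting this is a constant. For (ii), Lemma~\ref{thm:key-iden} with $w = w_s$ (the computation behind Lemma~\ref{thm:decreasing-lose}) gives $\brg{w_s}{w_{s+1}} + D_{\psi-\eta L}(w_{s+1},w_s) = \eta\big(L(w_s) - L(w_{s+1})\big)$, so these sum to at most $\eta L(w_{t_0})$ as well. For (iii), convexity of $r\mapsto\tfrac1p r^p$ makes the Riemann comparison one-sided, and one takes $c_s$ of the form $\norm{w_s}_p + K$ for a \emph{fixed} constant $K$ — legitimate on separable data because the threshold $r_\alpha$ of Lemma~\ref{thm:approx-reg-dir-loss} can be taken $O(1/\alpha)$ for exponential-tail losses, so $\alpha = K/\norm{w_s}_p$ is admissible once $\norm{w_s}_p$ is large — which makes the increments $c_{s+1}-c_s = \norm{w_{s+1}}_p - \norm{w_s}_p$ telescope cleanly; the residual is $\sum_s\big(\norm{w_{s+1}}_p - \norm{w_s}_p\big)$ weighted by lower-order powers of $\norm{w_s}_p$, which one controls using the \algname update $\norm{w_{s+1}}_p^{p-1} \le \norm{w_s}_p^{p-1} + \eta\norm{\nabla L(w_s)}_q$ and the fast decay of $\norm{\nabla L(w_s)}_q$ along a separating trajectory. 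Substituting $E_t = O(1)$ into the displayed identity gives $\brg{\reg{p}}{\tilde{w}_t} = O(\norm{w_t}_p^{-(p-1)})$.

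The main obstacle is term (iii): unlike (i) and (ii), the discretization error and the Corollary~\ref{thm:cross-term} slack do not telescope on their own, and a crude bound on their accumulation grows with $\norm{w_t}_p$ rather than staying $O(1)$. Making this step go through uses linear separability in an essential way — it is what forces $\nabla L(w_s)$ to shrink fast enough along the trajectory for the reweighted increments to remain summable — and it is the part that pins the exponent to $p-1$; this is also precisely why the corollary is stated in the separable case rather than in the generality of Theorem~\ref{thm:primal-bias}.
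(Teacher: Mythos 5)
Your skeleton is the same as the paper's: the proof given there is a one-line appeal to \eqref{equ:breg-expa} and \eqref{equ:cross-term-limit}, i.e., exactly your normalization identity $\brg{\reg{p}}{\tilde w_t} = 1 - \norm{w_t}_p^{-(p-1)}\inp{\nabla\psi(w_t)}{\reg{p}}$ combined with the telescoped lower bound \eqref{equ:mirror-cross-term} on $\inp{\nabla\psi(w_T)}{\reg{p}}$. Your diagnosis of where the difficulty sits is also accurate: for a \emph{fixed} $\alpha$, that machinery only yields $\brg{\reg{p}}{\tilde w_T}\le \tfrac{\alpha}{1+\alpha} + C_\alpha\,\norm{w_T}_p^{-(p-1)}$, and the constant floor $\tfrac{\alpha}{1+\alpha}$ must be removed before the stated rate follows; the paper's two-word proof glosses over exactly this point, and your items (i) and (ii) are handled correctly (both telescope to $\eta L(w_{t_0})$, and $D_{\psi-\eta L}(w_{s+1},w_s)$ in fact sits on the favorable side and can simply be dropped).

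The gap is in item (iii), which you yourself flag as the main obstacle: the removal of the floor is asserted rather than carried out, and the device you propose does not deliver the exponent $p-1$. First, the claim that $r_\alpha = O(1/\alpha)$ for exponential-tail losses is nowhere established --- Lemma~\ref{thm:approx-reg-dir-loss} is purely qualitative (it only asserts existence of $r_\alpha$), and a quantitative version requires a rate of convergence of the regularization path $\bar w_p(B)/B \to \reg{p}$, a separate estimate not proved in the paper. Second, even granting it and taking $c_s = \norm{w_s}_p + K$, the per-step slack introduced by Corollary~\ref{thm:cross-term} is of order $\tfrac{K}{\norm{w_s}_p}\bigl(\norm{w_{s+1}}_p^{p-1}-\norm{w_s}_p^{p-1}\bigr)$; summed over $s$ this behaves like $K\norm{w_T}_p^{p-2}$ for $p>2$ (and like $K\log\norm{w_T}_p$ for $p=2$), so after dividing by $\norm{w_T}_p^{p-1}$ you obtain only $O\bigl(\norm{w_T}_p^{-1}\bigr)$ rather than $O\bigl(\norm{w_T}_p^{-(p-1)}\bigr)$ whenever $p\ge 2$. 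Shrinking $\alpha_s$ faster to make the slack summable forces $r_{\alpha_s}$ to grow at least as fast as $\norm{w_s}_p$, violating the admissibility condition $\norm{w_s}_p > r_{\alpha_s}$. So the proposal closes the argument only for $1<p<2$; for $p\ge 2$ the claimed exponent is not reached by this route.
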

In order to fully understand the convergence rate, we need to characterize  the asymptotic behavior of $\norm{w_t}_p$.
The next result precisely does that. 
Recall that we assumed the dataset is bounded so that $\max_i \norm{x_i}_q \le C$ for $1/p + 1/q = 1$, and the max-margin direction $\mmd{p}$ satisfies $\inp{x_i}{\mmd{p}} \ge \hat{\gamma}_p \, \forall i \in [n]$.
Then, we have the following bound on $\norm{w_t}_p$.

\begin{lemma}
    \label{thm:norm-rate}
    For exponential loss $\ell(z) = \exp(-z)$, the asymptotic growth of $\norm{w_t}_p$ is contained in $\Theta(\log t)$.
    In particular, we have
    \[\liminf_{t\to\infty} \norm{w_t}_p \ge \frac{1}{C} (\log t - p \log\log t) + O(1) \text{ and } \limsup_{t\to\infty} \norm{w_t}_p \le \hat{\gamma}_p^{-1} \frac{p}{p-1} \log t.\]
\end{lemma}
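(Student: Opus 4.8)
\emph{Proof sketch.}
Both inequalities are driven by the telescoped update $\nabla\psi(w_T)-\nabla\psi(w_0)=-\eta\sum_{t<T}\nabla L(w_t)$ together with the three structural identities for $\psi(\cdot)=\tfrac1p\norm{\cdot}_p^p$ already noted in the paper: $\inp{\nabla\psi(w)}{w}=\norm{w}_p^p$, $\norm{\nabla\psi(w)}_q=\norm{w}_p^{p-1}$ (since $(p-1)q=p$), and $D_\psi(0,w)=\tfrac{p-1}{p}\norm{w}_p^p$. A common first step is the two-sided relation
\begin{equation*}
  \eta\hat{\gamma}_p\,\textstyle\sum_{s<t}L(w_s)-O(1)\ \le\ \norm{w_t}_p^{p-1}\ \le\ \eta C\,\textstyle\sum_{s<t}L(w_s)+O(1),
\end{equation*}
where the left side tests the telescoped update against $\mmd{p}$ (using $y_i\inp{x_i}{\mmd{p}}\ge\hat{\gamma}_p$, hence $-\inp{\nabla L(w_t)}{\mmd{p}}\ge\hat{\gamma}_p L(w_t)$, followed by H\"older $\inp{\nabla\psi(w_t)}{\mmd{p}}\le\norm{w_t}_p^{p-1}$), and the right side is the triangle inequality for $\norm{\nabla\psi(\cdot)}_q$ combined with $\norm{\nabla L(w)}_q\le C L(w)$.

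\textbf{Lower bound on $\norm{w_t}_p$.} By H\"older $\max_i y_i\inp{x_i}{w_t}\le C\norm{w_t}_p$, so the exponential loss satisfies $L(w_t)\ge\tfrac1n e^{-C\norm{w_t}_p}$. Since $L$ is monotonically decreasing (Lemma~\ref{thm:decreasing-lose}), $t\,L(w_t)\le\sum_{s<t}L(w_s)\le\tfrac1{\eta\hat{\gamma}_p}\bigl(\norm{w_t}_p^{p-1}+O(1)\bigr)$ by the relation above. Combining and taking logarithms gives $C\norm{w_t}_p+(p-1)\log\norm{w_t}_p\ge\log t+O(1)$; inverting the increasing map $x\mapsto Cx+(p-1)\log x$ yields $\norm{w_t}_p\ge\tfrac1C\bigl(\log t-(p-1)\log\log t\bigr)+O(1)\ge\tfrac1C\bigl(\log t-p\log\log t\bigr)+O(1)$.

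\textbf{Upper bound on $\norm{w_t}_p$.} Apply Lemma~\ref{thm:key-iden} with reference point $w=0$; using $D_\psi(0,w)=\tfrac{p-1}{p}\norm{w}_p^p$ it becomes
\begin{equation*}
  \tfrac{p-1}{p}\norm{w_{t+1}}_p^p=\tfrac{p-1}{p}\norm{w_t}_p^p-\eta\inp{\nabla L(w_t)}{w_t}+\eta\bigl(L(w_t)-L(w_{t+1})\bigr)-D_{\psi-\eta L}(w_{t+1},w_t).
\end{equation*}
Telescoping and dropping the nonnegative terms $D_{\psi-\eta L}(w_{t+1},w_t)$, $\tfrac{p-1}{p}\norm{w_T}_p^p\le O(1)+\eta\sum_{t<T}\bigl(-\inp{\nabla L(w_t)}{w_t}\bigr)$, with $-\inp{\nabla L(w_t)}{w_t}=\tfrac1n\sum_i z_i e^{-z_i}$ and $z_i:=y_i\inp{x_i}{w_t}$. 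Since $z\mapsto z e^{-z}$ decreases on $[1,\infty)$ and $z_i\ge\gamma(w_t)\ge1$ eventually (as $\gamma(w_t)\to\infty$), each term is $\le\gamma(w_t)e^{-\gamma(w_t)}$; moreover $\gamma(w_t)\le\hat{\gamma}_p\norm{w_t}_p$ always, while $\gamma(w_t)\ge(\hat{\gamma}_p-o(1))\norm{w_t}_p$ because $w_t/\norm{w_t}_p\to\mmd{p}$ by Theorem~\ref{thm:primal-bias}. This produces a self-referential inequality $\norm{w_T}_p^p\le O(1)+\tfrac{p}{p-1}\,\eta\hat{\gamma}_p\sum_{t<T}\norm{w_t}_p\,e^{-(\hat{\gamma}_p-o(1))\norm{w_t}_p}$ in $a_t:=\norm{w_t}_p$, which (comparing with $\int a^{p-2}e^{\hat{\gamma}_p a}\,da$) can be closed to give $a_T\le\tfrac{p}{(p-1)\hat{\gamma}_p}\log T$ in the limit, the prefactor $\tfrac{p}{p-1}$ coming from $D_\psi(0,\cdot)$. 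Together with the lower bound this gives $\norm{w_t}_p\in\Theta(\log t)$.

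\textbf{Main obstacle.} The crux is the circular dependence among the three quantities $\norm{w_t}_p$, the partial sums $\sum_{s<t}L(w_s)$, and the loss-decay rate $L(w_t)$: each bound one wants is most naturally stated in terms of the other two, so the upper bound in particular has to be closed as a self-referential estimate (or a bootstrap), and one must track carefully the $o(1)$ coming from the directional convergence $w_t/\norm{w_t}_p\to\mmd{p}$. A secondary difficulty, flagged by the paper as one of its technical contributions, is that for $p\neq2$ none of the standard Euclidean per-step estimates apply, so every increment must be routed through the \ref{equ:md} identity (Lemma~\ref{thm:key-iden}), the $p$-homogeneity \eqref{equ:homo}, and the dual-norm identities for $\nabla\psi$ — which is exactly why the exponents $p-1$ and $p$, rather than powers of $2$, appear in the final bounds.
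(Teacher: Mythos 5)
Your lower bound is a legitimate alternative route. The paper instead invokes the mirror-descent convergence rate $L(w_t)-L(w^\star)\le\frac{1}{\eta t}\brg{w^\star}{w_0}$ for a carefully chosen multiple of $\mmd{p}$, and must substitute the already-proved upper bound $\norm{w^\star}_p\lesssim \log t$ into $\brg{w^\star}{w_0}$ to extract the $-p\log\log t$ correction; your chain $tL(w_t)\le\sum_{s<t}L(w_s)\le\frac{1}{\eta\hat\gamma_p}\left(\norm{w_t}_p^{p-1}+O(1)\right)$, obtained from monotonicity of the loss plus pairing the telescoped update with $\mmd{p}$ and H\"older, is more elementary and does not need the upper bound as an input. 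Combined with $L(w_t)\ge\frac1n e^{-C\norm{w_t}_p}$ it gives the stated (indeed a slightly sharper) lower bound.

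The upper bound, however, has a genuine gap. The self-referential inequality $\frac{p-1}{p}\norm{w_T}_p^p\le O(1)+\eta\sum_{t<T}\gamma(w_t)e^{-\gamma(w_t)}$ does not close to $\norm{w_T}_p=O(\log T)$: for the right-hand side to be $O(\log^p T)$ you need $e^{-\gamma(w_t)}=O(\mathrm{polylog}(t)/t)$, i.e.\ $\gamma(w_t)\ge\log t-O(\log\log t)$, but the only lower bounds on the margin available at this stage are $\gamma(w_t)\ge(\hat\gamma_p-o(1))\norm{w_t}_p\ge\frac{\hat\gamma_p}{C}(1-o(1))\log t$, and since in general $\hat\gamma_p<C$ strictly, the sum is only $O\left(\log T\cdot T^{1-\hat\gamma_p/C+o(1)}\right)$ --- a polynomial, not logarithmic, bound on $\norm{w_T}_p$. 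The bootstrap you gesture at contracts the polynomial exponent toward zero but stalls there: $\norm{w_t}_p\le t^{\varepsilon}$ for all $\varepsilon>0$ still only yields $\gamma(w_t)\ge(1-o(1))\log t$, and $\sum_t t^{-1+o(1)}$ need not be polylogarithmic. (Also, your own comparison with $\int a^{p-2}e^{\hat\gamma_p a}\,da$ would produce the constant $\hat\gamma_p^{-1}$, not $\hat\gamma_p^{-1}\frac{p}{p-1}$, so the claimed prefactor does not emerge from the calculation you outline.) The idea you are missing is the paper's horizon-dependent reference point $w^\star\log T$ with $w^\star=\hat\gamma_p^{-1}\mmd{p}$ of margin exactly $1$: for exponential loss one then has the uniform-in-$t$ bound
\begin{equation*}
\inp{\nabla L(w_t)}{w^\star\log T-w_t}=\sum_{i}e^{-\inp{w^\star\log T}{x_i}}\,e^{-\inp{w_t-w^\star\log T}{x_i}}\inp{x_i}{w_t-w^\star\log T}\le \frac{n}{eT},
\end{equation*}
using $\inp{w^\star\log T}{x_i}\ge\log T$ and $xe^{-x}\le 1/e$, so the cross terms sum to $O(1)$ over $T$ steps with no knowledge of the decay rate of $L(w_t)$ or $\gamma(w_t)$. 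Applying Lemma~\ref{thm:key-iden}, dividing by $\log^p T$ via homogeneity, and expanding the Bregman divergence gives $\frac{p-1}{p}\norm{\tilde w}_p^p-\hat\gamma_p^{-1}\norm{\tilde w}_p^{p-1}\le o(1)$ for $\tilde w=w_T/\log T$, which is exactly where the constant $\hat\gamma_p^{-1}\frac{p}{p-1}$ comes from.
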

The proof of this lemma  can be found in Appendix~\ref{sec:proof-asymp-result}.
Similar conclusions can be reached for other losses with exponential tail.
Therefore, in such cases, \algname{} has poly-logarithmic rate of convergence.
\begin{corollary}
    \label{thm:final-convg-rate}
    For exponential loss, we have convergence rate 
    \[ \brg{\reg{p}}{\frac{w_t}{\norm{w_t}_p}} \in O\left(\frac{1}{\log^{p-1}(t)}\right).\]
\end{corollary}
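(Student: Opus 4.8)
The plan is to chain together the two results proved immediately above: the primal convergence rate of Corollary~\ref{thm:convg-rate}, which states $\brg{\reg{p}}{w_t/\norm{w_t}_p} = O(\norm{w_t}_p^{-(p-1)})$, and the lower bound on $\norm{w_t}_p$ from Lemma~\ref{thm:norm-rate}. Since the former already isolates the dependence on $\norm{w_t}_p$, the only work is to convert the $\liminf$ estimate on $\norm{w_t}_p$ into a decay rate in $t$.

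First I would extract from Lemma~\ref{thm:norm-rate} that, for the exponential loss, $\norm{w_t}_p \ge \frac{1}{C}(\log t - p \log\log t) + O(1)$ for all sufficiently large $t$. Because $\log\log t = o(\log t)$, the right-hand side grows like $\log t$; more precisely there are a constant $c > 0$ and a threshold $t_0$ with $\norm{w_t}_p \ge c \log t$ for every $t \ge t_0$. Then, using $p - 1 > 0$ so that $x \mapsto x^{-(p-1)}$ is decreasing on $(0,\infty)$, I get $\norm{w_t}_p^{-(p-1)} \le c^{-(p-1)} (\log t)^{-(p-1)}$ for $t \ge t_0$. Substituting into Corollary~\ref{thm:convg-rate} gives
\[
\brg{\reg{p}}{\frac{w_t}{\norm{w_t}_p}} = O\!\left(\norm{w_t}_p^{-(p-1)}\right) = O\!\left(\frac{1}{\log^{p-1}(t)}\right),
\]
which is exactly the claim. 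The same argument applies verbatim to any loss with an exponential tail, since the text notes that Lemma~\ref{thm:norm-rate} has an analogue in that generality and all the argument uses is $\norm{w_t}_p = \Omega(\log t)$.

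There is no real obstacle here beyond bookkeeping: the substantive estimates are Corollary~\ref{thm:convg-rate} and Lemma~\ref{thm:norm-rate}, both of which are available. The one point deserving a line of care is checking that the $-p\log\log t$ and $O(1)$ corrections in the lower bound on $\norm{w_t}_p$ do not destroy its $\Theta(\log t)$ behaviour, which follows at once from $\log\log t / \log t \to 0$.
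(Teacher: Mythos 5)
Your proposal is correct and matches the paper's intended argument: the corollary follows by combining the bound $\brg{\reg{p}}{w_t/\norm{w_t}_p} = O(\norm{w_t}_p^{-(p-1)})$ from Corollary~\ref{thm:convg-rate} with the lower bound $\norm{w_t}_p = \Omega(\log t)$ from Lemma~\ref{thm:norm-rate}, exactly as you describe. Your remark that the $-p\log\log t$ and $O(1)$ corrections are harmless because $\log\log t = o(\log t)$ is the only point needing care, and you have handled it.
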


\section{Experiments}
\label{sec:experiments}

In this section, we investigate the behavior and performance of \algname for various values of $p$.
We naturally pick $p = 2$ that corresponds to gradient descent, 
Because \algname does not directly support $p = 1$ and $\infty$, we choose $p = 1.1$ as a surrogate for $\ell_1$, and $p = 10$ as a surrogate for $\ell_\infty$.
We also consider $p = 1.5, 3, 6$ to interpolate these points.
This section will present a summary of our results; the complete experimental setup and full results can be found in Appendices~\ref{sec:experiment-detail} and \ref{sec:add-experiments}.

\subsection{Linear classification}
\label{sec:linear-classifier}
\begin{figure}
    \centering
    \begin{subfigure}[b]{0.32\textwidth}
        \centering
        \includegraphics[width=\textwidth]{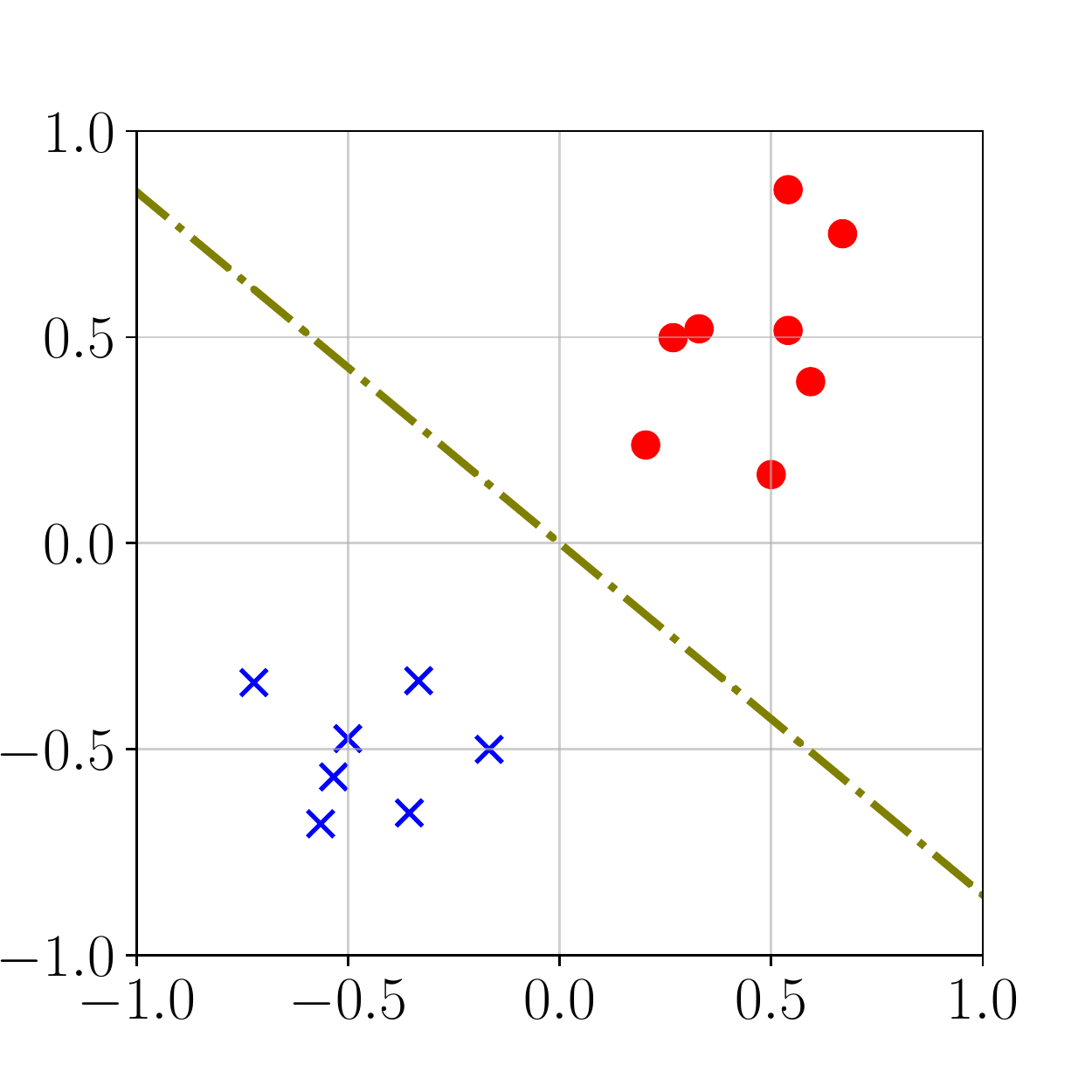}
    \end{subfigure}
    \begin{subfigure}[b]{0.32\textwidth}
        \includegraphics[width=\textwidth]{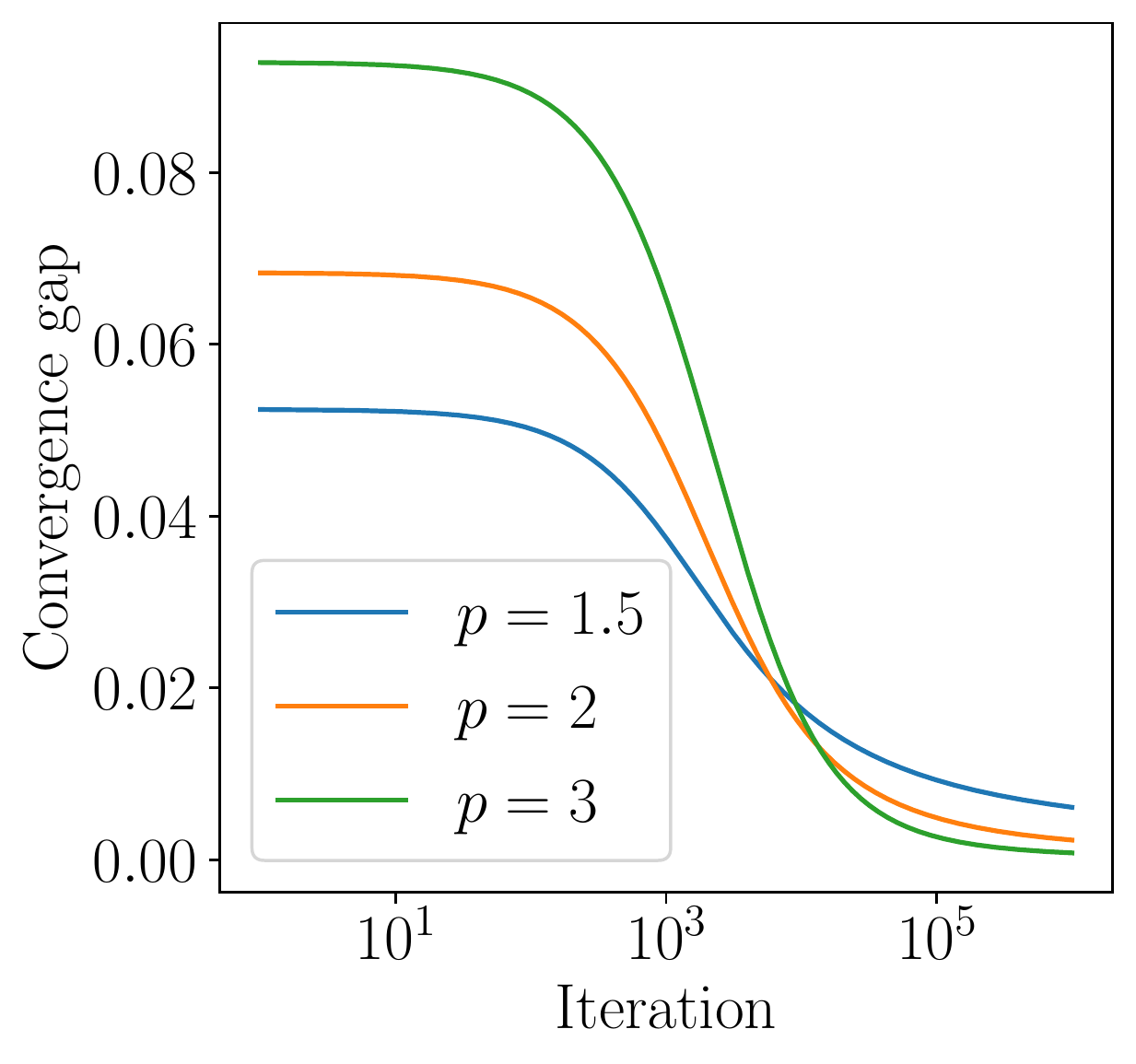}
    \end{subfigure}
    \begin{subfigure}[b]{0.32\textwidth}
        \includegraphics[width=\textwidth]{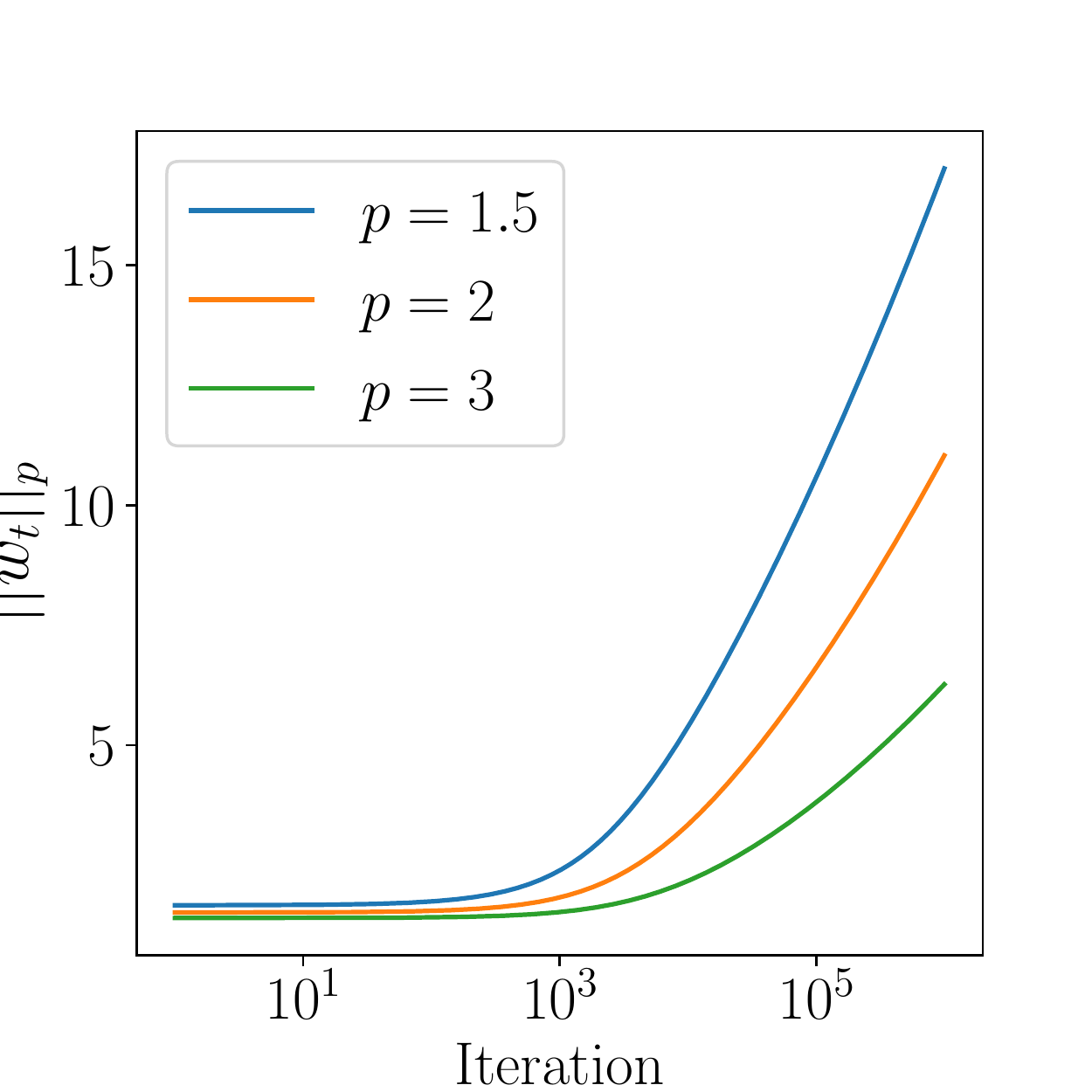}
    \end{subfigure}
    \caption{An example of \algname on randomly generated data with exponential loss and $p = 1.5, 2, 3$. 
    \textbf{(1)} The left plot is a scatter plot of the data: \textcolor{blue}{$\times$}'s and \textcolor{red}{$\bullet$}'s denote the two different labels ($y_i = \pm 1$). The dotted line is the $\ell_2$ max-margin classifier. For clarity, other $\ell_p$ max-margin classifiers are omitted from the plot.
    \textbf{(2)} The middle plot shows the rate which the quantity $\brg{\reg{p}}{w_t / \norm{w_t}_t}$ converges to 0.
    \textbf{(3)} The right plot shows how fast the $p$-norm of $w_t$ growths.
    We can observe that the asymptotic behaviors of these plots are consistent with Corollary~\ref{thm:final-convg-rate}.
    }
    \label{fig:synthetic-data}
\end{figure}

\paragraph{Visualization of the convergence of \algname.}
To visualize the results of Theorem~\ref{thm:primal-bias} and Corollary~\ref{thm:final-convg-rate}, we randomly generated a linearly separable set of 15 points in $\R^2$.
We then employed \algname on this dataset with exponential loss $\ell(z) = \exp(-z)$ and fixed step size $\eta = 10^{-4}$.
We ran this experiment for $p = 1.5, 2, 3$ and for $10^6$ iterations.

In the illustrations of Figure~\ref{fig:synthetic-data}, the mirror descent iterates $w_t$ have unbounded norm and converge in direction to $\mmd{p}$. 
These results are consistent with Lemma~\ref{thm:to-infinity} and with Theorem~\ref{thm:primal-bias}.
Moreover, as predicted by Corollary~\ref{thm:final-convg-rate}, the exact rate of convergence for $\brg{\mmd{p}}{w_t / \norm{w_t}_t}$ is poly-logarithmic with respect to the number of iterations.
Corollary~\ref{thm:final-convg-rate} also indicates that the convergence rate would be faster for larger $p$ due to the larger exponent, and this is consistent with our observation in the second plot of Figure~\ref{fig:synthetic-data}.
Finally, in the third plot of Figure~\ref{fig:synthetic-data}, the norm of the iterates $w_t$ grows at a logarithmic rate, which is the same as the prediction by Lemma~\ref{thm:norm-rate}.

\paragraph{Implicit bias of \algname in linear classification.}
We now verify the conclusions of Theorem~\ref{thm:primal-bias}.
To this end, we recall that $\mmd{p}$ is parallel to the SVM solution $\argmin_w \{\norm{w}_p : \gamma(w) \ge 1\}$.
Hence, we can exploit the linearity and rescale any classifier so that its margin is equal to $1$.
If the prediction of Theorem~\ref{thm:primal-bias} holds, then for each fixed value of $p$, the classifier generated by \algname should have the smallest $\ell_p$-norm after rescaling.

To ensure that $\mmd{p}$ are sufficiently different for different values of $p$, we simulate an over-parameterized setting by randomly select 15 points in $\RR^{100}$.
We used fixed step size of $10^{-4}$ and ran 250 thousand iterations for different $p$'s.

Table~\ref{tab:linear-bias} shows the results for $p = 1.1, 2, 3$ and 10; under each norm, we highlight the smallest classifier in \textbf{boldface}.
Among the four classifiers we presented, \algname with $p = 1.1$ has the smallest $\ell_{1.1}$-norm.
And similar conclusions hold for $p = 2, 3, 10$.
Although \algname converges to $\mmd{p}$ at a very slow rate, we are able to observe a very strong implicit bias of \algname classifiers toward their respective $\ell_p$ geometry in a highly over-parameterized setting.
This suggests we should be able to take advantage of the implicit regularization in practice and at a moderate computational cost.
Due to space constraints, we defer a more complete result with additional values of $p$ to Appendix~\ref{sec:add-experiment-synthetic}.
\begin{wraptable}[8]{r}{0.5\textwidth}  
\centering 
    \centering
{\small 
    \begin{tabular}{l| c|c|c|c}
    & $\ell_{1.1}$ & $\ell_{2}$ & $\ell_{3}$& $\ell_{10}$ \\
    \hline\hline
    $p=1.1$ & \textbf{5.670} & 1.659 & 1.100 & 0.698 \\
    $p=2$ & 6.447 & \textbf{1.273} & 0.710 & 0.393 \\
     $p=3$ & 7.618 & 1.345 & \textbf{0.691} & 0.318 \\
     $p=10$ & 9.086 & 1.520 & 0.742 & \textbf{0.281} \\
    \hline
    \end{tabular}
}
\caption{Size of the linear classifiers generated by \algname (after rescaling) in $\ell_{1.1}, \ell_2, \ell_3$ and $\ell_{10}$ norms.}
\label{tab:linear-bias} 
 \end{wraptable}
 
\subsection{Deep neural networks} 
\label{sec:cifar}
Going beyond linear models, we now investigate \algname{} in deep-learning settings in its impact on the structure of the learned model and potential implications on the generalization performance.
As we had discussed in Section \ref{sec:main-result}, {\bf the implementation of \algname{} is straightforward}; to illustrate simplicity of implementation, we provide code snippets in
Appendix~\ref{sec:practicality}. 
Thus, we are able to effectively experiment with the behaviors \algname in neural network training.
Specifically, we perform a set of experiments on the CIFAR-10 dataset~\citep{krizhevsky2009learning}.
We use the \textit{stochastic} version of \algname with different values of $p$.
We choose a variety of networks: \textsc{VGG}~\citep{simonyan2014very}, \textsc{ResNet}~\citep{he2016deep}, \textsc{MobileNet}~\citep{sandler2018mobilenetv2} and \textsc{RegNet}~\citep{radosavovic2020designing}.

\paragraph{Implicit bias of \algname in deep neural networks.}
Since the notion of margin is not well-defined in this highly nonlinear setting, we instead visualize the impacts of \algname's implicit regularization on the histogram of weights (in absolute value) in the trained model.

In Figure~\ref{fig:cifar10-hist}, we report the weight histograms of \textsc{ResNet-18} models trained under \algname with $p = 1.1, 2, 3$ and $10$.
Depending on $p$, we observe interesting differences between the histograms.
Note that the deep network is most sparse when $p=1.1$ as most weights clustered around $0$.
Moreover, comparing the maximum weights, one can see that the case of $p = 10$ achieves the smallest value. 
Another observation is that the network becomes denser as $p$ increases; for instance, there are more weights away from zero for the cases $p = 3, 10$. 
These overall tendencies are also observed for other deep neural networks; see Appendix~\ref{sec:add-experiment-cifar-bias}.

\begin{figure}
    \centering
    \begin{subfigure}[b]{0.48\textwidth}
        \centering
        \includegraphics[width=\textwidth]{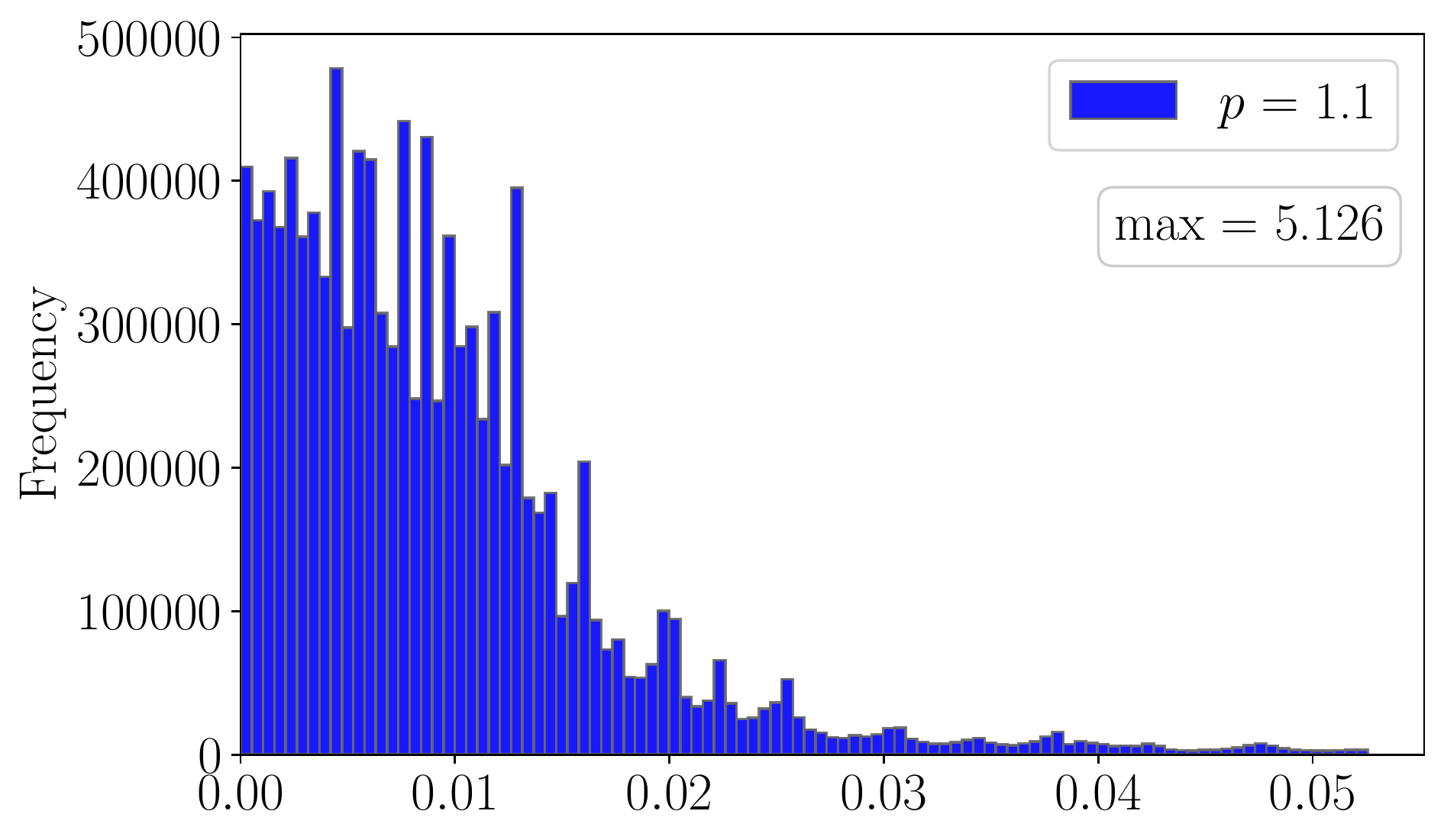}
    \end{subfigure}
    ~
    \begin{subfigure}[b]{0.48\textwidth}
        \includegraphics[width=\textwidth]{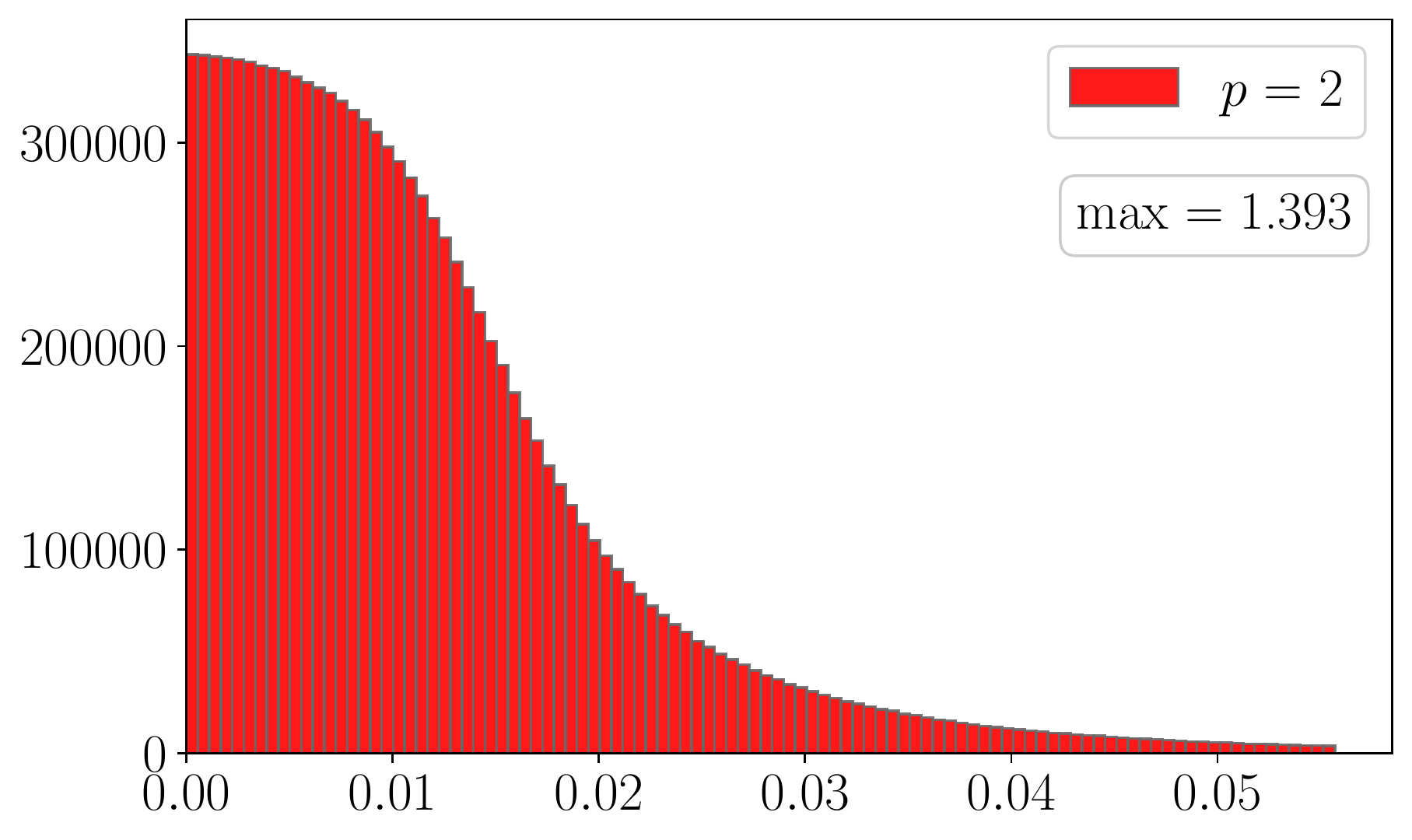}
    \end{subfigure}
    \begin{subfigure}[b]{0.48\textwidth}
        \includegraphics[width=\textwidth]{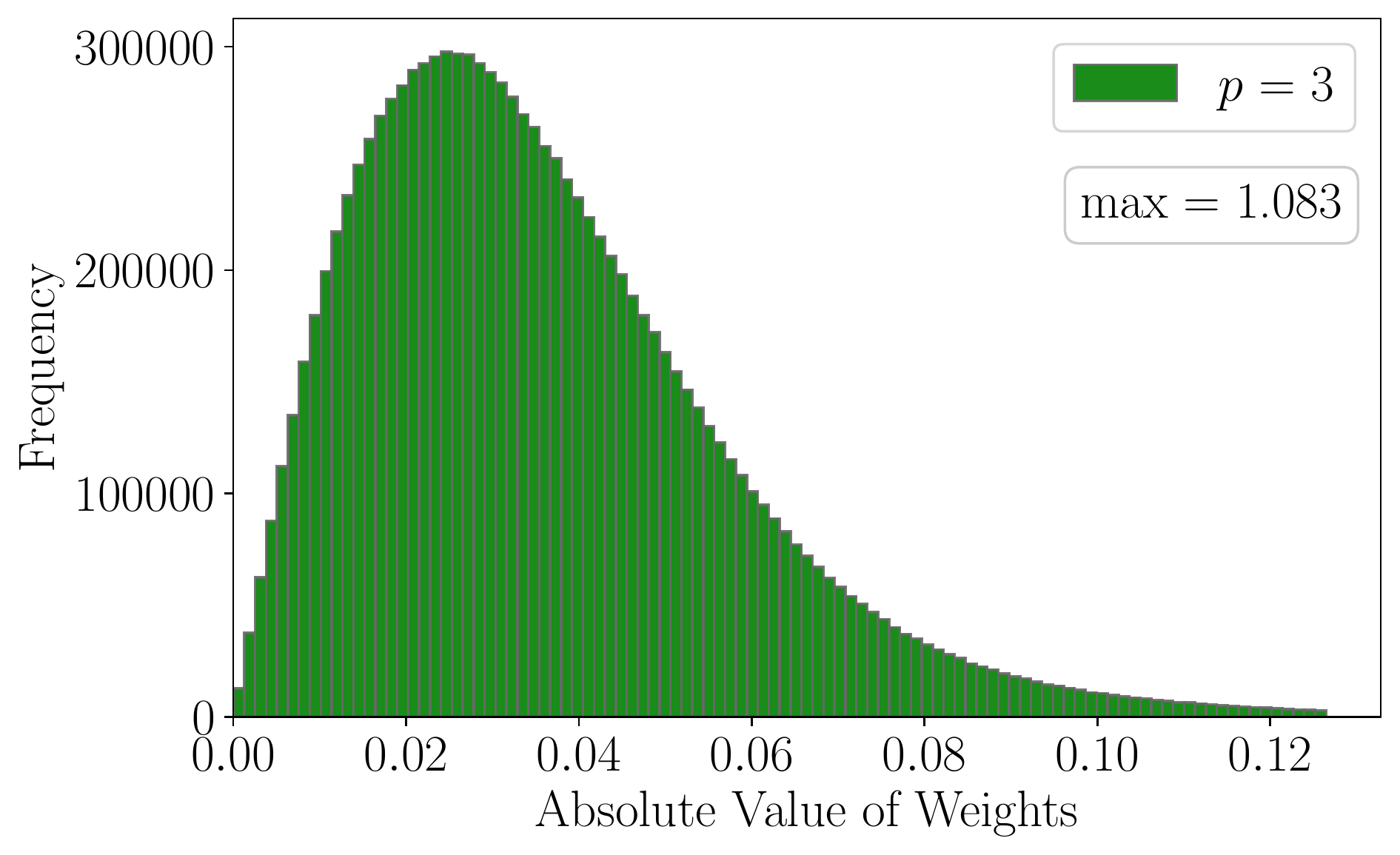}
    \end{subfigure}
    ~
    \begin{subfigure}[b]{0.48\textwidth}
        \includegraphics[width=\textwidth]{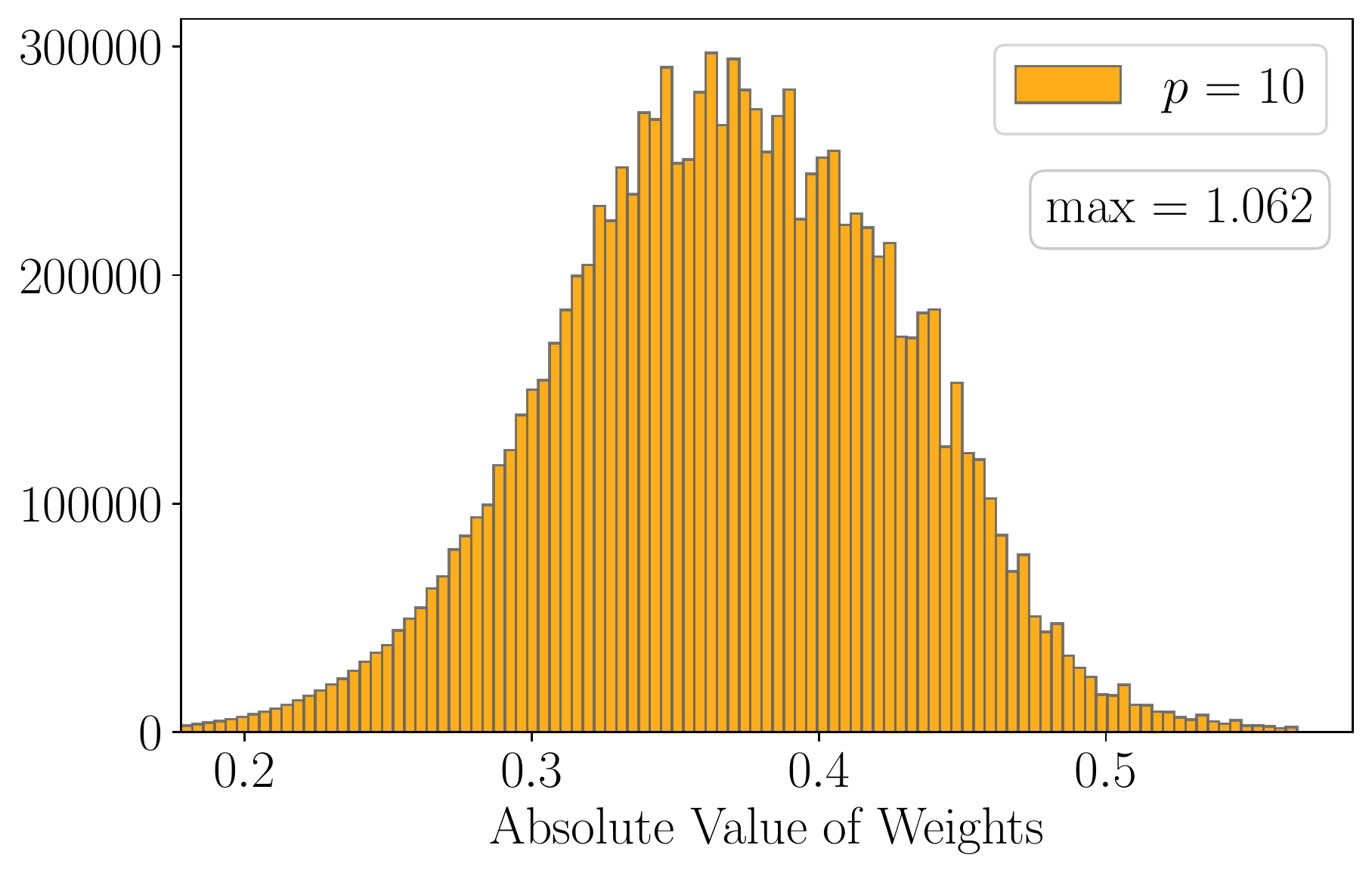}
    \end{subfigure}
    \caption{The histogram of weights in \textsc{ResNet-18} models trained with \algname for the CIFAR-10 dataset. 
    For clarity, we cropped out the tails and each plot has 100 bins after cropping.
    The trends of these histograms reflect the implicit biases of \algname: the distribution of $p = 1.1$ has the most number of weights around zero; and the maximum weight is smallest when $p = 10$.
    }
    \label{fig:cifar10-hist}
\end{figure}

\begin{table}[]
    \centering
    \setlength{\tabcolsep}{5.4pt}
    \begin{tabular}{l| c|c|c|c}
         \hline
         &  \hspace{1.5em} \textsc{VGG-11} \hspace{1.5em} & \hspace{0.75em} \textsc{ResNet-18} \hspace{0.75em} & \textsc{MobileNet-v2} & \textsc{RegNetX-200mf}  \\
         \hline \hline
         $p = 1.1$ & \pmval{88.19}{.17} & \pmval{92.63}{.12} & \pmval{91.16}{.09}& \pmval{91.21}{.18}  \\
         $p = 2$ (SGD) & \pmval{90.15}{.16} & \bpmval{93.90}{.14} & \pmval{91.97}{.10}& \pmval{92.75}{.13} \\
         $p = 3$ & \bpmval{90.85}{.15} & \bpmval{94.01}{.13} & \bpmval{93.23}{.26}& \bpmval{94.07}{.12} \\
         $p = 10$ & \pmval{88.78}{.37} & \pmval{93.55}{.21} & \pmval{92.60}{.22}& \pmval{92.97}{.16} \\
         \hline
    \end{tabular}
    \caption{CIFAR-10 test accuracy (\%) of \algname on various deep neural networks. For each deep network and value of $p$, the average $\pm$ \textcolor{gray}{std. dev.} over 5 trials are reported. And the best performing value(s) of $p$ for each individual deep network is highlighted in \textbf{boldface}.}
    \label{tab:generalization-cifar10}
\end{table}

\paragraph{Generalization performance.}
We next investigate the generalization performance of networks trained with different $p$'s.  
To this end, we adopt a fixed selection of hyper-parameters and then train four deep neural network models to 100\% training accuracy with  \algname with different $p$'s.
As Table~\ref{tab:generalization-cifar10} shows, interestingly the networks trained by \algname with $p = 3$ consistently outperform other choices of $p$'s; notably, for \textsc{MobileNet} and \textsc{RegNet}, the case of $p=3$ outperforms the others by more than 1\%.
Somewhat counter-intuitively, the sparser network trained by \algname with $p = 1.1$ does not exhibit better generalization performance, but rather shows worse generalization than other values of $p$.
\rebuttal{
Although these observations are not directly predicted by our theoretical results, we believe that they nevertheless establish an important step toward understanding generalization of overparameterized models.
}
Due to space limit, we defer other experimental results to Appendix~\ref{sec:add-experiment-cifar-generalization}.

\paragraph{\textsc{ImageNet} experiments.} We also perform a similar set of experiments on the {\sc ImageNet} dataset~\citep{russakovsky2015imagenet}, and these results can be found in Appendix \ref{sec:add-experiment-imagenet}.

\section{Conclusion and Future Work}
\label{sec:conclusion}
In this paper, we establish an important step towards better  understanding implicit bias in the classification setting,   by showing that  \algname converges in direction to the generalized  regularized/max-margin directions.
We also run several experiments to corroborate our main findings along with the practicality of \algname{}. 
The experiments are conducted in various settings: (i) linear models in both low and high dimensions, (ii) real-world data with highly over-parameterized nonlinear models.

We conclude this paper with several important future directions:
\vspace{-8pt}
\begin{list}{{\tiny $\blacksquare$}}{\leftmargin=1.5em}
  \setlength{\itemsep}{-1pt}
\item Our analysis holds for $\mir(\cdot) = \norm{\cdot}_p^p$, where we argued that this choice is key practical interest due to its efficient algorithmic implementations.
It is mathematically interesting to generalize our analysis to other potential functions regardless of practical interest.
\item As we discussed in Section \ref{sec:cifar}, different choices of $p$'s for our \algname algorithm result in different generalization performance. \rebuttal{It would be interesting to investigate this phenomenon and to develop theory that explains why certain values of $p$ lead to better generalization performance}. 
\item Another interesting question is to further investigate how practical techniques used in training neural networks (such as weight decay and adaptive learning rate) can affect the implicit bias and generalization properties of \algname. 
\end{list}

\section*{Acknowledgement}
The authors thank MIT UROP students Tiffany Huang and Haimoshri Das for contributing to the experiments in Section~\ref{sec:cifar}.
The authors acknowledge the MIT SuperCloud and Lincoln Laboratory Supercomputing Center for providing computing resources that have contributed to the results reported within this paper.
NA acknowledges support from the Edgerton Career Development Professorship. This work was supported in part by MIT and MathWorks.

\bibliographystyle{plainnat}
\bibliography{reference}

\clearpage

\section*{Checklist}

\begin{enumerate}

\item For all authors...
\begin{enumerate}
  \item Do the main claims made in the abstract and introduction accurately reflect the paper's contributions and scope?
    \answerYes{All of our claims accurately reflect the results in Sections~\ref{sec:main-result} and \ref{sec:experiments}. }
  \item Did you describe the limitations of your work?
    \answerYes{In Section~\ref{sec:conclusion}, we described several directions where we can improve this work. }
  \item Did you discuss any potential negative societal impacts of your work?
    \answerNA{Our paper investigates the foundational properties of mirror descent algorithms in learning; we do not believe there are any direct societal impacts.}
  \item Have you read the ethics review guidelines and ensured that your paper conforms to them?
    \answerYes{ }
\end{enumerate}

\item If you are including theoretical results...
\begin{enumerate}
  \item Did you state the full set of assumptions of all theoretical results?
    \answerYes{See Section~\ref{sec:priliminaries} for the assumptions we used and the motivation behind them. } 
        \item Did you include complete proofs of all theoretical results?
    \answerYes{All proofs are included in the Appendix, and we have referenced them as we present our claims. }
\end{enumerate}

\item If you ran experiments...
\begin{enumerate}
  \item Did you include the code, data, and instructions needed to reproduce the main experimental results (either in the supplemental material or as a URL)?
    \answerYes{They are included in the supplemental material.}
  \item Did you specify all the training details (e.g., data splits, hyperparameters, how they were chosen)?
    \answerYes{ We gave an overview of our training setup in Section~\ref{sec:experiments} and the full details are given in Appendix~\ref{sec:experiment-detail}. }
        \item Did you report error bars (e.g., with respect to the random seed after running experiments multiple times)?
    \answerYes{We reported the standard deviation whenever multiple trials were performed. }
        \item Did you include the total amount of compute and the type of resources used (e.g., type of GPUs, internal cluster, or cloud provider)?
    \answerYes{ They are reported in Appendix~\ref{sec:experiment-detail}. }
\end{enumerate}

\item If you are using existing assets (e.g., code, data, models) or curating/releasing new assets...
\begin{enumerate}
  \item If your work uses existing assets, did you cite the creators?
    \answerYes{}
  \item Did you mention the license of the assets?
    \answerNA{We only used public datasets.}
  \item Did you include any new assets either in the supplemental material or as a URL?
    \answerNA{We did not curate any new assets.}
  \item Did you discuss whether and how consent was obtained from people whose data you're using/curating?
    \answerNA{}
  \item Did you discuss whether the data you are using/curating contains personally identifiable information or offensive content?
    \answerNA{The datasets we used are well-known; so, we did not feel it was necessary to repeat that they do not contain sensitive information.}
\end{enumerate}

\item If you used crowdsourcing or conducted research with human subjects...
\begin{enumerate}
  \item Did you include the full text of instructions given to participants and screenshots, if applicable?
    \answerNA{There were no human subjects in our work.}
  \item Did you describe any potential participant risks, with links to Institutional Review Board (IRB) approvals, if applicable?
    \answerNA{}
  \item Did you include the estimated hourly wage paid to participants and the total amount spent on participant compensation?
    \answerNA{}
\end{enumerate}

\end{enumerate}


\clearpage

\appendix

\section{Proofs for Section~\ref{sec:priliminaries}}
\label{sec:proof-basic-lemmas}
\subsection{Proof of Lemma~\ref{thm:key-iden}}
The following proof is adopted from~\citep{azizan2021stochastic}.
We make several small modifications to better fit the classification setting in this paper.
In particular, in classification, there is no $w \in \RR^d$ that satisfies $L(w) = 0$.

\begin{proof}
We start with the definition of Bregman divergence:
\[\brg{w}{w_{t+1}}  = \psi(w) - \psi(w_{t+1}) - \inp{\nabla\psi(w_{t+1})}{w-w_{t+1}}.\]
Now, we plugin the \ref{equ:md} update rule $\nabla\psi(w_{t+1}) = \nabla\psi(w_t) - \eta\nabla L(w_t)$:
\[\brg{w}{w_{t+1}} = \psi(w) - \psi(w_{t+1}) - \inp{\nabla\psi(w_{t})}{w - w_{t+1}} + \eta \inp{\nabla L(w_t)}{w - w_{t+1}}. \]
We again invoke the definition of Bregman divergence so that:
\begin{align*}
    \brg{w}{w_{t+1}}  &= \psi(w) - \psi(w_{t+1}) - \inp{\nabla\psi(w_{t+1})}{w-w_{t+1}}, \\
    \brg{w_{t+1}}{w_t}  &= \psi(w_{t+1}) - \psi(w_t) - \inp{\nabla\psi(w_t)}{w_{t+1}-w_t}.
\end{align*}

It follows that
\begin{equation}
\label{equ:proof-key-iden-1}
    \begin{aligned}
        \brg{w}{w_{t+1}} 
        &= \psi(w) - \psi(w_{t}) - \inp{\nabla\psi(w_{t})}{w - w_{t}} \\
        &\hspace{7.25em}+ \inp{\nabla\psi(w_{t})}{w_{t+1} - w_t} - \psi(w_{t+1}) + \psi(w_t) \\
        &\hspace{7.25em}+ \eta \inp{\nabla L(w_t)}{w - w_{t+1}} \\
        &= \brg{w}{w_t} - \brg{w_{t+1}}{w_t} + \eta \inp{\nabla L(w_t)}{w - w_{t+1}}
    \end{aligned}
\end{equation}

Next, we consider the term $\inp{\nabla L(w_t)}{w - w_{t-1}}$:
\begin{equation}
    \label{equ:proof-key-iden-2}
    \begin{aligned}
        \inp{\nabla L(w_t)}{w - w_{t-1}}
        &= \inp{\nabla L(w_t)}{w - w_t} - \inp{\nabla L(w_t)}{w_{t+1} - w_t} \\
        &\hspace{8em}+ L(w_{t+1}) - L(w_t) - L(w_{t+1}) + L(w_t) \\
        &= \inp{\nabla L(w_t)}{w - w_t} + D_L(w_{t+1}, w_t) - L(w_{t+1}) + L(w_t),
    \end{aligned}
\end{equation}
where the last step holds because $L$ is convex.

Combining \eqref{equ:proof-key-iden-1} and \eqref{equ:proof-key-iden-2} yields:
\begin{align*}
    &\brg{w}{w_t} \\
    ={}& \brg{w}{w_{t+1}} + \brg{w_{t+1}}{w_t} -\eta\big( \inp{\nabla L(w_t)}{w - w_t} + D_L(w_{t+1}, w_t) - L(w_{t+1}) + L(w_t)\big)\\
    ={}& \brg{w}{w_{t+1}} + \breg{\psi-\eta L}{w_{t+1}}{w_t} - \eta  \inp{\nabla L(w_t)}{w - w_t} + \eta L(w_{t+1}) - \eta L(w_t),
\end{align*}
where in the last step, we note that Bregman divergence is additive in its potential.
This gives us \eqref{equ:key-iden-2}.
And for \eqref{equ:key-iden-1}, we use the definition of Bregman divergence again, i.e. $D_L(w, w_t) = L(w) - L(w_t) - \inp{\nabla L (w_t)}{w - w_t}$:
\begin{align*}
    \brg{w}{w_t}
    &= \brg{w}{w_{t+1}} + \breg{\psi-\eta L}{w_{t+1}}{w_t} - \eta  \inp{\nabla L(w_t)}{w - w_t} \\
    &\hspace{8em}  + \eta L(w) - \eta L(w_t)  + \eta L(w_{t+1}) - \eta L(w) \\
    &= \brg{w}{w_{t+1}} + \breg{\psi-\eta L}{w_{t+1}}{w_t} + \eta  D_L(w, w_t) - \eta L(w)  + \eta L(w_{t+1}) \\
\end{align*}

\end{proof}

\subsection{Proof of Lemma~\ref{thm:decreasing-lose}}
\begin{proof}
This is an application of Lemma~\ref{thm:key-iden} with $w = w_t$:
\begin{align*}
    0 &= D_\psi(w_t, w_{t+1}) + D_{\psi - \eta L}(w_{t+1}, w_t) - \eta L(w_t) + \eta L(w_{t+1}) \\
    \implies \eta L(w_t) &= D_\psi(w_t, w_{t+1}) + D_{\psi - \eta L}(w_{t+1}, w_t) + \eta L(w_{t+1}) \ge \eta L(w_{t+1})
\end{align*}
where we used the fact that Bregman divergence with a convex potential function is non-negative.
\end{proof}

\subsection{Proof of Lemma~\ref{thm:to-infinity}}
\begin{proof}
By Lemma~\ref{thm:decreasing-lose}, $L(w_t)$ is decreasing with respect to $t$, therefore the limit exists.
Suppose the contrary that $\lim_{t\to\infty} L(w_t) = \varepsilon > 0$.
Since the data is separable, we can pick $w$ so that $L(w) \le \varepsilon / 2$.
Applying Lemma~\ref{thm:key-iden}, the following holds for all $t$:
\begin{align*}
    D_\psi(w, w_{t+1}) 
    &= D_\psi(w, w_{t}) - D_{\psi - \eta L}(w_{t+1}, w_t) - \eta D_{L}(w, w_t) + \eta L(w) - \eta L(w_{t+1}) \\
    &\le D_\psi(w, w_{t}) + \eta\varepsilon/2 - \eta\varepsilon = D_\psi(w, w_{t}) - \eta\varepsilon/2
\end{align*}
Hence, $D_\psi(w, w_{t}) \le D_\psi(w, w_0) - t\eta\varepsilon / 2$.
This implies that $\limsup_{t\to\infty} D_\psi(w, w_{t}) = -\infty$, contradiction.
\end{proof}

\section{Proofs for Section~\ref{sec:main-result}}
\subsection{Proof of Lemma~\ref{thm:approx-reg-dir-loss}}
\begin{proof}
Let $\bar{\gamma}$ be the margin of $\reg{p}$.
Under separability, we know $\bar{\gamma} > 0$.
Recall the definition of the regularization path.
There exists sufficiently large $r_\alpha$ so that 
\[ \norm{\frac{\bar{w}_p(\norm{w}_p)}{\norm{w}_p} - \reg{p}}_p \le \frac{\alpha \bar{\gamma}}{C} \]
whenever $\norm{w}_p \ge r_\alpha$.
Recall the definition that $C = \max_{i = 1, \dots, n} \norm{x_i}_q, 1/p + 1/q = 1$.
Then, for all $i \in [n]$, we have
\begin{align*}
    y_i \inp{\bar{w}(\norm{w}_p)}{x_i}
    &= y_i \inp{\bar{w}(\norm{w}_p) - \norm{w}_p\reg{p}}{x_i} + y_i \inp{\norm{w}_p\reg{p}}{x_i} \\
    &\le \alpha \bar{\gamma} \norm{w}_p \norm{x_i}_q / C + y_i \inp{\norm{w}_p\reg{p}}{x_i} \\
    &\le \alpha \bar{\gamma} \norm{w}_p + y_i \inp{\norm{w}_p\reg{p}}{x_i} \\
    &\le y_i \inp{(1+\alpha) \norm{w}_p \reg{p}}{x_i}
\end{align*}
Since the loss $L$ is decreasing, we have
\[L((1+\alpha)\norm{w}_p\reg{p}) \le L(\bar{w}(\norm{w}_p)) \le L(w). \]
\end{proof}

\subsection{Lower bounding the mirror descent updates}
\label{sec:cross-term-diff}
\begin{lemma}
\label{thm:cross-term-diff}
For $\psi(\cdot)  = \frac{1}{p} \norm{\cdot}_p^p$ with $p > 1$, the mirror descent update satisfies the following inequality:
\begin{equation}
    \frac{p-1}{p} \norm{w_{t+1}}_p^p - \frac{p-1}{p} \norm{w_{t}}_p^p + \eta L (w_{t+1}) - \eta L (w_{t}) \le \inp{-\eta\nabla L (w_{t})}{w_t}
\end{equation}
\end{lemma}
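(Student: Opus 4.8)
The plan is to apply the mirror descent identity of Lemma~\ref{thm:key-iden} with the special reference point $w = 0$, and then exploit the structure of our potential $\psi(\cdot) = \frac{1}{p}\norm{\cdot}_p^p$. First I would instantiate \eqref{equ:key-iden-2} at $w = 0$, noting that $-\eta\inp{\nabla L(w_t)}{0 - w_t} = \eta\inp{\nabla L(w_t)}{w_t}$, which yields
\[
\brg{0}{w_t} = \brg{0}{w_{t+1}} + \breg{\psi - \eta L}{w_{t+1}}{w_t} + \eta\inp{\nabla L(w_t)}{w_t} - \eta L(w_t) + \eta L(w_{t+1}).
\]

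Next I would evaluate $\brg{0}{w}$ for our potential. Directly from the definition of the Bregman divergence, $\brg{0}{w} = \psi(0) - \psi(w) - \inp{\nabla\psi(w)}{0 - w} = -\tfrac{1}{p}\norm{w}_p^p + \inp{\nabla\psi(w)}{w}$. Invoking the identity $\inp{\nabla\psi(w)}{w} = \norm{w}_p^p$ derived in Section~\ref{sec:main-result}, this simplifies to $\brg{0}{w} = \tfrac{p-1}{p}\norm{w}_p^p$. Substituting this expression for both $\brg{0}{w_t}$ and $\brg{0}{w_{t+1}}$ above and rearranging terms, I obtain
\[
\frac{p-1}{p}\norm{w_{t+1}}_p^p - \frac{p-1}{p}\norm{w_t}_p^p + \eta L(w_{t+1}) - \eta L(w_t) = -\eta\inp{\nabla L(w_t)}{w_t} - \breg{\psi - \eta L}{w_{t+1}}{w_t}.
\]

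Finally, since the step size $\eta$ is chosen small enough that $\psi - \eta L$ is convex (as assumed throughout; cf.\ Lemma~\ref{thm:decreasing-lose}), the Bregman divergence $\breg{\psi - \eta L}{w_{t+1}}{w_t}$ is non-negative, so dropping it gives
\[
\frac{p-1}{p}\norm{w_{t+1}}_p^p - \frac{p-1}{p}\norm{w_t}_p^p + \eta L(w_{t+1}) - \eta L(w_t) \le -\eta\inp{\nabla L(w_t)}{w_t} = \inp{-\eta\nabla L(w_t)}{w_t},
\]
which is the claimed inequality. The argument is short: the only real insight is choosing $w = 0$ as the reference vector and recognizing that, for this particular potential, $\brg{0}{w}$ collapses to the clean multiple $\tfrac{p-1}{p}\norm{w}_p^p$ of the $p$-norm power; after that, everything reduces to the non-negativity of a Bregman divergence. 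I expect no substantive obstacle beyond keeping the signs of the cross term straight when specializing \eqref{equ:key-iden-2}.
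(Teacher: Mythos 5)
Your proposal is correct and follows essentially the same route as the paper's proof: both instantiate the \ref{equ:md} identity of Lemma~\ref{thm:key-iden} at the reference point $w=0$, use that $\brg{0}{w} = \inp{\nabla\psi(w)}{w} - \psi(w) = \tfrac{p-1}{p}\norm{w}_p^p$ for this potential, and drop the non-negative term $\breg{\psi-\eta L}{w_{t+1}}{w_t}$. The only cosmetic difference is that you start from \eqref{equ:key-iden-2} while the paper starts from \eqref{equ:key-iden-1} and expands $\brgl{0}{w_t}$, which amounts to the same algebra.
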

\begin{proof}
This result follows from Lemma~\ref{thm:key-iden} with $w = 0$:
\begin{align*}
    \brg{0}{w_t}
    &= \brg{0}{w_{t+1}} + \breg{\psi-\eta L}{w_{t+1}}{w_t} + \eta \brgl{0}{w_t}+ \eta L(w_{t+1}) - \eta L(0) \\
    &\ge \brg{0}{w_{t+1}} + \eta \brgl{0}{w_t}+ \eta L(w_{t+1}) - \eta L(0) \\
    &= \brg{0}{w_{t+1}} + \eta(L(0) - L(w_t) - \inp{\nabla L(w_t)}{-w_t}) + \eta L(w_{t+1}) - \eta L(0) \\
    &= \brg{0}{w_{t+1}} + \eta \inp{\nabla L(w_t)}{w_t} + \eta L(w_{t+1}) - \eta L(w_t)
\end{align*}
Rearranging the terms yields
\[ \brg{0}{w_{t+1}} - \brg{0}{w_{t}} + \eta L (w_{t+1}) - \eta L (w_{t}) \le \inp{-\eta\nabla L (w_{t})}{w_t} \]
We conclude the proof by noting that for any $w \in \RR^d$, 
\[\brg{0}{w}  = \psi(0) - \psi(w) - \inp{\nabla\psi(w)}{-w} = \inp{\nabla\psi(w)}{w} - \psi(w) = \frac{p-1}{p} \norm{w}_p^p \]
\end{proof}

\subsection{Proof of Theorem~\ref{thm:primal-bias}}
\label{sec:proof-primal-bias}
\begin{proof}
Consider arbitrary $\alpha \in (0, 1)$ and define $r_\alpha$ according to Lemma~\ref{thm:approx-reg-dir-loss}.
Since $\lim_{t\to\infty}\norm{w_t}_p = \infty$, we can find $t_0$ so that $\norm{w_t}_p > \max(1, r_\alpha)$ for all $t \ge t_0$.
Let $c_t = (1+\alpha)\norm{w_t}_p$.

We list some properties about $\psi(\cdot) = \frac{1}{p}\norm{\cdot}_p^p$ that will be useful in our analysis: 
\begin{align*}
 \nabla \psi (w) &= (\sgn(w_1)|w_1|^{p-1},\cdots, \sgn(w_d)|w_d|^{p-1})\\
 \inp{\nabla \psi (w)}{w} &= \sgn(w_1)w_1|w_1|^{p-1} + \cdots + \sgn(w_d)w_d|w_d|^{p-1} = \norm{w}_p^p\\
     \norm{\nabla\psi(w)}_q  &= \norm{w}_p^{p-1}, \text{ for } 1/p+1/q = 1 \\
  \brg{c w}{c w'} &= |c|^p \brg{w}{w'} \quad \forall c\in \R.
\end{align*}

Substitute $w = c_t \reg{p}$ into Lemma~\ref{thm:key-iden}, we get
\[\brg{c_t\reg{p}}{w_{t+1}} \le \brg{c_t\reg{p}}{w_t} + \eta\inp{\nabla L(w_t)}{c_t \reg{p} - w_t} - \eta L(w_{t+1}) + \eta L(w_t).\]
By Corollary~\ref{thm:cross-term}, we have $\inp{\nabla L(w_t)}{c_t \reg{p} - w_t} \le 0$.
Therefore,
\[\brg{c_t\reg{p}}{w_{t+1}} \le \brg{c_t\reg{p}}{w_t} - \eta L(w_{t+1}) + \eta L(w_t).\]

It follows that
\begin{align*}
    &\brg{c_{t+1}\reg{p}}{w_{t+1}} \\
    \le{}& \brg{c_t\reg{p}}{w_t} - \eta L(w_{t+1}) + \eta L(w_t) + \brg{c_{t+1}\reg{p}}{w_{t+1}} - \brg{c_t\reg{p}}{w_{t+1}} \\
    ={}& \brg{c_t\reg{p}}{w_t} - \eta L(w_{t+1}) + \eta L(w_t) + \psi(c_{t+1} \reg{p}) - \psi(c_t \reg{p}) - \inp{\nabla\psi(w_{t+1})}{(c_{t+1} - c_t) \reg{p}}
\end{align*}
Summing over $t = t_0, \dots, T-1$ gives us
\begin{align}
    \brg{c_T\reg{p}}{w_T}
    &\le \brg{c_{t_0}\reg{p}}{w_{t_0}} - \eta L(w_{t_0}) + \eta L(w_T) + \psi(c_{T} \reg{p}) - \psi(c_{t_0} \reg{p}) \nonumber\\
    &\quad\quad- \sum_{t=t_0}^{T-1}\inp{\nabla\psi(w_{t+1})}{(c_{t+1} - c_t) \reg{p}} \label{equ:succ-cross-term}
\end{align}

Now we want to establish a lower bound on the last term of \eqref{equ:succ-cross-term}.
To do so, we inspect the change in $\nabla\psi(w_t)$ from each successive mirror descent update:
\begin{subequations}
\begin{align}
    &\inp{\nabla\psi(w_{t+1}) - \nabla\psi(w_t)}{\reg{p}} \\
    ={}& \inp{-\eta\nabla L(w_{t})}{\reg{p}} \\
    \ge{}& \frac{1}{(1+\alpha)\norm{w_t}_p} \inp{-\eta\nabla L(w_t)}{w_t} \label{equ:cross-term-expansion-L1}\\
    \ge{}& \frac{1}{(1+\alpha)\norm{w_t}_p} \left(\frac{p-1}{p} \norm{w_{t+1}}_p^p - \frac{p-1}{p} \norm{w_{t}}_p^p + \eta L(w_{t+1}) - \eta L(w_{t})\right) \label{equ:cross-term-expansion-L2} \\
    \ge{}& \frac{1}{(1+\alpha)\norm{w_t}_p} \left(\frac{p-1}{p} \norm{w_{t+1}}_p^p - \frac{p-1}{p} \norm{w_{t}}_p^p\right) + \eta L(w_{t+1}) - \eta L(w_{t}) \label{equ:cross-term-expansion-L3}
\end{align}
\end{subequations}
where we applied Corollary~\ref{thm:cross-term} on \eqref{equ:cross-term-expansion-L1} and Lemma~\ref{thm:cross-term-diff} on \eqref{equ:cross-term-expansion-L2}.

Now we bound \eqref{equ:cross-term-expansion-L3}.
We claim the following identity and defer its derivation to Section~\ref{sec:main-thm-aux-pow}.
\begin{equation}
    \label{equ:p-norm-diff}
    \frac{p-1}{p} (\norm{w_{t+1}}_p^p - \norm{w_t}_p^p) \ge (\norm{w_{t+1}}_p^{p-1} - \norm{w_{t}}_p^{p-1}) \norm{w_t}_p.
\end{equation}

We are left with
\begin{equation*} \inp{\nabla\psi(w_{t+1}) - \nabla\psi(w_t)}{\reg{p}} \ge \frac{\norm{w_{t+1}}_p^{p-1} - \norm{w_{t}}_p^{p-1}}{1+\alpha} + \eta L(w_{t+1}) - \eta L(w_{t}).
\end{equation*}

Summing over $t = t_0, \dots, T-1$ gives us
\begin{equation}\label{equ:mirror-cross-term}
\inp{\nabla\psi(w_{T}) - \nabla\psi(w_{t_0})}{\reg{p}} \ge \frac{\norm{w_T}_p^{p-1} - \norm{w_{t_0}}_p^{p-1}}{1+\alpha} + \eta L(w_{T}) - \eta L(w_{t_0}).
\end{equation}

With \eqref{equ:mirror-cross-term}, we can bound the last term of \eqref{equ:succ-cross-term} as follows:
\begin{align}
    \sum_{t=t_0}^{T-1}\inp{\nabla\psi(w_{t+1})}{(c_{t+1} - c_t) \reg{p}} \nonumber
    &\ge \sum_{t=t_0+1}^{T} \frac{\norm{w_t}_p^{p-1} + O(1)}{1+\alpha}(c_t - c_{t-1}) \nonumber\\
    &= \sum_{t=t_0+1}^{T} (\norm{w_t}_p^{p-1} + O(1))(\norm{w_t}_p - \norm{w_{t-1}}_p) \nonumber\\
    &\ge \sum_{t=t_0+1}^{T} \frac{1}{p}(\norm{w_t}_p^{p} - \norm{w_{t-1}}_p^p) + O(1) \cdot (\norm{w_T}_p - \norm{w_{t_0}}_p)\nonumber\\
    &= \frac{1}{p} \norm{w_T}_p^p + O(\norm{w_T}_p) \label{equ:cross-telescoping}
\end{align}
where we defer the computation on the last inequality to Section~\ref{sec:main-thm-aux-pow}.

We now apply the inequality in \eqref{equ:cross-telescoping} to \eqref{equ:succ-cross-term}.
Note that $\psi(c_T \reg{p}) = \frac{1}{p}(1+\alpha)^p\norm{w_T}_p^p$.
We now have the following:
\[ \brg{(1+\alpha)\norm{w_T}_p \reg{p}}{w_T} \le \frac{1}{p} \norm{w_T}_p^p ((1+\alpha)^p - 1) + O(\norm{w_T}_p).\]
After applying homogeneity of Bregman divergence, and recalling that $\alpha = \frac{\varepsilon}{1-\varepsilon}$, we have
\[ \brg{\reg{p}}{(1-\varepsilon)\frac{w_T}{\norm{w_T}_p}} \le \frac{\frac{1}{p} \norm{w_T}_p^p (1 - (1-\varepsilon)^{p})}{\norm{w_T}_p^p} + o(1).\]
Let $\tilde{w}_T = \frac{w_T}{\norm{w_T}_p}$.
We note that Bregman divergence in fact satisfies the Law of Cosine:
\begin{lemma}[Law of Cosine]
\label{thm:breg-loc}
\begin{equation*}
    \brg{w}{w'} = \brg{w}{w''} + \brg{w''}{w'} - \inp{\nabla\psi(w') - \nabla\psi(w'')}{w - w''}
\end{equation*}
\end{lemma}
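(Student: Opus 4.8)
The plan is to prove this identity — which is the classical three-point identity for Bregman divergences — by direct expansion from the definition, since nothing beyond differentiability of $\psi$ is required. First I would write out each of the three Bregman divergences appearing in the claim using $\brg{x}{y} = \psi(x) - \psi(y) - \inp{\nabla\psi(y)}{x-y}$: that is, I expand $\brg{w}{w'}$, $\brg{w}{w''}$, and $\brg{w''}{w'}$ separately.

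Next I would add the expansions of $\brg{w}{w''}$ and $\brg{w''}{w'}$. The $\pm\psi(w'')$ terms cancel, so the sum equals $\psi(w) - \psi(w') - \inp{\nabla\psi(w'')}{w-w''} - \inp{\nabla\psi(w')}{w''-w'}$. Subtracting this from the expansion of $\brg{w}{w'}$ removes the $\psi(w) - \psi(w')$ part entirely, leaving only a combination of inner products in $\nabla\psi(w')$ and $\nabla\psi(w'')$.

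Finally I would regroup those inner products: the $\nabla\psi(w')$ contributions are $-\inp{\nabla\psi(w')}{w-w'} + \inp{\nabla\psi(w')}{w''-w'} = -\inp{\nabla\psi(w')}{w-w''}$, and adding the remaining $\inp{\nabla\psi(w'')}{w-w''}$ term yields exactly $-\inp{\nabla\psi(w') - \nabla\psi(w'')}{w-w''}$, which is the claimed correction term. This is a purely algebraic manipulation, so I do not anticipate any real obstacle — the only care needed is bookkeeping of signs when collecting the inner-product terms, and noting that the identity holds for \emph{any} differentiable potential and in particular does not rely on strict convexity or the specific choice $\psi(\cdot) = \frac1p\norm{\cdot}_p^p$ used elsewhere in the paper.
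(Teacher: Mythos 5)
Your proof is correct: the direct expansion of the three Bregman divergences, cancellation of the $\pm\psi(w'')$ and $\psi(w)-\psi(w')$ terms, and regrouping of the inner products is exactly the standard verification of this three-point identity, and your sign bookkeeping checks out. The paper itself states Lemma~\ref{thm:breg-loc} without proof (treating it as a known fact), so your argument simply supplies the routine algebra the authors omitted; your observation that only differentiability of $\psi$ is needed is also accurate.
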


Therefore,

\begin{equation}
    \label{equ:final-limit}
    \begin{aligned}
    \brg{\reg{p}}{\tilde{w}_T} 
    &\le \frac{\frac{1}{p} \norm{w_T}_p^p (1 - (1-\varepsilon)^{p})}{\norm{w_T}_p^p} + \brg{(1-\varepsilon)\tilde{w}_T}{\tilde{w}_T} \\
    &\hspace{6em} - \inp{\nabla\psi(\tilde{w}_T) - \nabla\psi((1-\varepsilon)\tilde{w}_T)}{\reg{p} - (1-\varepsilon)\tilde{w}_T} + o(1) \\
    &\le \frac{1}{p} (1 - (1-\varepsilon)^{p}) + \frac{1}{p}((1-\varepsilon)^p - 1) + \varepsilon + 2d^{1/p}(1 - (1-\varepsilon)^p) + o(1)
    \end{aligned}
\end{equation}
And we defer the computation for the last inequality to Section~\ref{sec:main-thm-aux-pow}.
Taking the limit as $T \to \infty$ and $\varepsilon \to 0$, we have that
\begin{equation}
    \begin{aligned}
    \limsup_{T\to\infty} \brg{\reg{p}}{\frac{w_T}{\norm{w_T}_p}}
    &\le \varepsilon + 2d^{1/p}(1 - (1-\varepsilon)^p)
    \end{aligned}
\end{equation}
Note that the RHS vanishes in the limit as $\varepsilon \to 0$.
Since the choice of $\varepsilon$ is arbitrary, we have $w_T/\norm{w_T}_p \to \reg{p}$ as $T \to\infty$.

\end{proof}

\subsection{Auxiliary Computation for Section~\ref{sec:proof-primal-bias}}
\label{sec:main-thm-aux-pow}

To show \eqref{equ:p-norm-diff}, we claim that for $\delta \ge -1$ and $p > 1$, we have 
\[ \frac{p-1}{p} ((1+\delta)^p - 1) \ge (1+\delta)^{p-1} - 1. \]
Note that we equality when $\delta=0$, and now we consider the first derivative:
\[\frac{d}{d\delta} \left\{\frac{p-1}{p} ((1+\delta)^p - 1) - (1+\delta)^{p-1} + 1 \right\} = (p-1)\delta(1+\delta)^{p-2},\]
which is negative when $\delta \in [-1, 0)$ and positive when $\delta > 0$, so this identity holds.
Now, \eqref{equ:p-norm-diff} follows from setting $\delta = (\norm{w_{t+1}}_p - \norm{w_t}_p)/\norm{w_{t}}_p$ and then multiplying by $\norm{w_t}_p^p$ on both sides.

To finish showing \eqref{equ:cross-telescoping}, we claim that for $\delta \ge -1$ and $p > 1$, we have 
\[ \frac{1}{p}((1+\delta)^p - 1) \le \delta (1+\delta)^{p-1}. \]
Note that we equality when $\delta=0$, and now we consider the first derivative:
\[\frac{d}{d\delta} \left\{\frac{1}{p}((1+\delta)^p - 1) - \delta (1+\delta)^{p-1}\right\} = -(p-1)\delta(1+\delta)^{p-2},\]
which is positive when $\delta \in [-1, 0)$ and negative when $\delta > 0$, so this identity holds.
Now, the last inequality of \eqref{equ:cross-telescoping} follows by setting $\delta = (\norm{w_{t}}_p - \norm{w_{t-1}}_p)/\norm{w_{t-1}}_p$ and then multiply by $\norm{w_t}_p^p$ on both sides.

Finally, we simplify the RHS of \eqref{equ:final-limit} by taking advantage of the fact that $\tilde{w}_T$ is normalized:
\begin{align*}
    \brg{(1-\varepsilon)\tilde{w}_T}{\tilde{w}_T} 
    &= (1-\varepsilon)^p \psi(\tilde{w}_T) - \psi(\tilde{w}_T) + \inp{\nabla\psi(\tilde{w}_T)}{\varepsilon \tilde{w}_T} \\
    &= \frac{1}{p}((1-\varepsilon)^p - 1) + \varepsilon
\end{align*}
\begin{align*}
    &\left|\inp{\nabla\psi(\tilde{w}_T) - \nabla\psi((1-\varepsilon)\tilde{w}_T)}{\reg{p} - (1-\varepsilon)\tilde{w}_T}\right|\\
    ={}& \left|\inp{(1-(1-\varepsilon)^p)\nabla\psi(\tilde{w}_T)}{\reg{p} - (1-\varepsilon)\tilde{w}_T}\right|\\
    \le{}& (1 - (1-\varepsilon)^p) \norm{\nabla\psi(\tilde{w_T})}_q \cdot \norm{\reg{p} - (1-\varepsilon)\tilde{w}_T}_p \\
    ={}& (1 - (1-\varepsilon)^p) \norm{\tilde{w_T}}_p^{p-1} \cdot \norm{\reg{p} - (1-\varepsilon)\tilde{w}_T}_p \\
    \le{}& 2d^{1/p}(1 - (1-\varepsilon)^p)
\end{align*}

\subsection{Proof of Theorem~\ref{thm:reg-max-dir}}
\label{sec:proof-reg-max-dir}
\begin{proof}
We first show that $\mmd{p}$ is unique.
Suppose the contrary that there are two distinct unit $p$-norm vectors $u_1 \neq u_2$ both achieving the maximum-margin $\mar{p}$.
Then $u_3 = (u_1 + u_2)/2$ satisfies
\[ \forall i, y_i \inp{u_3}{x_i} = \frac{1}{2} y_i \inp{u_1}{x_i} + \frac{1}{2} y_i \inp{u_2}{x_i} \ge \mar{p} \]
Therefore, $u_3$ has margin of at least $\mar{p}$.
Since $\norm{\cdot}_p$ is strictly convex, we must have $\norm{u_3}_p < 1$.
Therefore, the margin of $u_3 / \norm{u_3}_p$ is strictly greater than $\mar{p}$, contradiction.

Define $\beta > 0$ so that $\ell(z) e^{az} \in [b/2, 2b]$ for $z = B\mar{p}/2$ and whenever $B > \beta$.
Note that
\[L(B\mmd{p}) = \sum_{i=1}^n \ell(y_i\inp{B\mmd{p}}{x_i}) \le n \cdot \ell(B\mar{p}) \le 2 b n \cdot \exp(-aB\mar{p})\]

Suppose the contrary that the regularized direction does not converge to $\mmd{p}$, then there must exist $\mar{p}/2 > \varepsilon > 0$ so that there are arbitrarily large values of $B$ satisfying
\[\min_{i=1, \dots, n} y_i \inp{\frac{\bar{w}(B)}{B}}{x_i} \le \mar{p} - \varepsilon. \]
And this implies
\[ L(\bar{w}(B)) \ge \ell(B(\mar{p} - \varepsilon)) \ge \frac{b}{2} \exp(-a B\mar{p}) \exp(a B\varepsilon)\]

Then, for sufficiently large $B > \beta$, we have $\exp(aB\varepsilon) > 4 n \Rightarrow L(\bar{w}(B)) > L(B\mmd{p})$, contradiction.
Therefore, the regularized direction exists and $\reg{p} = \mmd{p}$.
\end{proof}

\subsection{Simpler proof of Theorem~\ref{thm:primal-bias}}
For potential function $\psi(\cdot) = \frac{1}{p} \norm{\cdot}_p^p$, we can avoid most calculations involving \eqref{equ:succ-cross-term} by directly computing for Bregman divergence.
However, we want to note that this approach is less general, and does not highlight the role of $\reg{p}$ as clearly.

\begin{proof}
Consider arbitrary $\alpha \in (0, 1)$.
Since $\lim_{t\to\infty}\norm{w_t}_p = \infty$, we can find $t_0$ so that $\norm{w_t} > \max(1, r_\alpha)$ for all $t \ge t_0$.
For $T > t_0$, define $\tilde{w}_T = \frac{w_T}{\norm{w_T}_p}$.

We can perform the following manipulation on Bregman divergence:
\begin{equation}
\label{equ:breg-expa}
    \begin{aligned}
        \brg{\reg{p}}{\tilde{w}_T}
        &= \psi(\reg{p}) - \psi\left(\tilde{w}_T\right) - \inp{\nabla\psi\left(\tilde{w}_T\right)}{\reg{p} - \tilde{w}_T} \\
        &= \psi(\reg{p}) - \psi\left(\tilde{w}_T\right) + \inp{\nabla\psi(\tilde{w}_T)}{\tilde{w}_T} - \inp{\nabla\psi(\tilde{w}_T)}{\reg{p}} \\
        &= \frac{1}{p}\norm{\reg{p}}_p^p - \frac{1}{p} \norm{\tilde{w}_T}_p^p + \norm{\tilde{w}_T}_p^p - \inp{\nabla\psi(\tilde{w}_T)}{\reg{p}} \\
        &= 1 - \inp{\nabla\psi(\tilde{w}_T)}{\reg{p}}
    \end{aligned}
\end{equation}

We divide both sides of \eqref{equ:mirror-cross-term} by $\norm{w_T}$ and then taking the limit as $T \to \infty$ yields
\begin{equation}\label{equ:cross-term-limit}
\liminf_{T\to\infty} \frac{1}{\norm{w_T}_p^{p-1}}\inp{\nabla\psi(w_{T})}{\reg{p}} \ge \frac{1}{1+\alpha}.
\end{equation}

Now, substituting \eqref{equ:cross-term-limit} into \eqref{equ:breg-expa} results in
\begin{align*}
    \limsup_{T\to\infty} \brg{\reg{p}}{\frac{w_T}{\norm{w_T}_p}} 
    &= 1 - \liminf_{T\to\infty} \inp{\nabla\psi\left(\frac{w_T}{\norm{w_T}_p}\right)}{\reg{p}} \\
    &= 1 - \liminf_{T\to\infty} \frac{1}{\norm{w_T}_p^{p-1}} \inp{\nabla\psi(w_{T})}{\reg{p}} \\
    &\le 1 - \frac{1}{1+\alpha} < \alpha
\end{align*}

Since the value of $\alpha$ is arbitrary, we can conclude that
\[\lim_{T\to\infty} \brg{\reg{p}}{\frac{w_T}{\norm{w_T}_p}} = 0.\]
\end{proof}

\section{Proofs for Section~\ref{sec:asymp-result}}
\label{sec:proof-asymp-result}
\subsection{Proof of Corollary~\ref{thm:convg-rate}}
\begin{proof}
    This is an immediate consequence of \eqref{equ:breg-expa} and \eqref{equ:cross-term-limit}.
\end{proof}

\subsection{Proof of Lemma~\ref{thm:norm-rate}}
For the following proof, we assume without loss of generality that $y_i = 1$ by replacing every instance of $(x_i, -1)$ with $(-x_i, 1)$.

\begin{proof}
    For the upper bound, we consider a reference vector $w^\star = \mar{p}^{-1} \mmd{p}$.
    By the definition of the max-margin direction, the margin of $w^\star$ is 1 and $\norm{w^\star}_p = \mar{p}^{-1}$.
    From Lemma~\ref{thm:key-iden}, we have
    \begin{align*} D_\psi(w^\star \log T, w_t) = D_\psi(w^\star \log T, w_{t+1}) + D_{\psi - \eta L}(w_{t+1}, w_{t}) &- \inp{\nabla L(w_t)}{w^\star \log T - w_t} \\
    &- \eta L(w_{t}) + \eta L(w_{t+1}). 
    \end{align*}
    We first bound the quantity $\inp{\nabla L(w_t)}{w^\star \log T - w_t}$ by expanding the definition of exponential loss:
    \begin{align*}
        &\inp{\nabla L(w_t)}{w^\star \log T - w_t} \\
        ={}& \sum_{i=1}^n \inp{\nabla \exp(-\inp{w_t}{x_i})}{w^\star \log T - w_t} \\
        ={}& \sum_{i=1}^n \inp{\exp(-\inp{w_t}{x_i})x_i}{w_t - w^\star \log T} \\
        ={}& \sum_{i=1}^n \exp(-\inp{w^\star \log T}{x_i}) \exp(-\inp{w_t - w^\star \log T}{x_i})  \inp{x_i}{w_t - w^\star \log T} \\
        \le{}& \sum_{i=1}^n \frac{1}{T} \cdot \frac{1}{e} = \frac{n}{eT}
    \end{align*}
    where the last line follows from the definition of $w^\star$ and the fact that for any $x \in \RR$, we have $e^{-x} x \le 1/e$.
    It follows that
    \[ D_\psi(w^\star \log T, w_t) \ge D_\psi(w^\star \log T, w_{t+1}) - \frac{n}{eT} - \eta L(w_{t}) + \eta L(w_{t+1}). \]
    
    Summing over $t = 0, \dots, T-1$ gives us
    \[ D_\psi(w^\star \log T, w_0) \ge D_\psi(w^\star \log T, w_T) - \frac{n}{e} - \eta L(w_0) + \eta L(w_T). \]
    Since Bregman divergence with respect to the $p$th power of $\ell_p$-norm is homogeneous, we can divide by a factor of $\log^p T$ on both sides:
    \begin{equation}
    \label{equ:lim-reference-scale}
        D_\psi\left(w^\star, \frac{w_0}{\log T}\right) \ge D_\psi\left(w^\star, \frac{w_T}{\log T}\right) - o(1).
    \end{equation}
    As $T\to\infty$, the left-hand side converges to $\brg{w^\star}{0}  = \psi(w^\star) = \frac{1}{p} \mar{p}^{-p}$.
    Let $\tilde{w} = w_T / \log T$, we expand the right-hand side as
    \begin{align*}
        \brg{w^\star}{\tilde{w}}
        &= \psi(w^\star) - \psi(\tilde{w}) - \inp{\nabla\psi(\tilde{w})}{w^\star - \tilde{w}} \\
        &= \frac{1}{p} \mar{p}^{-p} + \frac{p-1}{p} \norm{\tilde{w}}_p^p - \inp{\nabla\psi(\tilde{w})}{w^\star} \\
        &\ge \frac{1}{p} \mar{p}^{-p} + \frac{p-1}{p} \norm{\tilde{w}}_p^p - \mar{p}^{-1} \norm{\nabla\psi(\tilde{w})}_q
    \end{align*}
    for $1/p + 1/q = 1$.
    Recall that $\psi = \frac{1}{p}\norm{\cdot}_p^p$ has the following nice properties:
    \begin{align*}
     \nabla \psi (w) &= (\sgn(w_1)|w_1|^{p-1},\cdots, \sgn(w_d)|w_d|^{p-1})\\
     \inp{\nabla \psi (w)}{w} &= \sgn(w_1)w_1|w_1|^{p-1} + \cdots + \sgn(w_d)w_d|w_d|^{p-1} = \norm{w}_p^p
    \end{align*}
    So, we can further simplify $\norm{\nabla\psi(\tilde{w})}_q$:
    \begin{align*}
        \norm{\nabla\psi(\tilde{w})}_q
        &= \left(\sum_{i=1}^d |\tilde{w}_i|^{(p-1)q} \right)^{1/q} \\
        &= \left(\sum_{i=1}^d |\tilde{w}_i|^{p} \right)^{1/q} \\
        &= \norm{\tilde{w}}_p^{p/q} = \norm{\tilde{w}}_p^{p-1},
    \end{align*}
    where we note that because $1/p + 1/q = 1$, we also have $pq = p + q$ and $1 + p/q = p$.
    
    Now, we have
    \[\brg{w^\star}{\tilde{w}} \ge \frac{1}{p} \mar{p}^{-p} + \frac{p-1}{p} \norm{\tilde{w}}_p^p - \mar{p}^{-1} \norm{\tilde{w}}_p^{p-1}\]
    If $\norm{w_T / \log T}_p > \mar{p}^{-1} \cdot \frac{p}{p-1}$ for arbitrarily large $T$, then $\brg{w^\star}{w_T / \log T} > \frac{1}{p} \mar{p}^{-p}$ for those $T$.
    This in turn contradicts inequality \eqref{equ:lim-reference-scale}.
    Therefore, we must have 
    \[ \limsup_{T\to\infty} \norm{w_T}_p \le \mar{p}^{-1} \frac{p}{p-1} \log T. \]
    
    Now we can turn our attention to the lower bound.
    Let $m_t = \gamma(w_t)$ be the margin of the mirror descent iterates.
    Then, 
    \[ L(w_t) = \frac{1}{n} \sum_{i=1}^n \exp(-\inp{w_t}{x_i}) \ge \frac{1}{n} \exp(-m_t).\]
    Due to Lemma~\ref{thm:to-infinity}, we also know that $m_t \xrightarrow{t\to\infty} \infty$.
    
    By the definition of the max-margin direction, we know that $\gamma(\norm{w_t}_p \mmd{p}) \ge m_t$.
    Then by linearity of margin, there exists $w^\star$ so that $\gamma(w^\star) \ge (1+\frac{2n}{m_t}) m_t$ and $\norm{w^\star}_p \le (1+\frac{2n}{m_t}) \norm{w_t}_p$.
    It follows that
    \[ L(w^\star) = \frac{1}{n} \sum_{i=1}^n \exp(-\inp{w_t}{x_i}) \le \exp(-\gamma(w^\star)) = \frac{1}{2n} \exp(-m_t).\]
    
    Under the assumption that the step size $\eta$ is sufficiently small so that $\psi - \eta L$ is convex on the iterates, we can apply the convergence rate of mirror descent ~\citep[Theorem 3.1]{lu2018relatively}:
    \[ L(w_t) - L(w^\star) \le \frac{1}{\eta t} \brg{w^\star}{w_0} \]
    From our choice of $w^\star$, we have
    \begin{align*}
        \frac{1}{2n} \exp(-m_t)
        &\le \frac{1}{\eta t} \brg{w^\star}{w_0} \\
        &= \frac{1}{\eta t} (\psi(w^\star) - \psi(w_0) - \inp{\nabla\psi(w_0)}{w^\star - w_0})
    \end{align*}
    After dropping the lower order terms and recall the upper bounds on $\norm{w^\star}_p$ and $\norm{w_t}_p$, we have
    \[ \frac{1}{2n} \exp(-m_t) \le O(1) \cdot \frac{1}{\eta t} \cdot \frac{1}{p} \left(1 + \frac{\log (2n)}{m_t}\right)^p \left(\mar{p}^{-1} \frac{p}{p-1} \log t\right)^p\]
    Since $m_t$ is unbounded, the quantity $1 + \frac{\log (2n)}{m_t}$ is upper bounded by a constant.
    Taking the logarithm on both sides yields
    \[ m_t \ge \log t - p \log\log t + O(1)\]
    
    Finally, we use the definition of margin to conclude that $ m_t \le \inp{w_t}{x_i} \le C \cdot \norm{w_t}_p$.
    Therefore, 
    \[ \norm{w_t}_p \ge \frac{1}{C} (\log t - p \log\log t) + O(1).\]
\end{proof}

\clearpage

\section{Practicality of \algname}
\label{sec:practicality}
To illustrate that \algname can be easily implemented, we show a proof-of-concept implementation in PyTorch.
This implementation can directly replace existing optimizers and thus require only minor changes to any existing training code. 

We also note that while the \algname update step requires more arithmetic operations than a standard gradient descent update, this does not significantly impact the total runtime because differentiation is the most computationally intense step.
We observed from our experiments that training with \algname is approximate 10\% slower than with PyTorch's \texttt{optim.SGD} (in the same number of epochs),\footnote{This measurement may not be very accurate because we were using shared computing resources.} and we believe that this gap can be closed with a more optimized code.

\begin{lstlisting}[caption={Sample PyTorch implementation of \algname}]
import torch
from torch.optim import Optimizer

class pnormSGD(Optimizer):
    def __init__(self, params, lr=0.01, pnorm=2.0):
        if not 0.0 <= lr:
            raise ValueError("Invalid learning rate: {}".format(lr))
        # p-norm must be strictly greater than 1
        if not 1.01 <= pnorm:
            raise ValueError("Invalid p-norm value: {}".format(pnorm))

        defaults = dict(lr=lr, pnorm=pnorm)
        super(pnormSGD, self).__init__(params, defaults)

    def __setstate__(self, state):
        super(pnormSGD, self).__setstate__(state)

    def step(self, closure=None):
        loss = None
        if closure is not None:
            with torch.enable_grad():
                loss = closure()

        for group in self.param_groups:
            lr = group["lr"]
            pnorm = group["pnorm"]

            for param in group["params"]:
                if param.grad is None:
                    continue

                x = param.data
                dx = param.grad.data

                # \ell_p^p potential function
                update = torch.pow(torch.abs(x), pnorm-1) * \
                                torch.sign(x) -  lr * dx           
                param.data =  torch.sign(update) * \
                            torch.pow(torch.abs(update), 1/(pnorm-1))

        return loss
\end{lstlisting}

\clearpage

\section{Experimental details}
\label{sec:experiment-detail}
\subsection{Linear classification}
Here, we describe the details behind our experiments from Section~\ref{sec:linear-classifier}.
First, we note that we can absorb the labels $y_i$ by replacing $(x_i, y_i)$ with $(y_ix_i, 1)$.
This way, we can choose points with the same $+1$ label.

For the $\RR^2$ experiment, we first select three points $(\frac{1}{6}, \frac{1}{2}), (\frac{1}{2}, \frac{1}{6})$ and $(\frac{1}{3}, \frac{1}{3})$ so that the maximum margin direction is approximately $\frac{1}{\sqrt{2}}(1, 1)$.
Then we sample 12 additional points from $\mathcal{N}((\frac{1}{2}, \frac{1}{2}), 0.15 I_2)$.
The initial weight $w_0$ is selected from $\mathcal{N}(0, I_2)$.
We ran \algname with step size $10^{-4}$ for 1 million steps.
As for the scatter plot of the data, we randomly re-assign a label and plot out $(x_i, 1)$ or $(-x_i, -1)$ uniformly at random.

For the $\RR^{100}$ experiment, we select 15 sparse vectors that each has up to 10 nonzero entries.
Each nonzero entry is independently sampled from $\mathcal{U}(-2, 4)$.
Because we are in the over-parameterized case, these vectors are linearly separable with high probability.
The initial weight $w_0$ is selected from $\mathcal{N}(0, 0.1 I_{100})$.
We ran \algname with step size $10^{-4}$ for 1 million steps.

These experiments were performed on an Intel Skylake CPU.

\subsection{CIFAR-10 experiments}
For the experiments with the CIFAR-10 dataset, we adopted the example implementation from the \texttt{FFCV} library.\footnote{\url{https://github.com/libffcv/ffcv/tree/main/examples/cifar}}
For consistency, we ran \algname with the same hyper-parameters for all neural networks and values of $p$.
We used a cyclic learning rate schedule with maximum learning rate of 0.1 and ran for 400 epochs so the training loss is approximately 0.\footnote{This differs from the setup from~\cite{azizan2021stochastic}, where they used a fixed small learning rate and much larger number of epochs.}

This experiment was performed on a single Nvidia V100 GPU.

\subsection{ImageNet experiments}
For the experiments with the ImageNet dataset, we used the example implementation from the \texttt{FFCV} library.\footnote{\url{https://github.com/libffcv/ffcv-imagenet/}}
For consistency, we ran \algname with the same hyper-parameters for all neural networks and values of $p$.
We used a cyclic learning rate schedule with maximum learning rate of 0.5 and ran for 120 epochs.
Note that, to more accurately measure the effect of \algname on generalization, we turned off any parameters that may affect regularization, e.g. with momentum set to 0, weight decay set to 0, and label smoothing set to 0, etc.

This experiment was performed on a single Nvidia V100 GPU.

\section{Additional experimental results}
\label{sec:add-experiments}
\subsection{Linear classification}
\label{sec:add-experiment-synthetic}
We present a more complete result for the setting of  Section \ref{sec:linear-classifier} with more values of $p$.
Note that Table~\ref{tab:linear-bias} is a subset of Table~\ref{tab:linear-bias-full-1}, as shown below.

Except for $p = 1.1$, \algname produces the smallest linear classifier under the corresponding $\ell_p$-norm and thus consistent with the prediction of Theorem~\ref{thm:primal-bias}.
When $p = 1.1$, Corollary~\ref{thm:final-convg-rate} predicts a much slower convergence rate.
So, for the number of iterations we have, \algname with $p = 1.1$ in fact cannot compete against \algname with $p = 1.5$, which has much faster convergence rate but similar implicit bias.
The second trial shows a rare case where \algname with $p = 1.1$ could not even match \algname with $p = 2$ under the $\ell_{1.1}$-norm.
Therefore, before we come up with techniques to speed up the convergence of \algname, it is not advisable to pick $p$ that is too close to 1.

\begin{table}
    \centering
    \setlength{\tabcolsep}{4.5pt}
    \begin{tabular}{l| c|c|c|c|c|c|c|c}
    \hline
    & $\ell_1$ norm & $\ell_{1.1}$ norm & $\ell_{1.5}$ norm & $\ell_{2}$ norm & $\ell_{3}$ norm & $\ell_{6}$ norm & $\ell_{10}$ norm & $\ell_{\infty}$ norm \\
    \hline\hline
    $p=1.1$ & \textbf{7.692} & 5.670 & 2.650 & 1.659 & 1.100 & 0.782 & 0.698 & 0.634 \\
    $p=1.5$ & 7.924 & \textbf{5.607} & \textbf{2.333} & 1.346 & 0.830 & 0.573 & 0.526 & 0.515 \\
    $p=2$ & 9.417 & 6.447 & 2.413 & \textbf{1.273} & 0.710 & 0.444 & 0.393 & 0.374 \\
    $p=3$ & 11.307 & 7.618 & 2.696 & 1.345 & \textbf{0.691} & 0.381 & 0.318 & 0.285 \\
    $p=6$ & 13.115 & 8.787 & 3.044 & 1.481 & 0.729 & \textbf{0.369} & 0.288 & 0.233 \\
    $p=10$ & 13.572 & 9.086 & 3.137 & 1.520 & 0.742 & 0.367 & \textbf{0.281} & \textbf{0.213} \\
    \hline
    \end{tabular}
    \caption{Size of the linear classifiers generated by \algname (after rescaling) in $\ell_1, \ell_{1.1}, \ell_{1.5}, \ell_2, \ell_3, \ell_6$ and $\ell_{10}$ norms.
    For each norm, we highlight the value of $p$ for which \algname generates the smallest classifier under that norm. (Trial 1)}
    \label{tab:linear-bias-full-1}
\end{table}

\begin{table}
    \centering
    \setlength{\tabcolsep}{4.5pt}
    \begin{tabular}{l| c|c|c|c|c|c|c|c}
    \hline
    & $\ell_1$ norm & $\ell_{1.1}$ norm & $\ell_{1.5}$ norm & $\ell_{2}$ norm & $\ell_{3}$ norm & $\ell_{6}$ norm & $\ell_{10}$ norm & $\ell_{\infty}$ norm \\
    \hline\hline
    $p=1.1$ & 10.688 & 8.013 & 3.883 & 2.465 & 1.644 & 1.187 & 1.082 & 1.009 \\
    $p=1.5$ & \textbf{9.308} & \textbf{6.546} & \textbf{2.674} & 1.518 & 0.913 & 0.602 & 0.535 & 0.488 \\
    $p=2$ & 10.735 & 7.340 & 2.735 & \textbf{1.435} & 0.790 & 0.479 & 0.418 & 0.397 \\
    $p=3$ & 12.298 & 8.327 & 2.991 & 1.508 & \textbf{0.782} & 0.432 & 0.359 & 0.324 \\
    $p=6$ & 13.817 & 9.322 & 3.297 & 1.631 & 0.816 & \textbf{0.418} & 0.328 & 0.265 \\
    $p=10$ & 14.545 & 9.798 & 3.447 & 1.695 & 0.841 & 0.423 & \textbf{0.325} & \textbf{0.247} \\
    \hline
    \end{tabular}
    \caption{Size of the linear classifiers generated by \algname (after rescaling) in $\ell_1, \ell_{1.1}, \ell_{1.5}, \ell_2, \ell_3, \ell_6$ and $\ell_{10}$ norms.
    For each norm, we highlight the value of $p$ for which \algname generates the smallest classifier under that norm. (Trial 2)}
    \label{tab:linear-bias-full-2}
\end{table}

\clearpage

\subsection{CIFAR-10 experiments: implicit bias}
We present more complete illustrations of the implicit bias trends of trained models in CIFAR-10.
Compared to Figure~\ref{fig:cifar10-hist}, the plots below include data from additional values for additional values of $p$ and more deep neural network architectures.

We see that the trends we observed in Section~\ref{sec:cifar} continue to hold under architectures other than \textsc{ResNet}.
In particular, for smaller $p$'s, the weight distributions of models trained with \algname have higher peak around zero, and higher $p$'s result in smaller maximum weights. 

\label{sec:add-experiment-cifar-bias}
\begin{figure}[!h]
    \centering
    \begin{subfigure}[b]{0.45\textwidth}
        \centering
        \includegraphics[width=\textwidth]{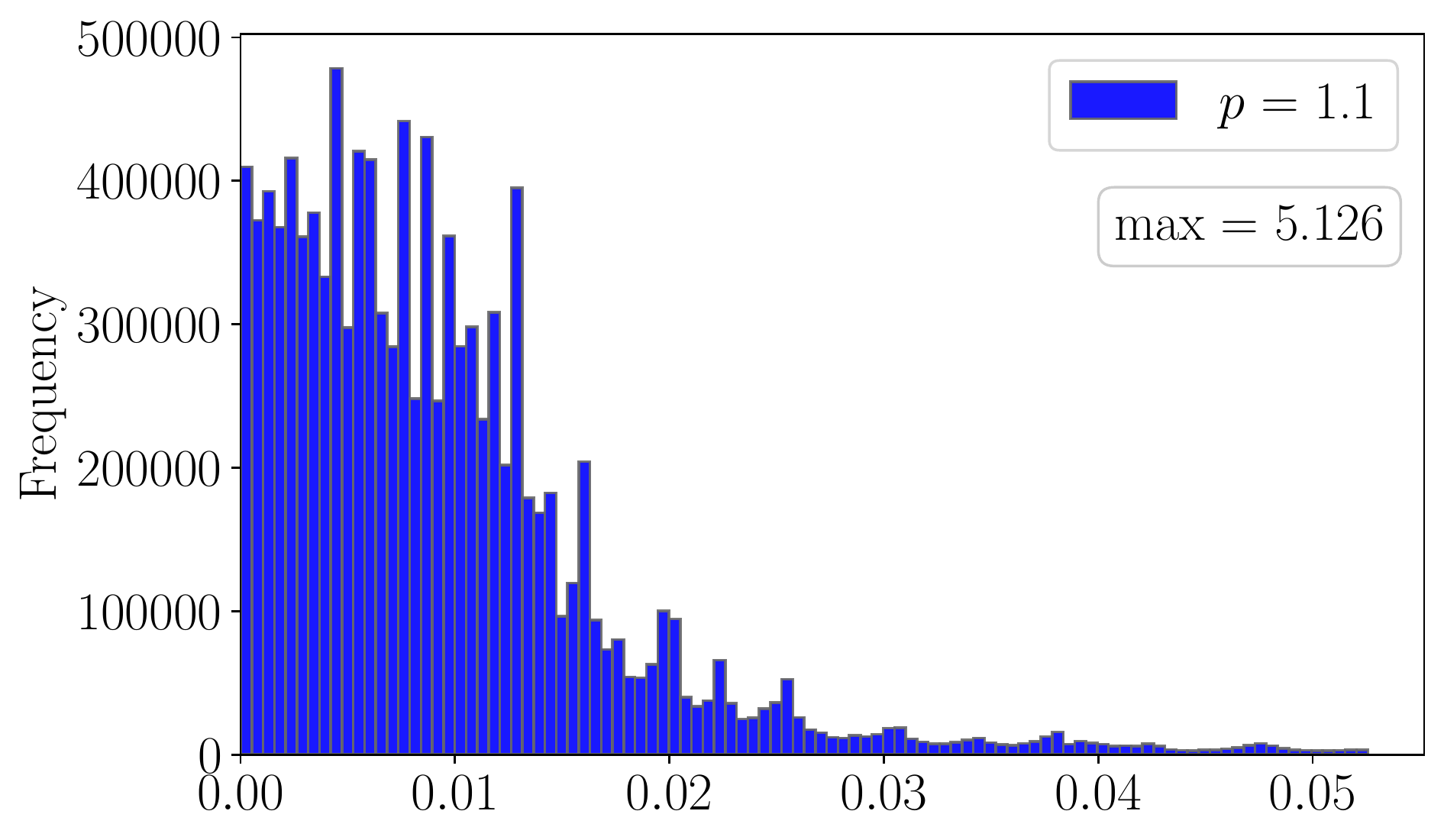}
    \end{subfigure}
    ~
    \begin{subfigure}[b]{0.45\textwidth}
        \includegraphics[width=\textwidth]{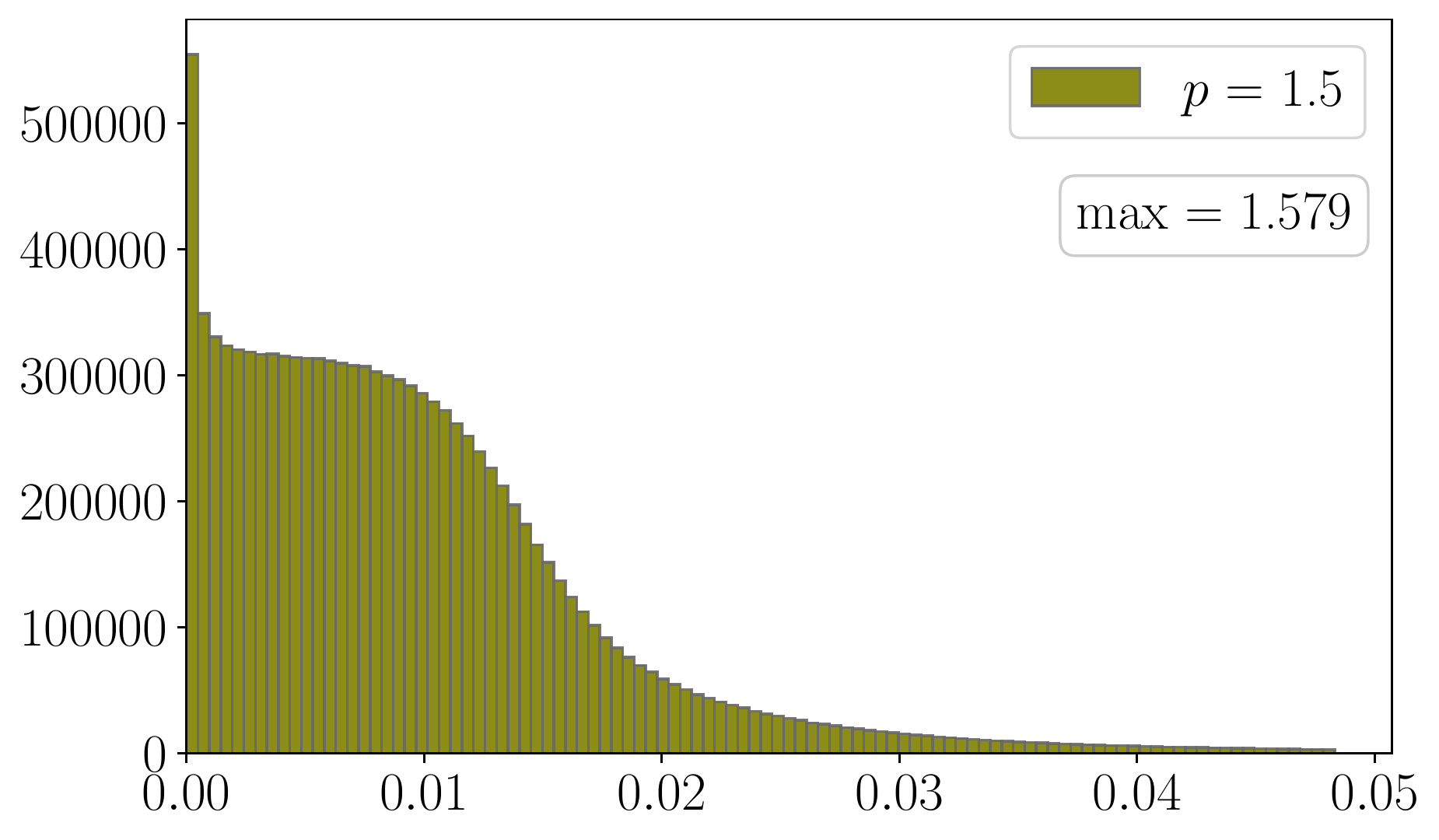}
    \end{subfigure}
    \begin{subfigure}[b]{0.45\textwidth}
        \includegraphics[width=\textwidth]{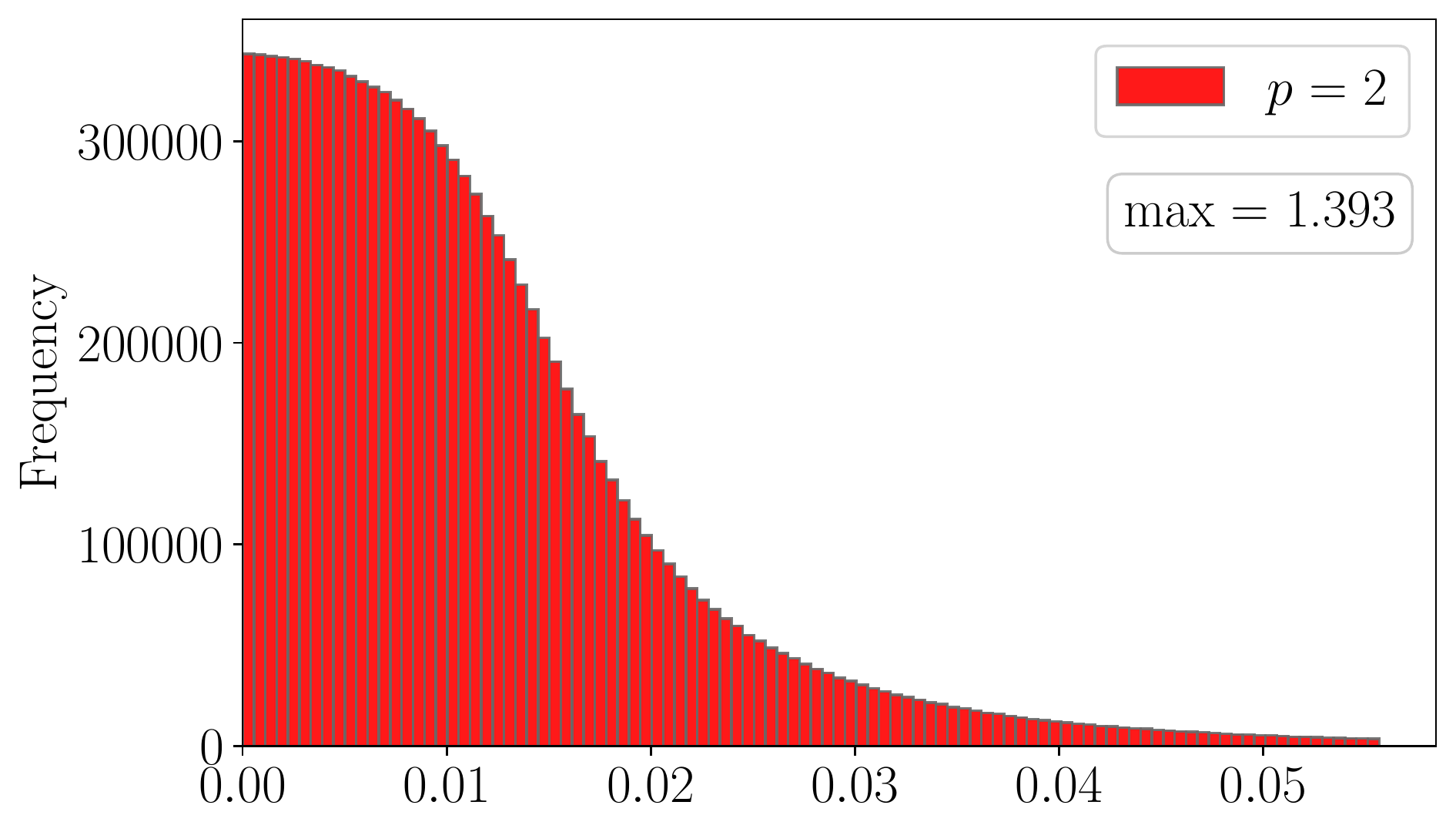}
    \end{subfigure}
    ~
    \begin{subfigure}[b]{0.45\textwidth}
        \includegraphics[width=\textwidth]{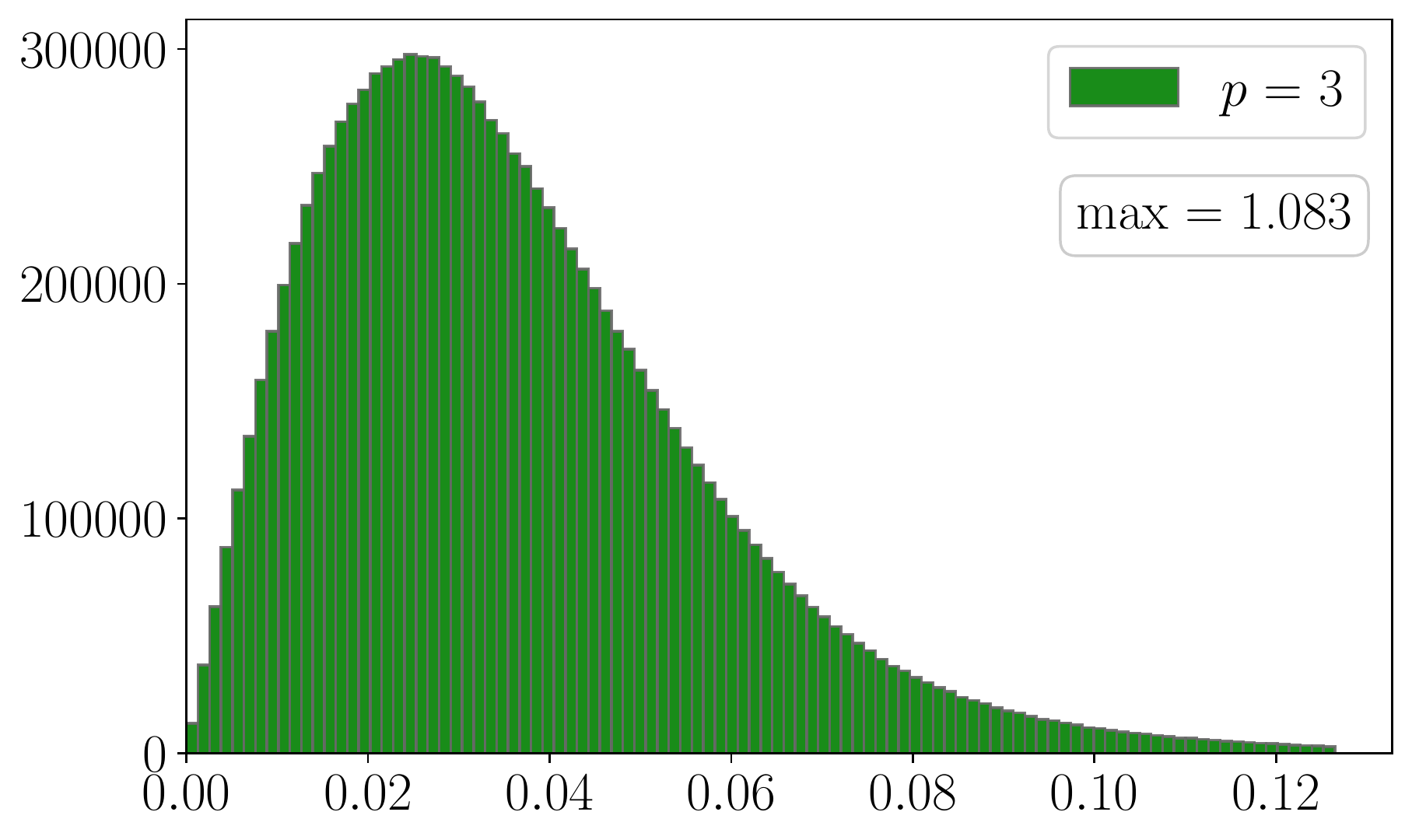}
    \end{subfigure}
    \begin{subfigure}[b]{0.45\textwidth}
        \includegraphics[width=\textwidth]{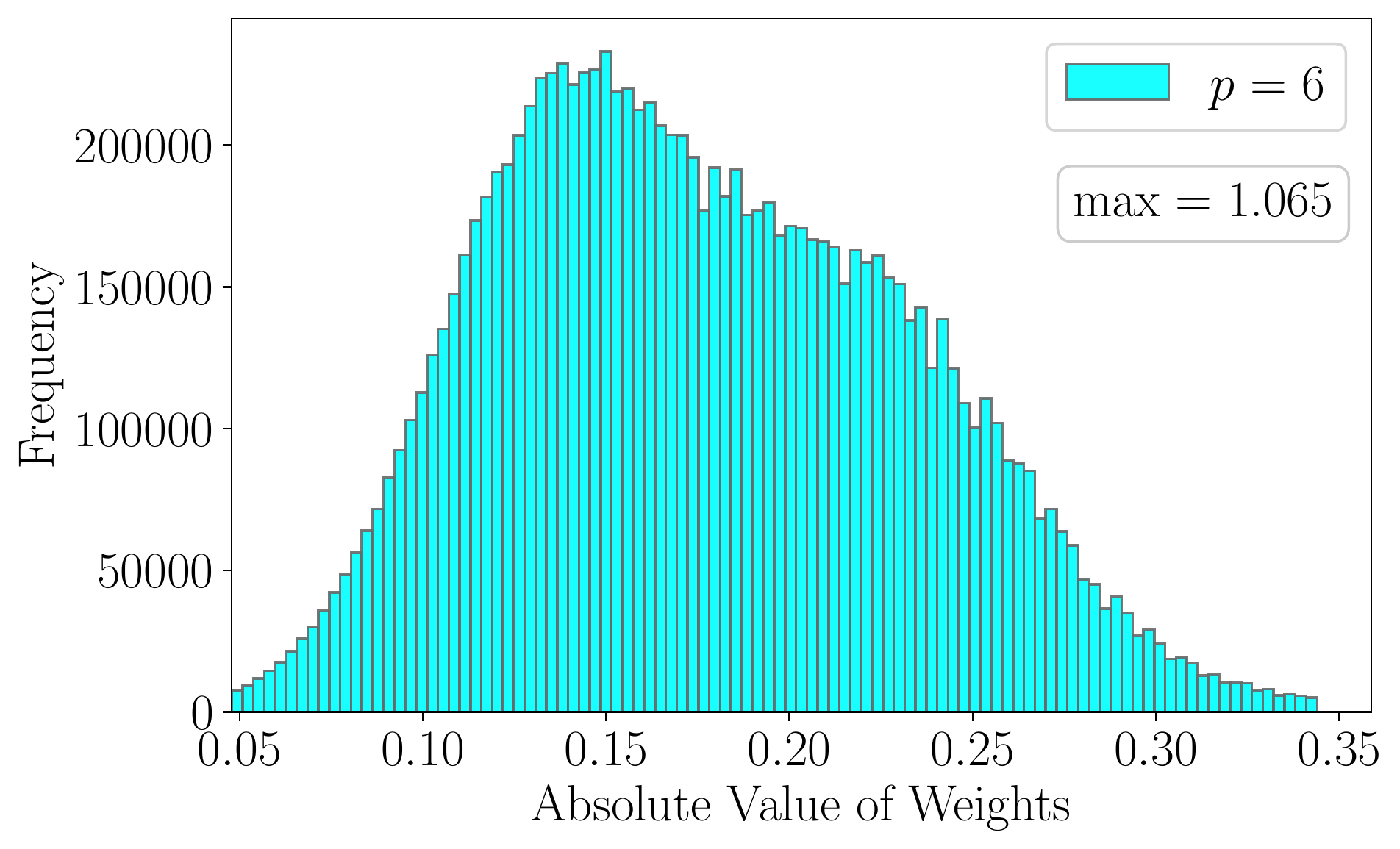}
    \end{subfigure}
    ~
    \begin{subfigure}[b]{0.45\textwidth}
        \includegraphics[width=\textwidth]{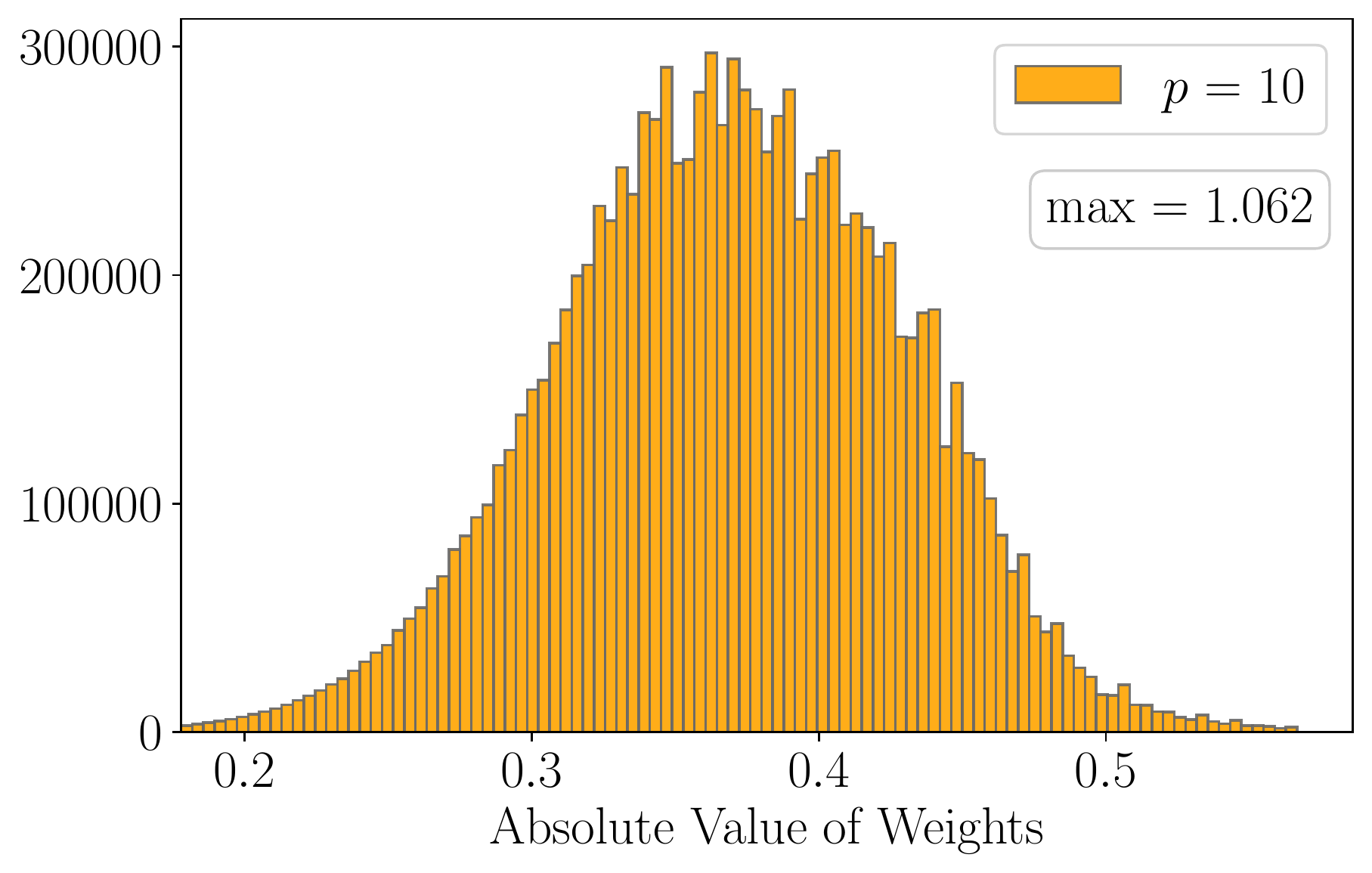}
    \end{subfigure}
    \caption{The histogram of weights in \textsc{ResNet-18} models trained with \algname for the CIFAR-10 dataset. 
    For clarity, we cropped out the tails and each plot has 100 bins after cropping.
    Note that the scale on the $y$-axis differs per graph.
    }
    \label{fig:cifar10-hist-resnet-full}
\end{figure}

\begin{figure}
    \centering
    \begin{subfigure}[b]{0.45\textwidth}
        \centering
        \includegraphics[width=\textwidth]{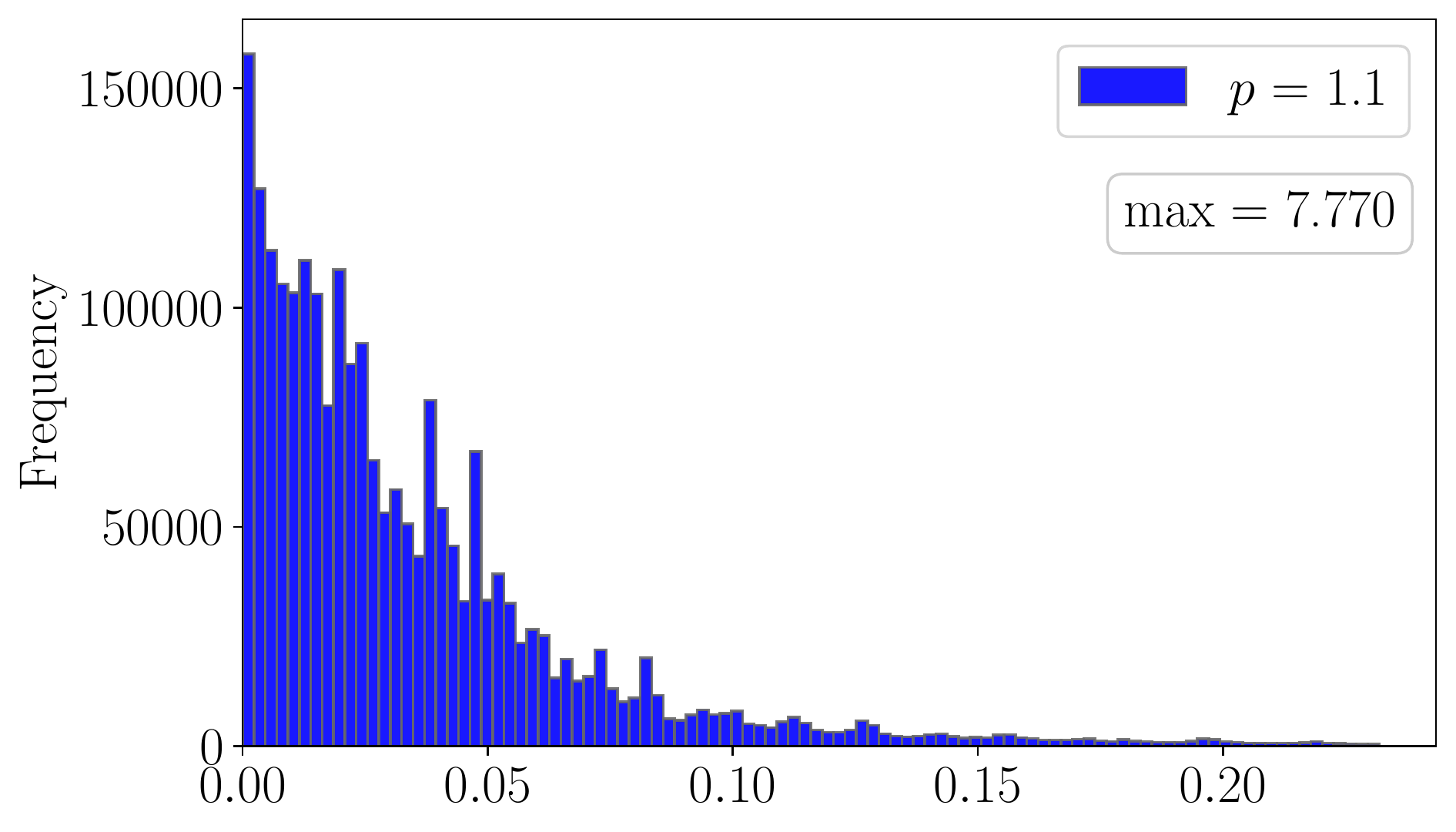}
    \end{subfigure}
    ~
    \begin{subfigure}[b]{0.45\textwidth}
        \includegraphics[width=\textwidth]{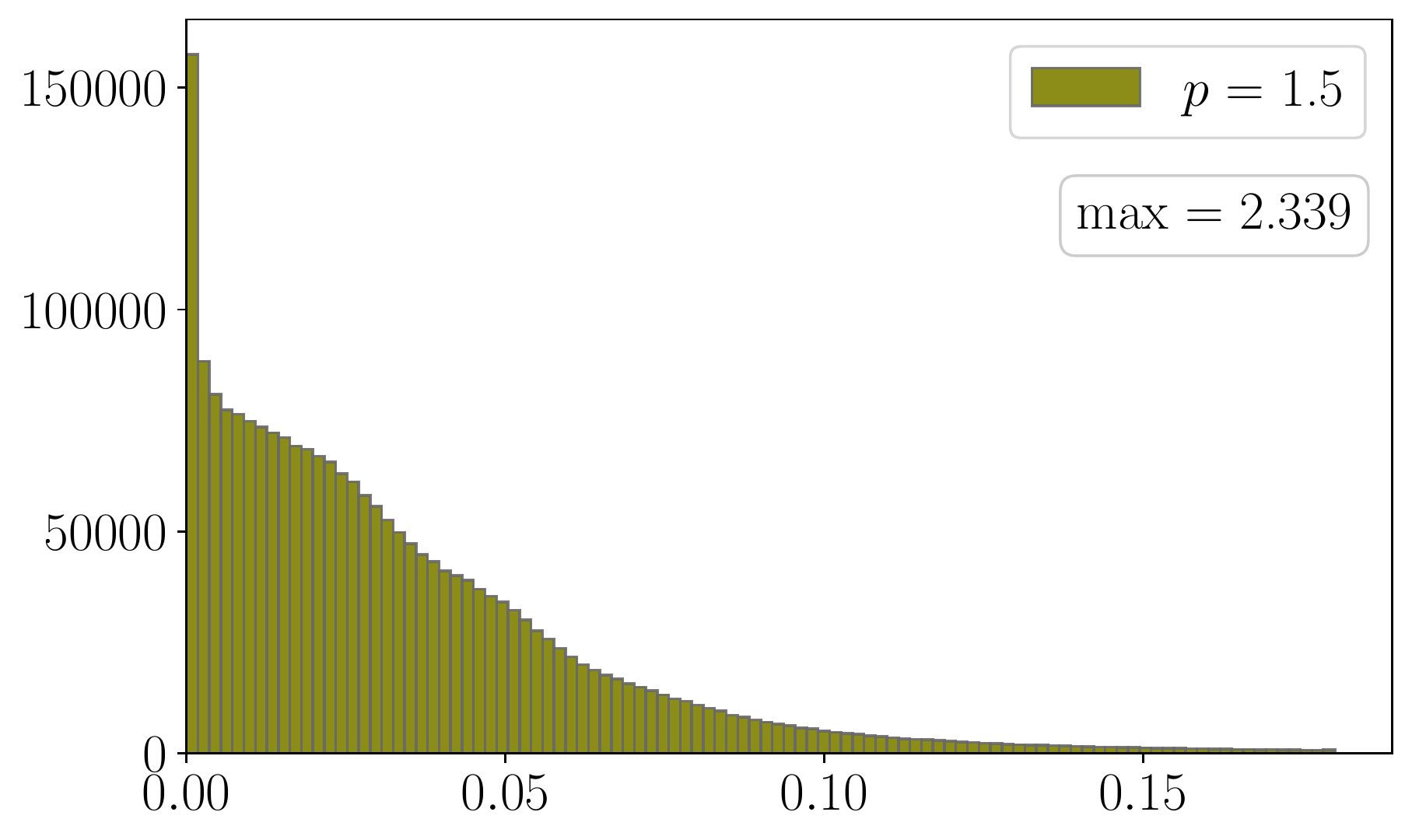}
    \end{subfigure}
    \begin{subfigure}[b]{0.45\textwidth}
        \includegraphics[width=\textwidth]{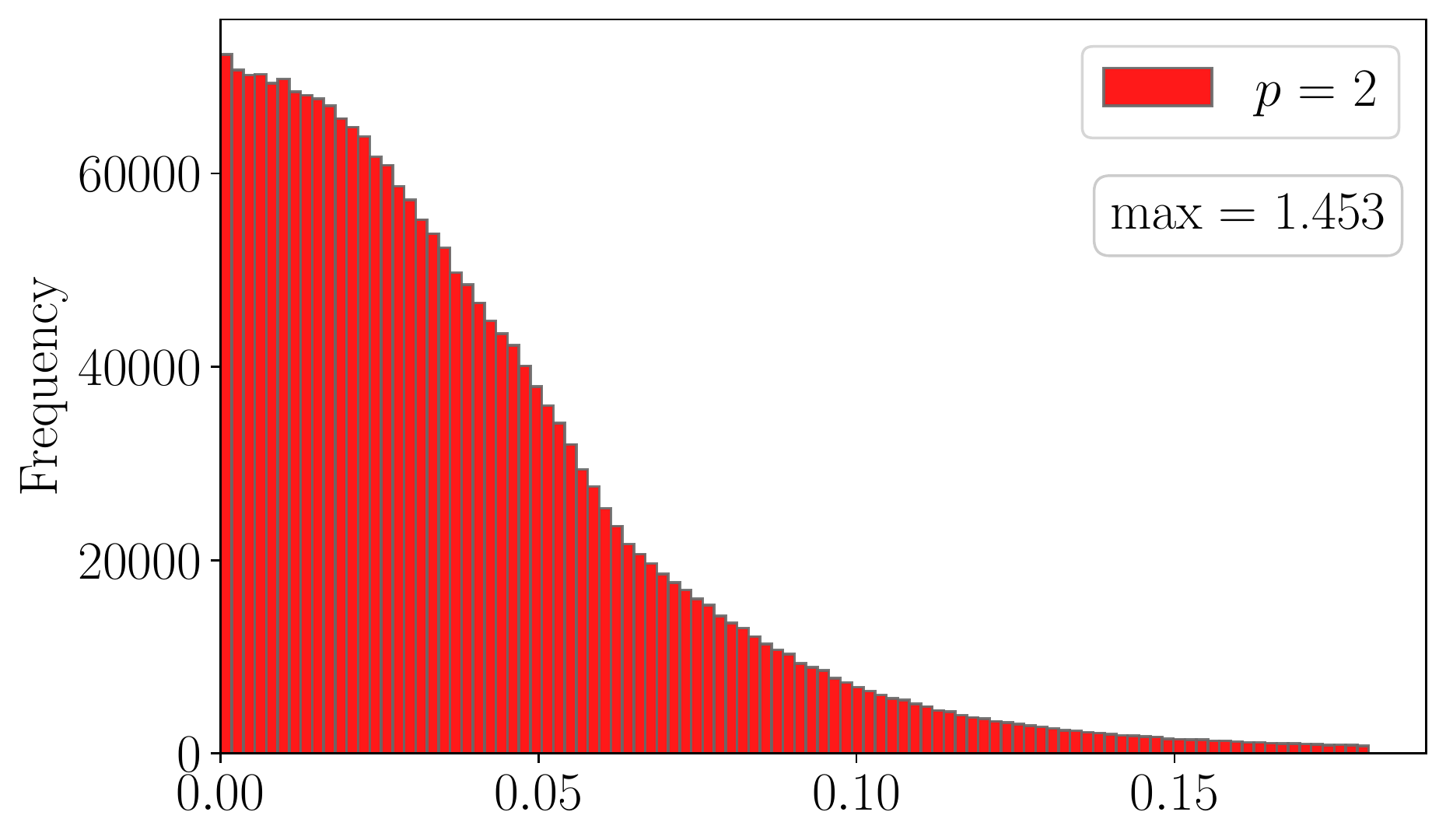}
    \end{subfigure}
    ~
    \begin{subfigure}[b]{0.45\textwidth}
        \includegraphics[width=\textwidth]{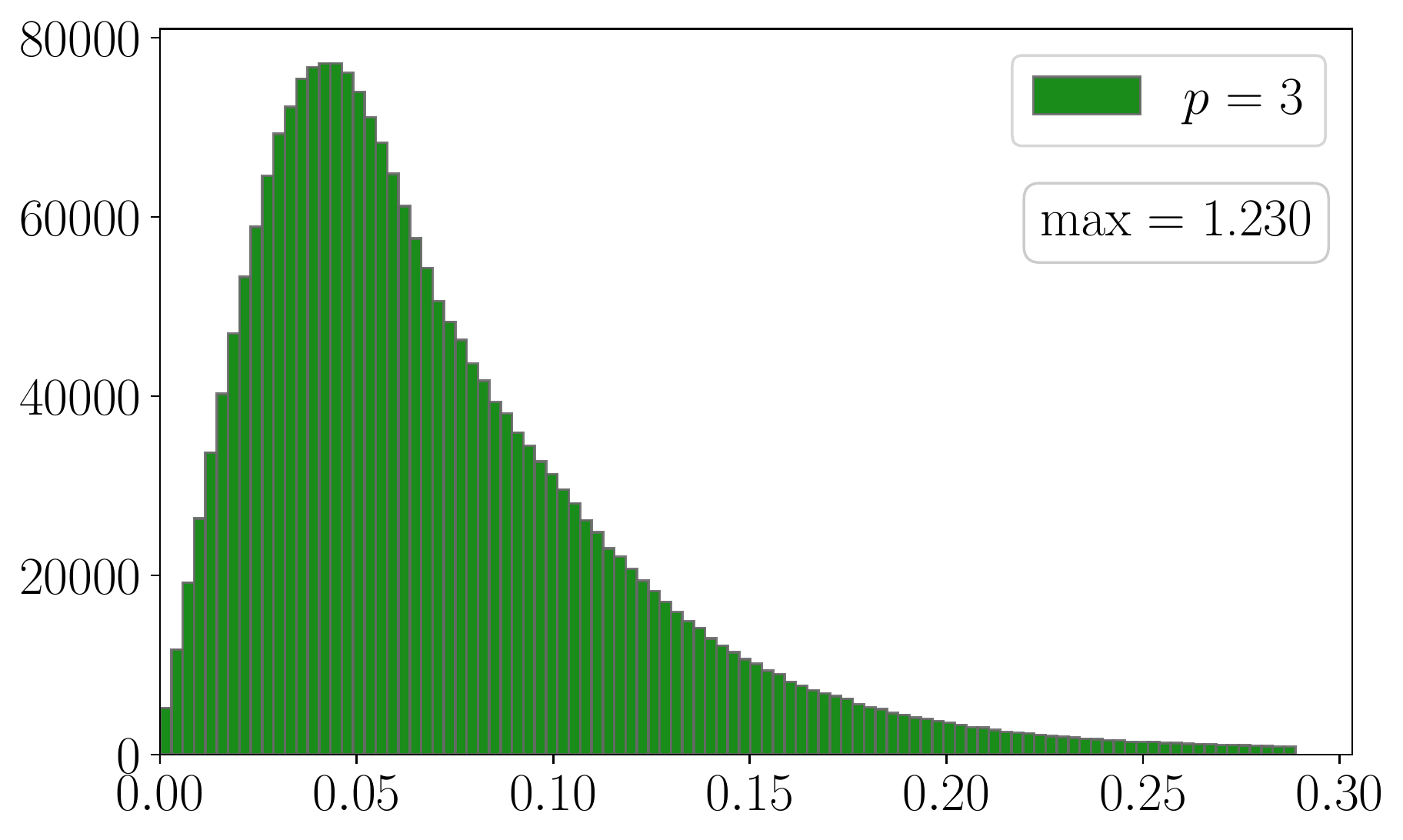}
    \end{subfigure}
    \begin{subfigure}[b]{0.45\textwidth}
        \includegraphics[width=\textwidth]{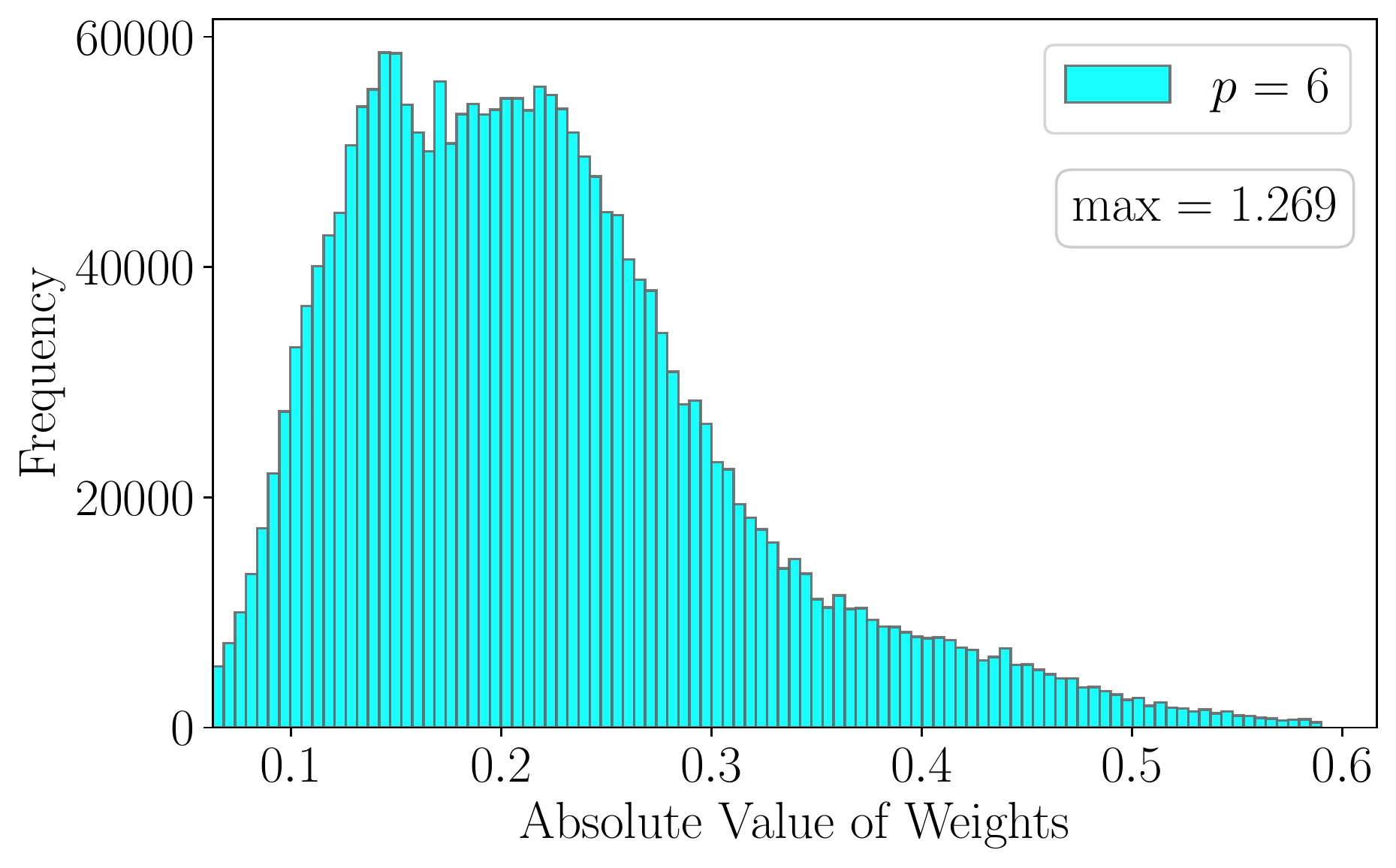}
    \end{subfigure}
    ~
    \begin{subfigure}[b]{0.45\textwidth}
        \includegraphics[width=\textwidth]{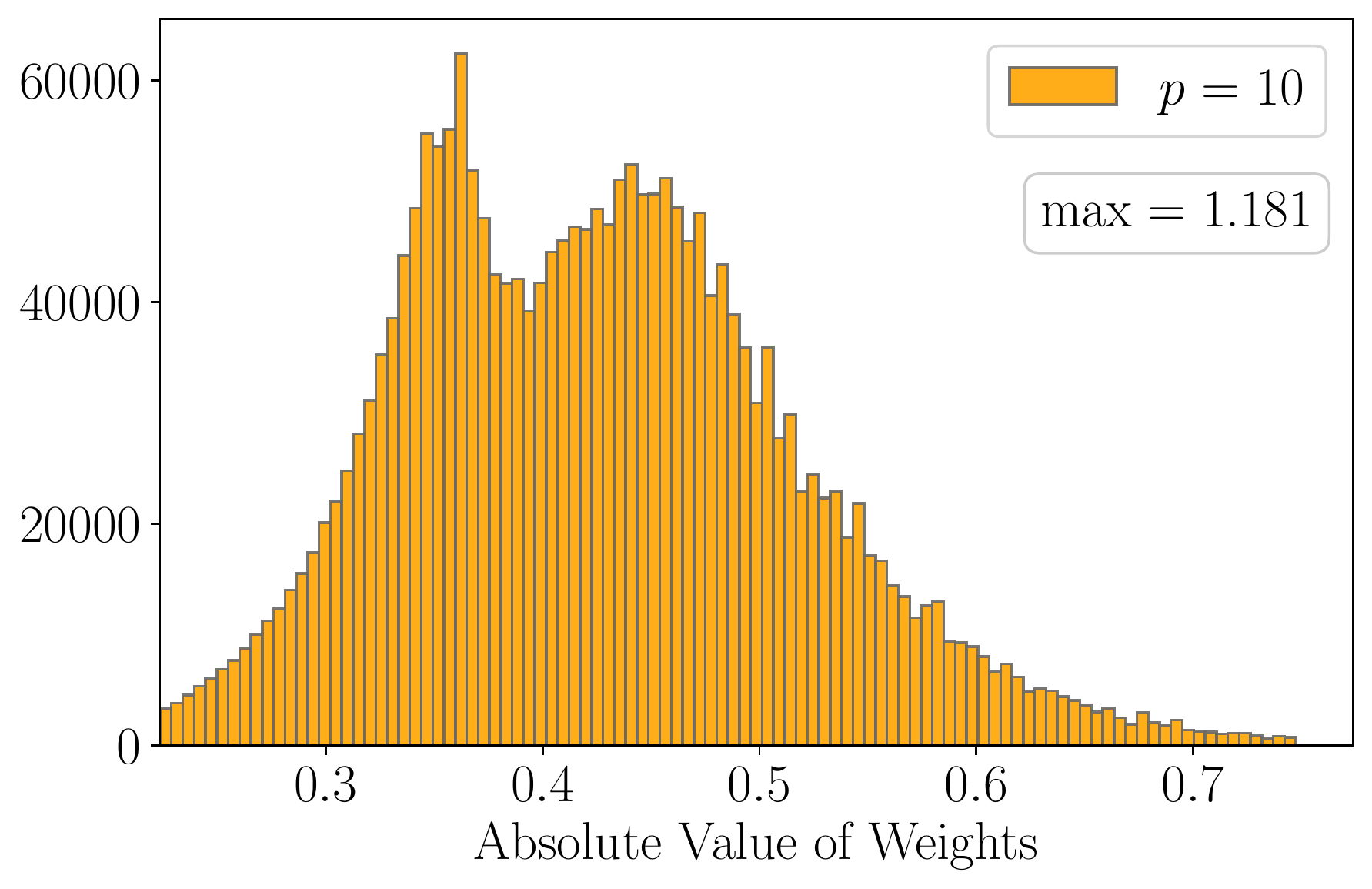}
    \end{subfigure}
    \caption{The histogram of weights in \textsc{MobileNet-v2} models trained with \algname for the CIFAR-10 dataset. 
    For clarity, we cropped out the tails and each plot has 100 bins after cropping.
    }
    \label{fig:cifar10-hist-mobilenet-full}
\end{figure}

\begin{figure}
    \centering
    \begin{subfigure}[b]{0.45\textwidth}
        \centering
        \includegraphics[width=\textwidth]{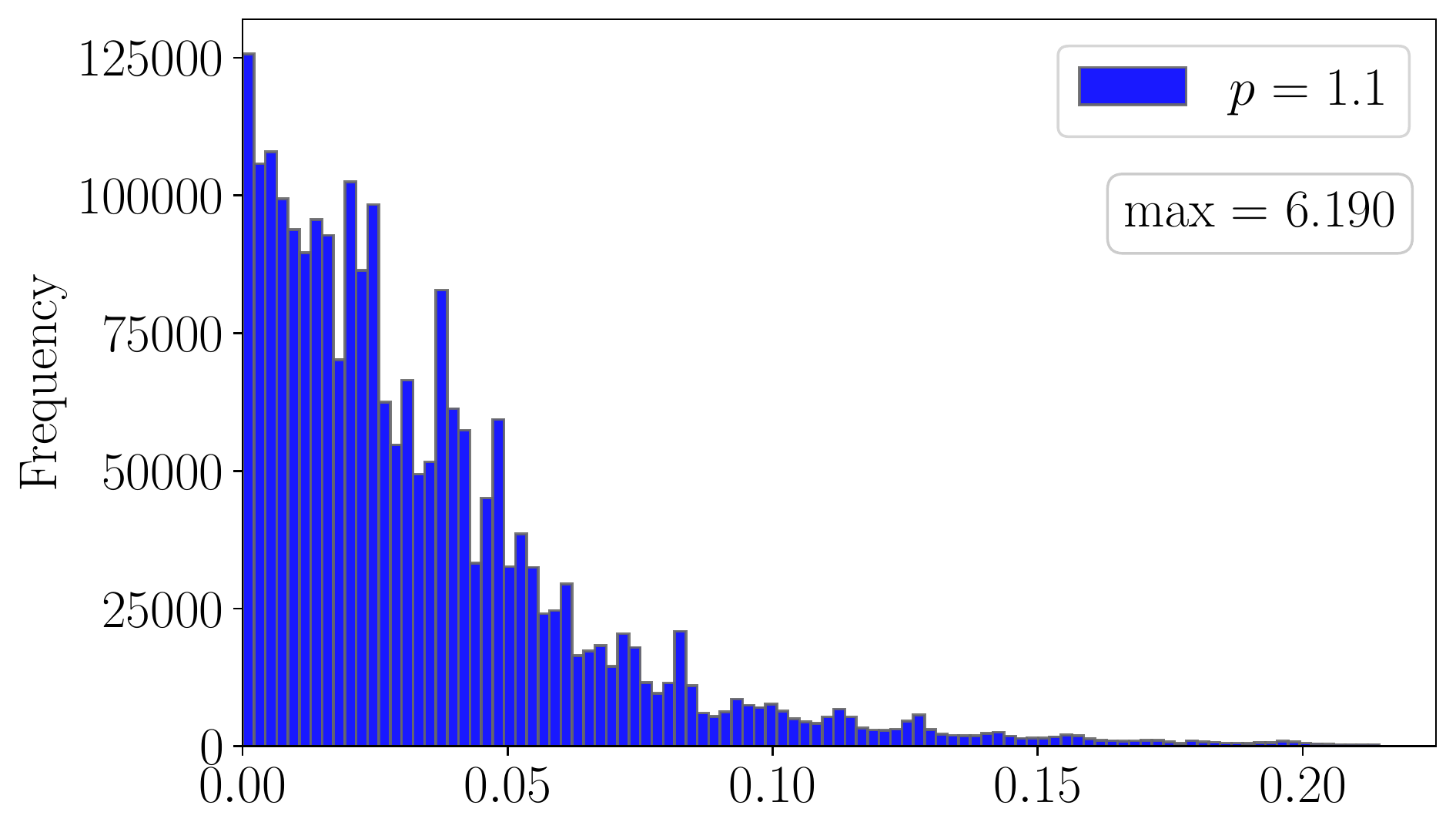}
    \end{subfigure}
    ~
    \begin{subfigure}[b]{0.45\textwidth}
        \includegraphics[width=\textwidth]{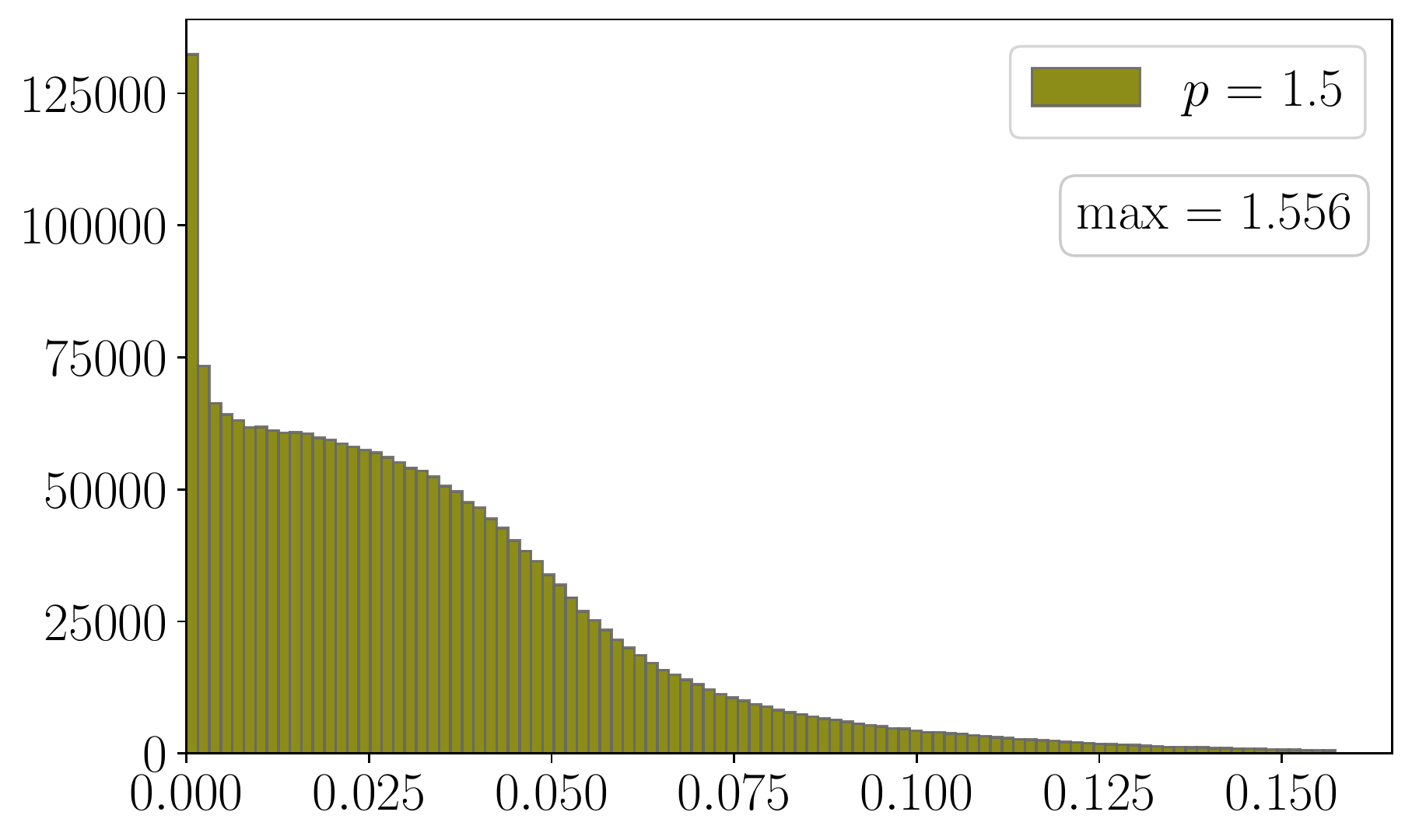}
    \end{subfigure}
    \begin{subfigure}[b]{0.45\textwidth}
        \includegraphics[width=\textwidth]{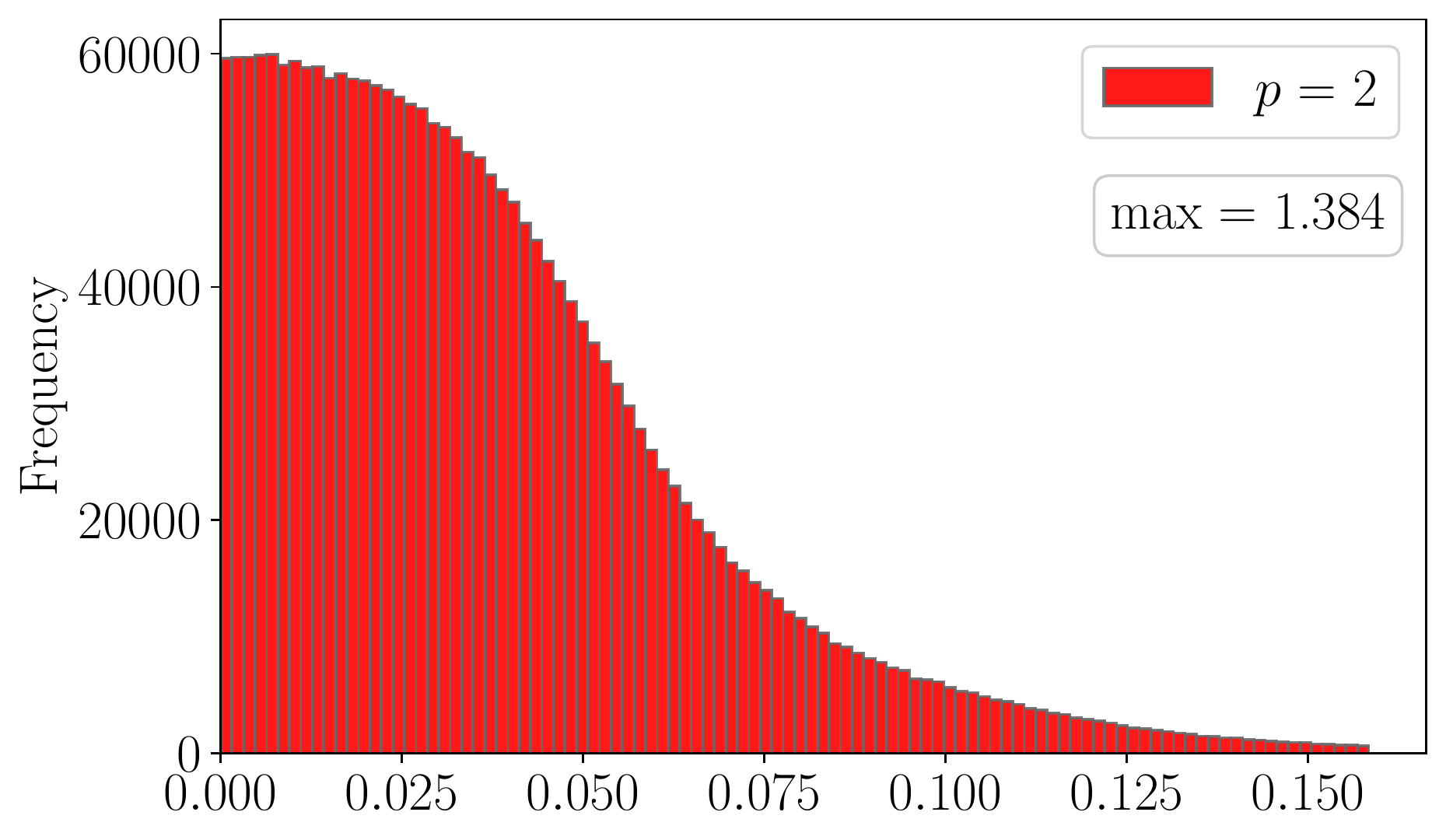}
    \end{subfigure}
    ~
    \begin{subfigure}[b]{0.45\textwidth}
        \includegraphics[width=\textwidth]{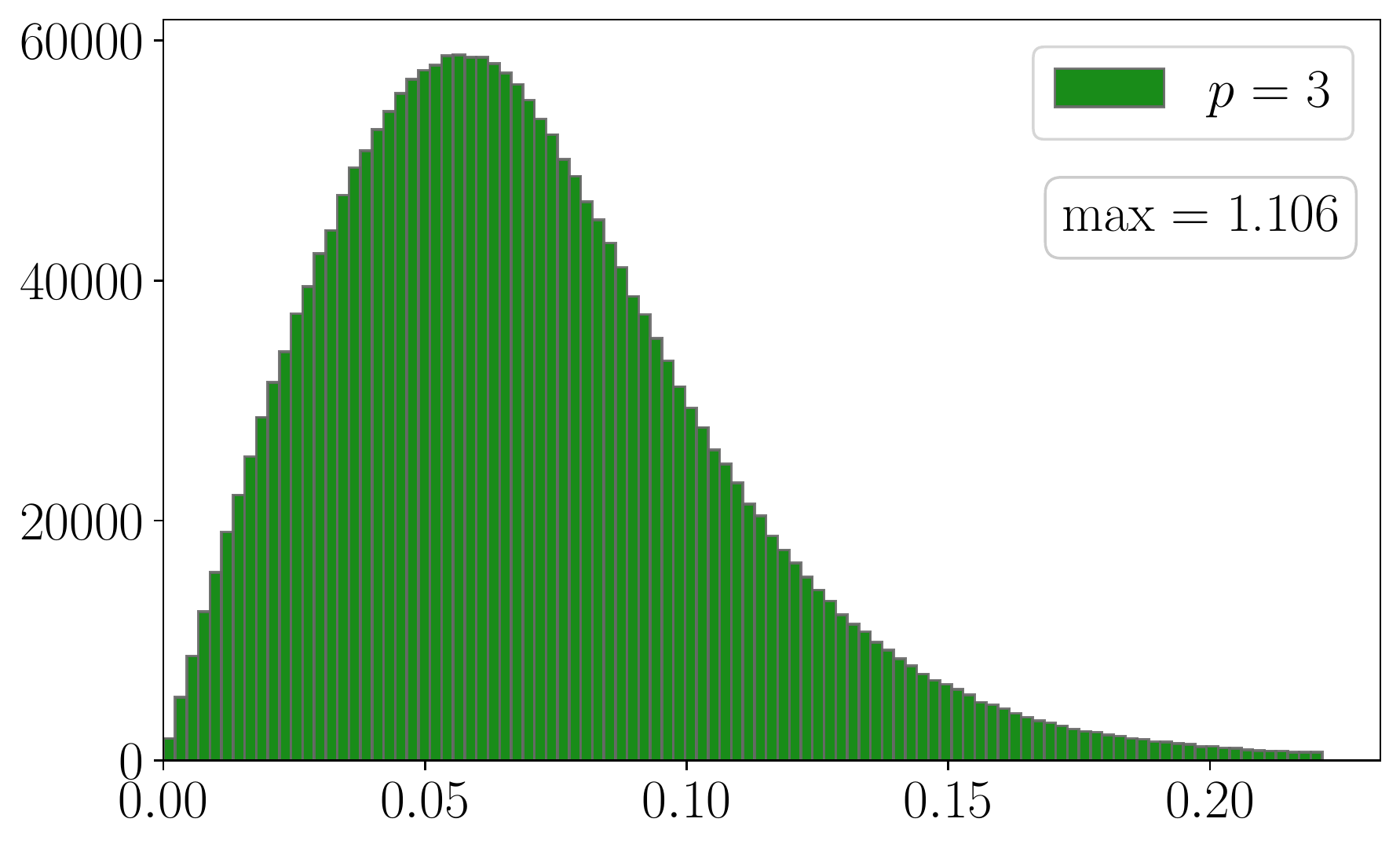}
    \end{subfigure}
    \begin{subfigure}[b]{0.45\textwidth}
        \includegraphics[width=\textwidth]{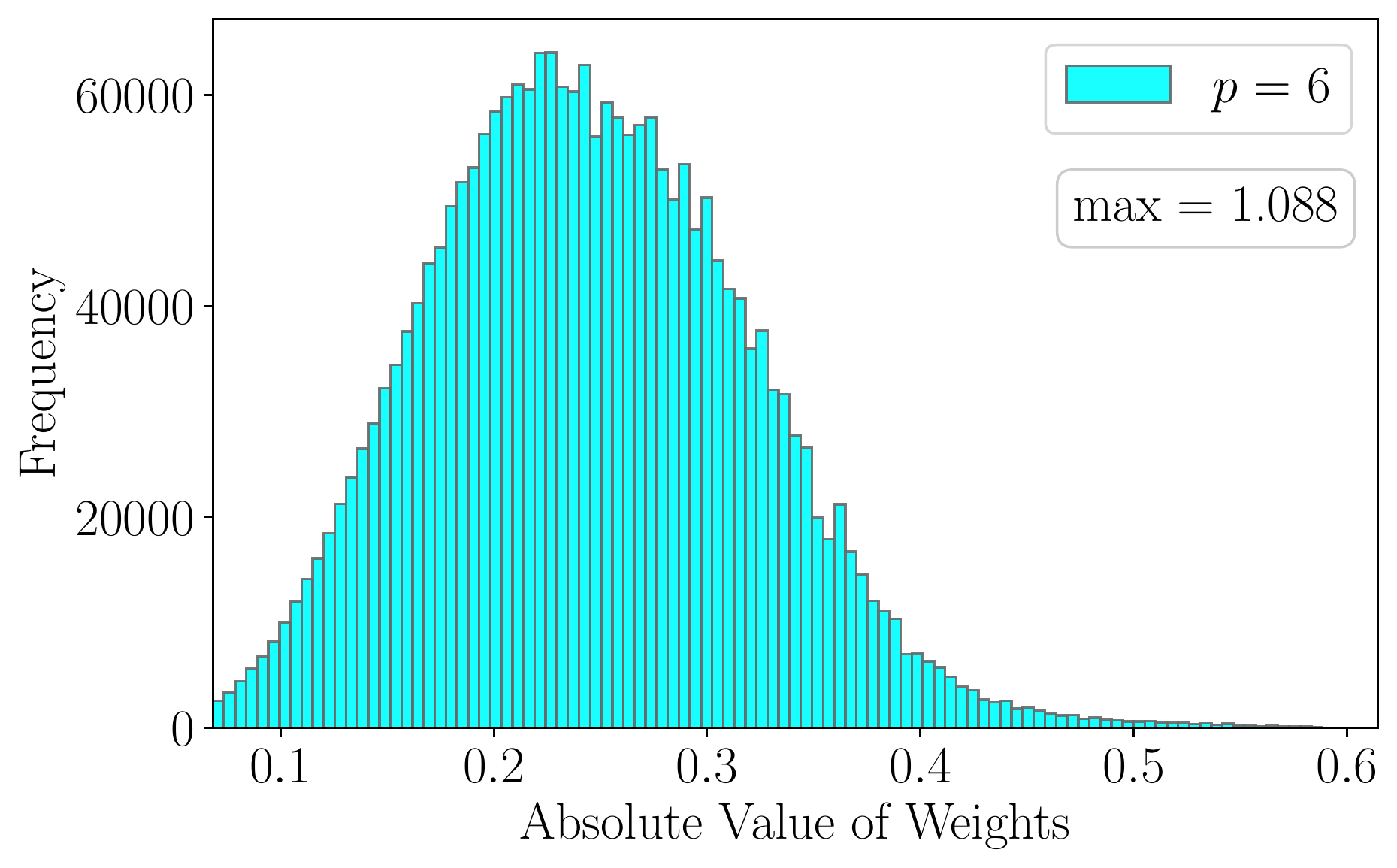}
    \end{subfigure}
    ~
    \begin{subfigure}[b]{0.45\textwidth}
        \includegraphics[width=\textwidth]{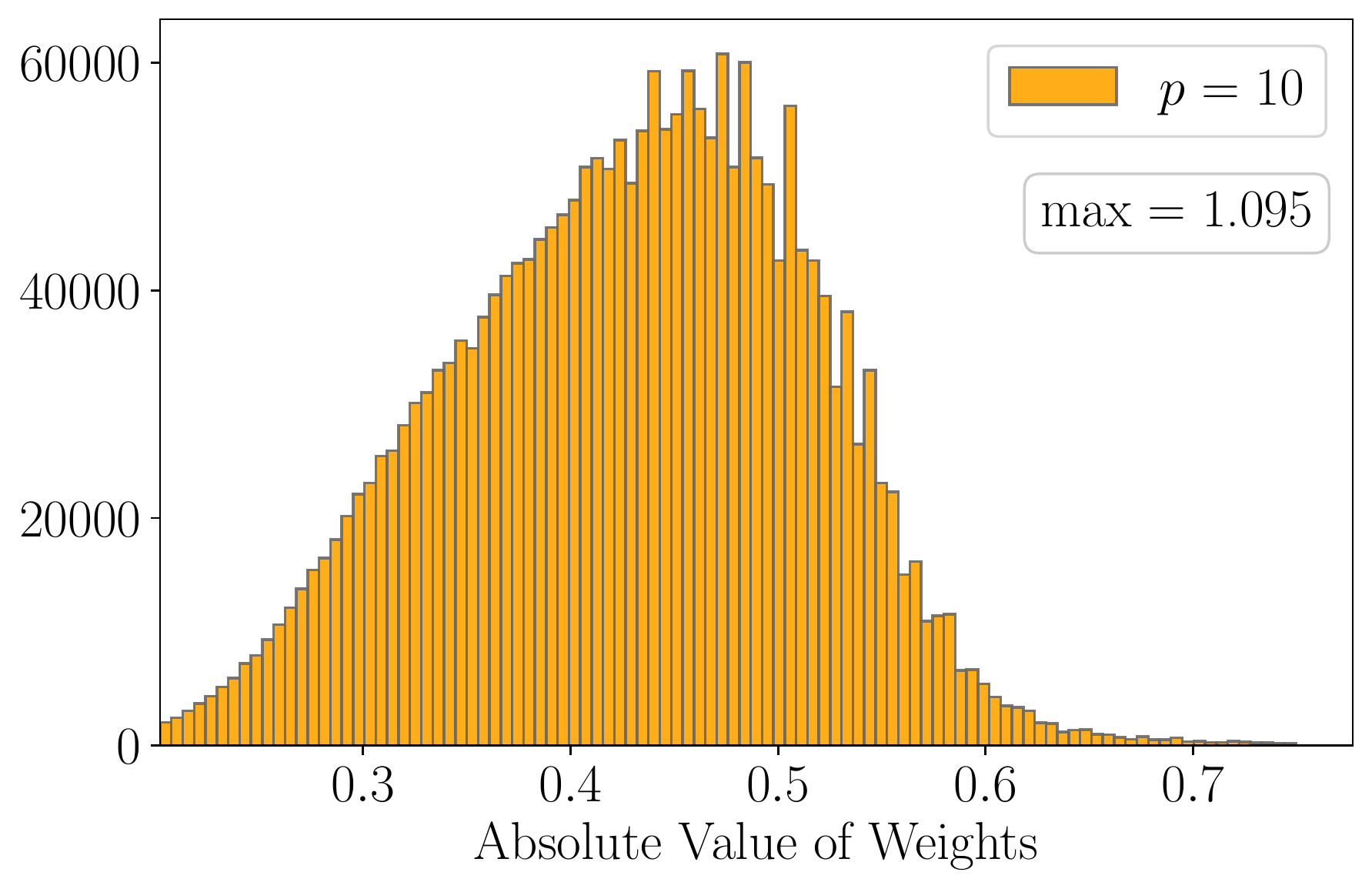}
    \end{subfigure}
    \caption{The histogram of weights in \textsc{RegNetX-200mf} models trained with \algname for the CIFAR-10 dataset. 
    For clarity, we cropped out the tails and each plot has 100 bins after cropping.
    }
    \label{fig:cifar10-hist-regnet-full}
\end{figure}

\begin{figure}
    \centering
    \begin{subfigure}[b]{0.45\textwidth}
        \centering
        \includegraphics[width=\textwidth]{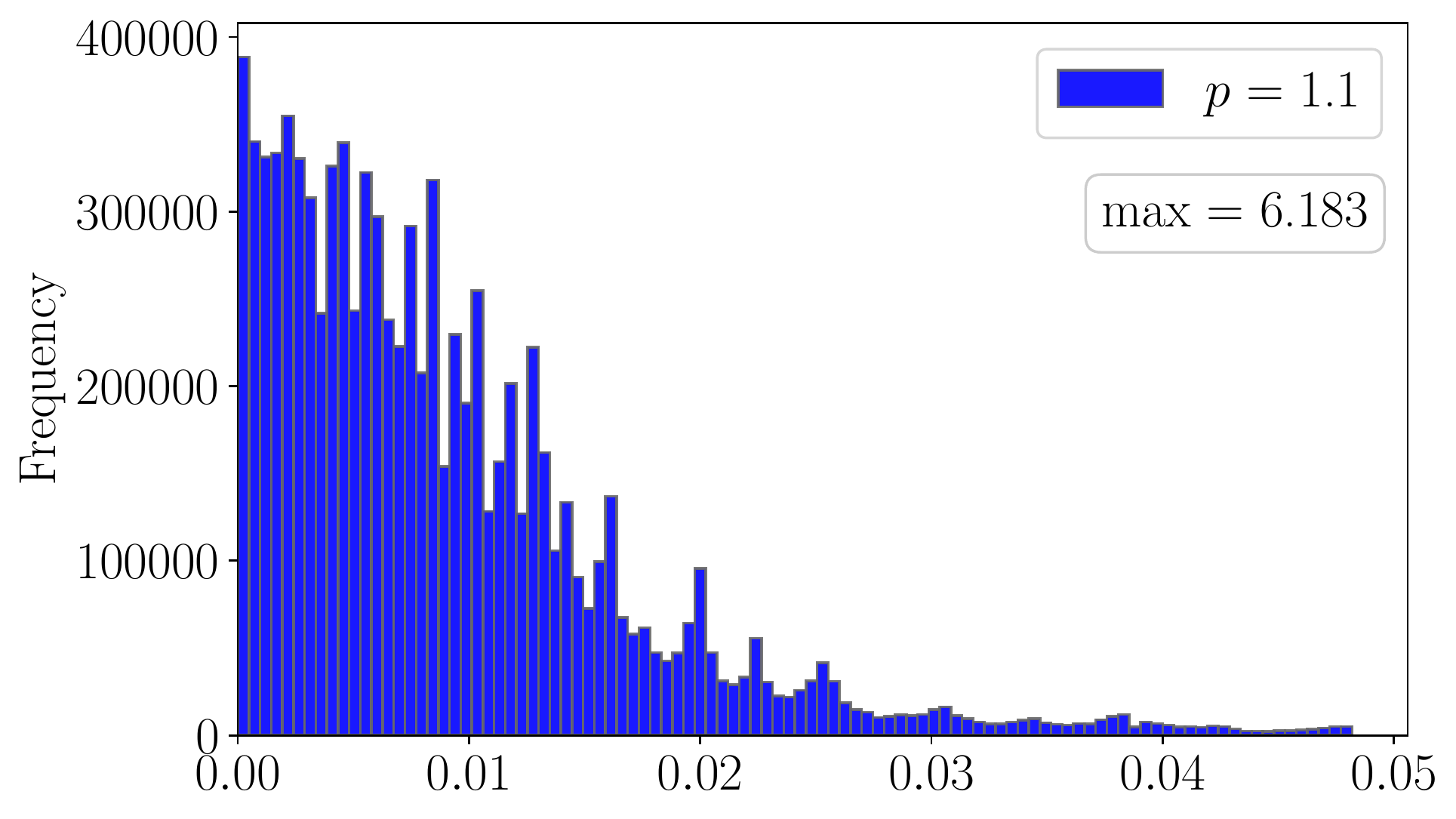}
    \end{subfigure}
    ~
    \begin{subfigure}[b]{0.45\textwidth}
        \includegraphics[width=\textwidth]{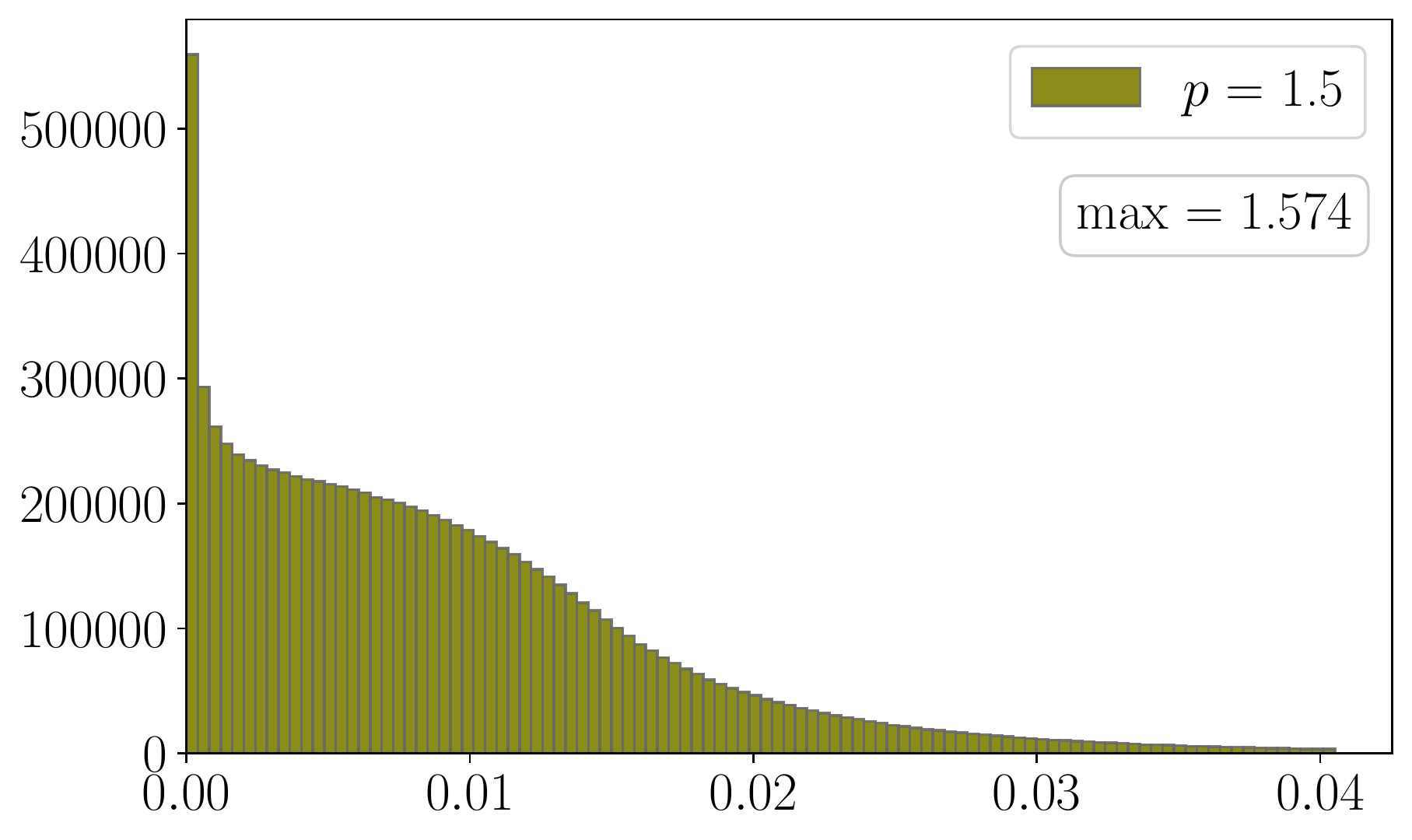}
    \end{subfigure}
    \begin{subfigure}[b]{0.45\textwidth}
        \includegraphics[width=\textwidth]{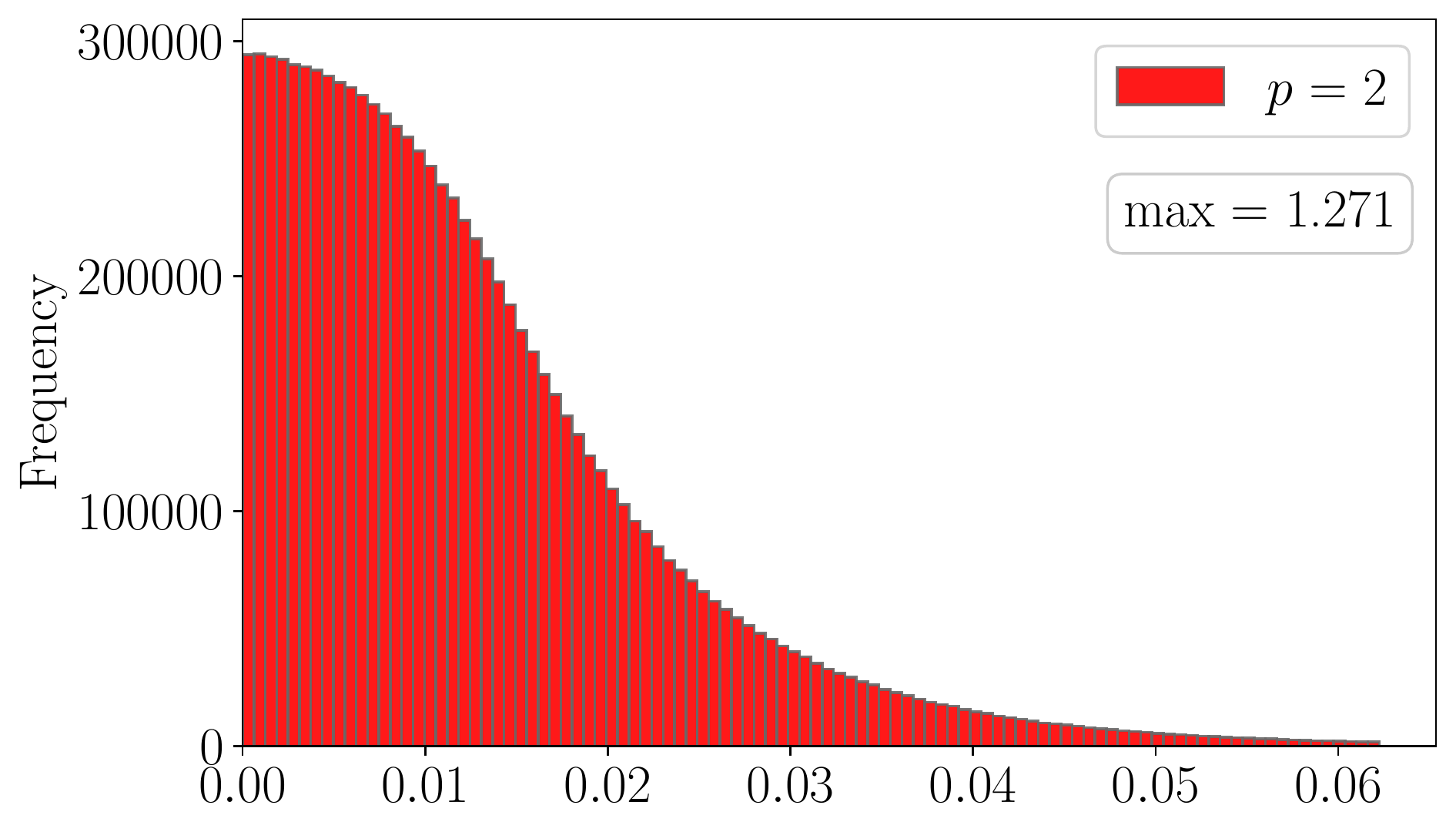}
    \end{subfigure}
    ~
    \begin{subfigure}[b]{0.45\textwidth}
        \includegraphics[width=\textwidth]{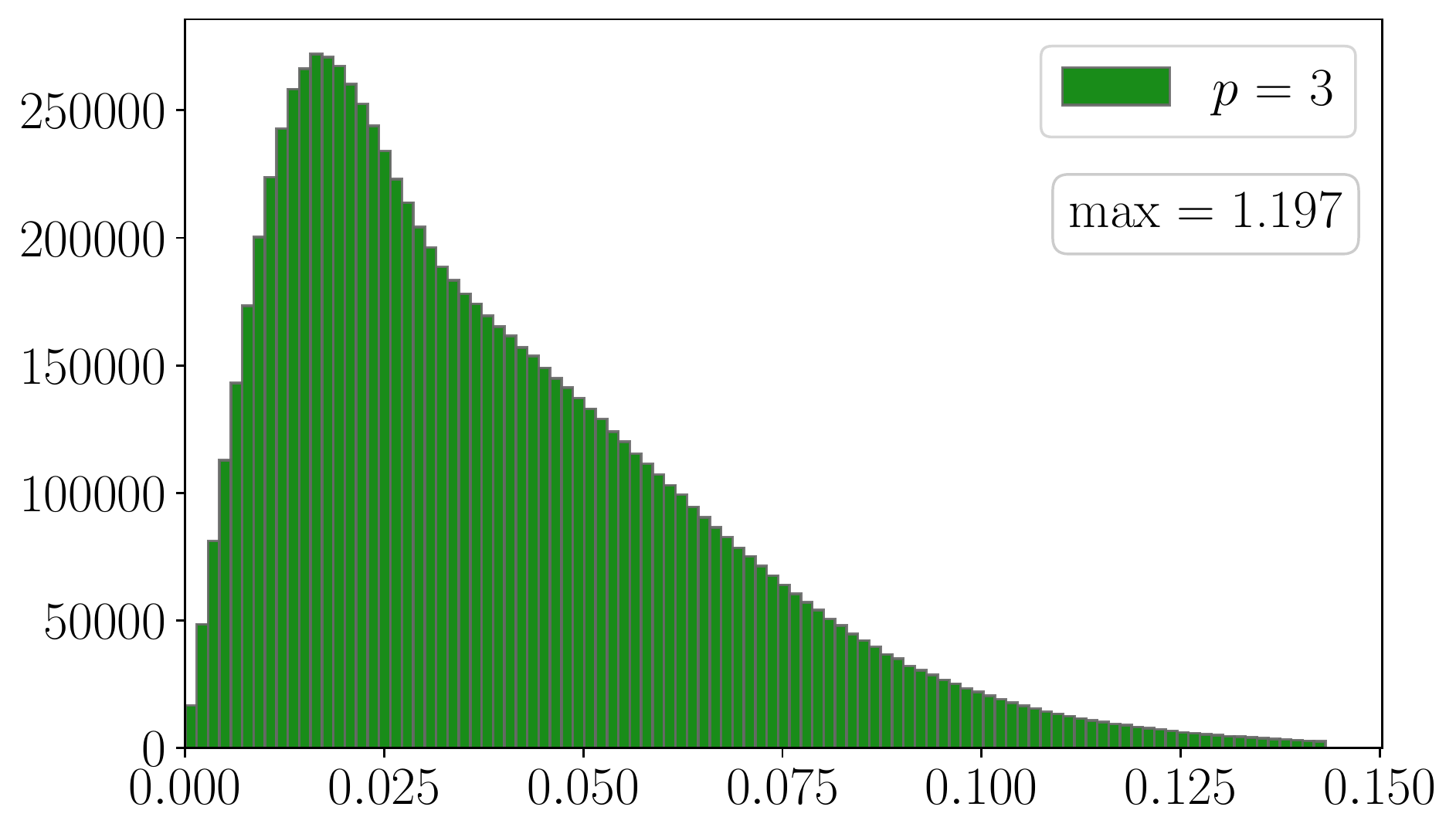}
    \end{subfigure}
    \begin{subfigure}[b]{0.45\textwidth}
        \includegraphics[width=\textwidth]{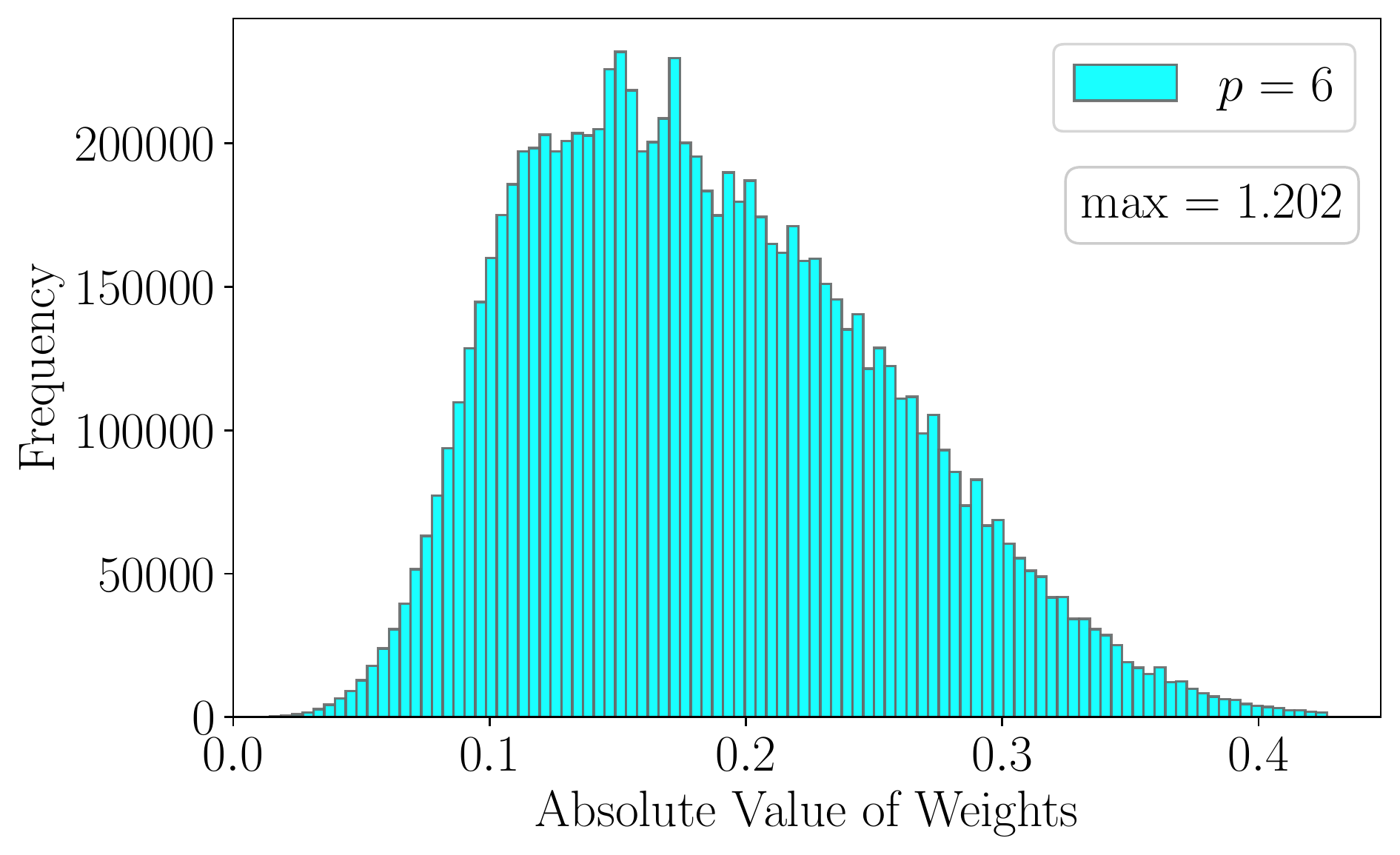}
    \end{subfigure}
    ~
    \begin{subfigure}[b]{0.45\textwidth}
        \includegraphics[width=\textwidth]{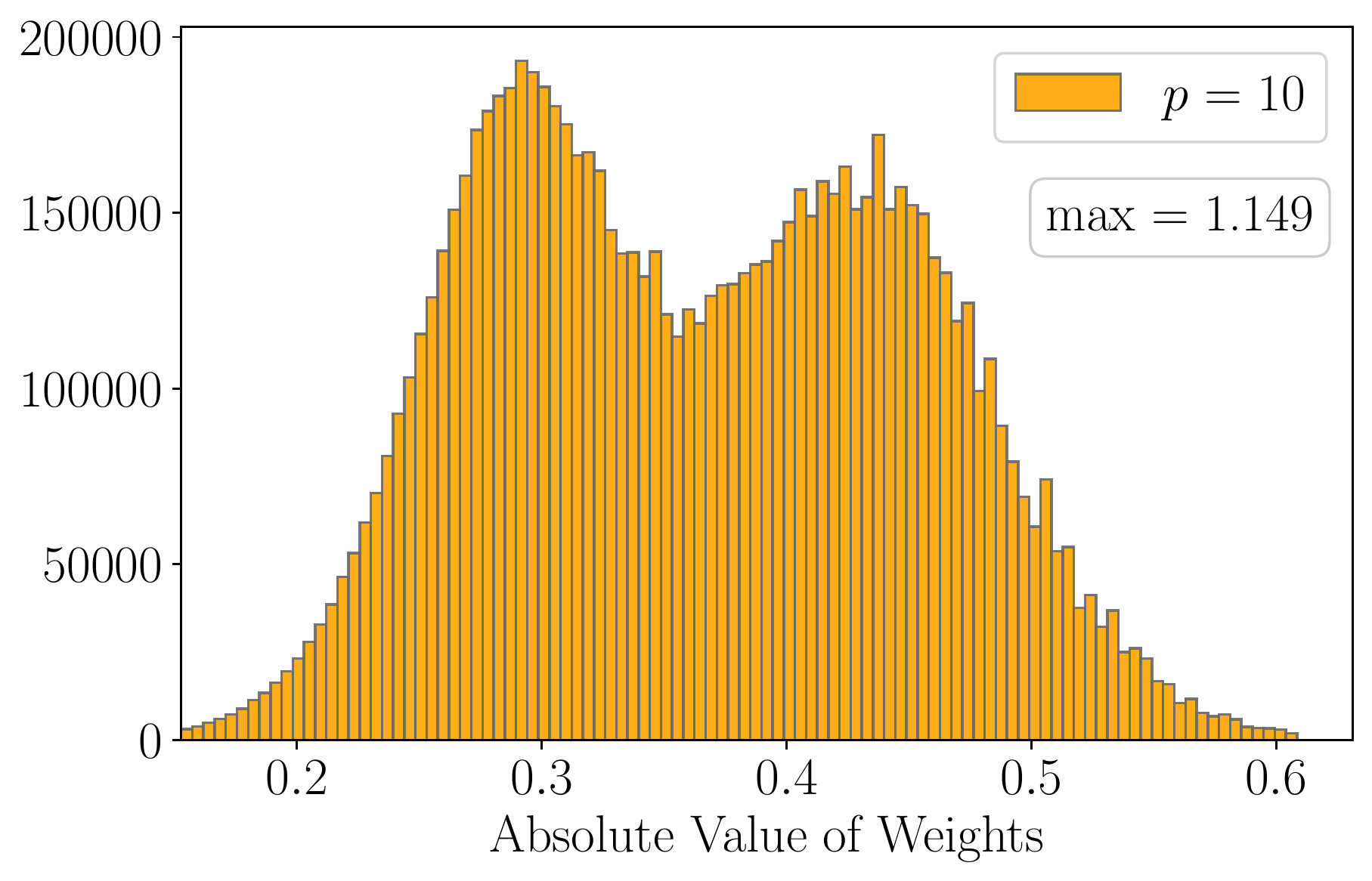}
    \end{subfigure}
    \caption{The histogram of weights in \textsc{VGG-11} models trained with \algname for the CIFAR-10 dataset. 
    For clarity, we cropped out the tails and each plot has 100 bins after cropping.
    }
    \label{fig:cifar10-hist-vgg-full}
\end{figure}

\clearpage

\subsection{CIFAR-10 experiments: generalization}
We present a more complete result for the CIFAR-10 generalization experiment in Section~\ref{sec:cifar} with additional values of $p$.

In the following table, we see that \algname with $p = 3$ continues have the highest generalization performance for all deep neural networks.

\label{sec:add-experiment-cifar-generalization}
\begin{table}[!h]
    \centering
    \setlength{\tabcolsep}{5.5pt}
    \begin{tabular}{l| c|c|c|c}
         \hline
         &  \hspace{1.25em} \textsc{VGG-11} \hspace{1.25em} & \hspace{0.75em} \textsc{ResNet-18} \hspace{0.75em} & \textsc{MobileNet-v2} & \textsc{RegNetX-200mf}  \\
         \hline \hline
         $p = 1.1$ & \pmval{88.19}{.17} & \pmval{92.63}{.12} & \pmval{91.16}{.09}& \pmval{91.21}{.18}  \\
         $p = 1.5$ & \pmval{88.45}{.29} & \pmval{92.73}{.11} & \pmval{90.81}{.19}& \pmval{90.91}{.12} \\
         $p = 2$ (SGD) & \pmval{90.15}{.16} & \bpmval{93.90}{.14} & \pmval{91.97}{.10}& \pmval{92.75}{.13} \\
         $p = 3$ & \bpmval{90.85}{.15} & \bpmval{94.01}{.13} & \bpmval{93.23}{.26}& \bpmval{94.07}{.12} \\
         $p = 6$ & \pmval{89.47}{.14} & \bpmval{93.87}{.13} & \pmval{92.84}{.15}& \pmval{93.03}{.17} \\
         $p = 10$ & \pmval{88.78}{.37} & \pmval{93.55}{.21} & \pmval{92.60}{.22}& \pmval{92.97}{.16} \\
         \hline
    \end{tabular}
    \caption{CIFAR-10 test accuracy (\%) of \algname on various deep neural networks. For each deep net and value of $p$, the average $\pm$ \textcolor{gray}{std. dev.} over 5 trials are reported. And the best performing value(s) of $p$ for each individual deep net is highlighted in \textbf{boldface}.}
    \label{tab:generalization-cifar10-full}
\end{table}

\subsection{ImageNet experiments}
\label{sec:add-experiment-imagenet}
To verify if our observations on the CIFAR-10 generalization performance hold up for other datasets, we also performed similar experiments for the much larger ImageNet dataset.
Due to computational constraints, we were only able to experiment with the \textsc{ResNet-18} and \textsc{MobileNet-v2} architectures and only for one trial.

It is worth noting that the neural networks we used cannot reach 100\% training accuracy on Imagenet.
The models we employed only achieved top-1 training accuracy in the mid-70's.
So, we are not in the so-called \textit{interpolation regime}, and there are many other factors that can significantly impact the generalization performance of the trained models.
In particular, we find that not having weight decay costs us around 3\% in validation accuracy in the $p = 2$ case and this explains why our reported numbers are lower than PyTorch's baseline for each corresponding architecture.
Despite this, we find that \algname with $p = 3$ has the best generalization performance on the ImageNet dataset, matching our observation from the CIFAR-10 dataset.

\begin{table}[!h]
    \centering
    \begin{tabular}{l| c | c}
         \hline
        & \textsc{ResNet-18} & \textsc{MobileNet-v2} \\
        \hline\hline
        $p=1.1$ & 64.08 & 63.41 \\
        $p=1.5$ & 65.14 & 65.75 \\
        $p=2$ (SGD) & 66.76 &  67.91 \\
        $p=3$ & \textbf{67.67} & \textbf{69.74} \\
        $p=6$ & 66.69 & 67.05 \\
        $p=10$ & 65.10 & 62.32 \\
         \hline
    \end{tabular}
    \caption{ImageNet top-1 validation accuracy (\%) of \algname on various deep neural networks. The best performing value(s) of $p$ for each individual deep network is highlighted in \textbf{boldface}.}
    \label{tab:imagenet}
\end{table}

\end{document}